\newcommand{\ie}{i.e.\xspace}
\newcommand{\eg}{e.g.\xspace}
\newcommand{\oop}{\textsc{oop}}
\newcommand{\Dist}[1]{\mathit{Dist}(#1)}
\newcommand{\Strat}{\frak{S}}
\newcommand{\OStrat}{\frak{O}}
\newcommand{\Act}{\mathit{Act}}
\newcommand{\rew}{\mathit{rew}}
\newcommand{\Paths}[1]{\mathit{Paths}(#1)}
\newcommand{\OPaths}[1]{\mathit{OPaths}(#1)}
\newcommand{\Rew}[2]{\mathit{rew}_{#1}(#2)}
\newcommand{\Prob}[3]{P_{#1}^{#2}(#3)}
\newcommand{\ExpRew}[2]{\textsf{ExpRew}^{#2}(#1)}
\newcommand{\MinExpRew}[1]{\textsf{MinExpRew}(#1)}
\newcommand{\StratP}{\frak{S}_{p}}
\newcommand{\MinExpRewP}[1]{\textsf{MinExpRew}_{p}(#1)}
\newcommand{\StratPD}{\frak{S}_{pd}}
\newcommand{\MinExpRewPD}[1]{\textsf{MinExpRew}_{pd}(#1)}
\newcommand{\mdp}{M}
\newcommand{\pomdp}{\mathcal{M}}
\newcommand{\pmc}{\mathcal{D}}
\newcommand{\obs}{\mathit{obs}}
\newcommand{\Poly}[1]{\mathbb{Q}[#1]}
\newcommand{\instance}[2]{#1[#2]}
\newcommand{\goalColor}{green!60!black}
\newcommand{\obsGoal}{{\color{\goalColor}\checkmark}}
\newcommand{\setObs}[1]{\langle{#1}\rangle}
\newcommand{\OO}{O_\obsGoal}
\newcommand{\OOp}{O'_\obsGoal}
\newcommand{\sink}{s_{\infty}}
\newcommand{\stau}{s_{\tau}}
\newcommand{\mdpline}{\mdp_{\text{line}}}
\begin{document}
\title{What should be observed for optimal reward in POMDPs? \thanks{This work has been supported by Innovation Fund Denmark and the Digital Research Centre Denmark, through the bridge project ``SIOT – Secure Internet of Things – Risk analysis in design and operation''.}}
%
%

\author{
Alyzia-Maria Konsta\orcidID{0000-0002-0206-5217} \and 
Alberto Lluch Lafuente\orcidID{0000-0001-7405-0818} \and 
Christoph Matheja\orcidID{0000-0001-9151-0441}
}
\authorrunning{}
%
\institute{
  Technical University of Denmark, Kongens Lyngby\\
 \email{\{akon, albl, chmat\}@dtu.dk} 
}

\maketitle              
\begin{abstract}
	Partially observable Markov Decision Processes (POMDPs) are a standard model for
	agents making decisions in uncertain environments.
	Most work on POMDPs focuses on synthesizing strategies based on the available capabilities.
	However, system designers can often control an agent's observation capabilities, \eg by placing or selecting sensors.  
	This raises the question of how one should select an agent's sensors cost-effectively such that it achieves the desired goals. 
	In this paper, we study the novel \emph{optimal observability problem} (\oop): Given a POMDP $\pomdp$, how should one change $\pomdp$'s observation capabilities within a fixed budget such that its (minimal) expected reward remains below a given threshold?
	We show that the problem is undecidable in general and decidable when considering positional strategies only. 
	We present two algorithms for a decidable fragment of the \oop: one based on optimal strategies of $\pomdp$'s underlying Markov decision process and one based on parameter synthesis with SMT.
	We report promising results for variants of typical examples from the POMDP literature.
\keywords{POMDPs \and Partial Observability \and Probabilistic Model Checking}
\end{abstract}
\section{Introduction}\label{intro}

Partially observable Markov Decision Processes (POMDPs)~\cite{aastrom1965optimal,KAELBLING199899,russel2020artificial} are the reference model for agents making decisions in uncertain environments.
They appear naturally in various application domains, including software verification~\cite{DBLP:conf/cav/CernyCHRS11}, planning~\cite{KAELBLING199899,DBLP:conf/aaai/ChadesCMNSB12,kochenderfer2015decision}, computer security~\cite{DBLP:journals/tifs/MiehlingRT18}, and cyber-physical systems~\cite{DBLP:conf/staf/JdeedSBSPCSBPE19}. 

Most work on POMDPs focuses on synthesizing strategies for making decisions based on available observations. A less explored, but relevant, question for system designers is how to place or select sensors cost-effectively such that they suffice to achieve the desired goals. 

%
%

To illustrate said question, consider a classical grid(world) POMDP~\cite{LITTMAN1995362}, (cf. \Cref{grid1}), where an agent is placed randomly on one of the states $s_0$ - $s_7$. The agent's goal is to reach the \emph{goal state} $s_8$ (indicated with green color). The agent is free to move through the grid using the actions $\{\mathit{left}, \mathit{right}, \mathit{up}, \mathit{down}\}$. For simplicity, self-loops are omitted and the actions are only shown for state $s_4$. We assume that every time the agent picks an action, (s)he takes one step. With an unlimited budget, one can achieve full observability by attaching one sensor to each state. In this case, the minimum expected number of steps the agent should take to reach the goal is $2.25$. 

\begin{wrapfigure}{r}{0.28\textwidth}
\vspace{-0.7cm}
\begin{tikzpicture}[->,>=stealth',shorten >=1pt,auto,node distance=1.4cm,
                    semithick]
  \tikzstyle{every state}=[fill=white,draw=black,text=black]

  \node[state,inner sep=2pt,minimum size=1pt]         (S0)                   {$s_0$};
  \node[state,inner sep=2pt,minimum size=1pt]         (S1) [below of=S0]     {$s_3$};
  \node[state,inner sep=2pt,minimum size=1pt]         (S2) [below of=S1]     {$s_6$};
  \node[state,inner sep=2pt,minimum size=1pt]         (S3) [right of=S0]     {$s_1$};
  \node[state,inner sep=2pt,minimum size=1pt]         (S4) [right of=S1]     {$s_4$};
  \node[state,inner sep=2pt,minimum size=1pt]         (S5) [right of=S2]     {$s_7$};
  \node[state,inner sep=2pt,minimum size=1pt]         (S6) [right of=S3]     {$s_2$};
  \node[state,inner sep=2pt,minimum size=1pt]         (S7) [right of=S4]     {$s_5$};
  \node[state,inner sep=2pt,minimum size=1pt][fill=green!20]         (S8) [right of=S5]     {$s_8$};

  \path (S0) edge [bend left=10]             node {} (S3)
             edge  [bend right=10]             node [left]{} (S1)
        (S1) edge   [bend left=10]            node {} (S4)
             edge  [bend right=10]             node [above left] {} (S2)
             edge  [bend right=10]             node [right] {} (S0)
        (S2) edge     [bend left=10]          node {} (S5)
             edge    [bend right=10]           node [below right]{} (S1)
        (S3) edge    [bend right=10]           node [above left]{} (S4)
             edge   [bend left=10]            node {} (S6)
             edge     [bend left=10]          node {} (S0)
        (S4) edge    [bend left=10]           node {$\mathit{right}$} (S7)
             edge   [bend left=10]            node {$\mathit{down}$} (S5)
             edge   [bend left=10]            node {$\mathit{left}$} (S1)
             edge    [bend right=10]           node [ right] {$\mathit{up}$} (S3)
        (S5) edge   [bend left=10]            node {} (S4)
        edge   [bend left=10]            node {} (S8)
             edge     [bend left=10]          node {} (S2)
        (S6) edge    [bend left=10]           node {} (S7)
             edge    [bend left=10]           node {} (S3)
        (S7) edge     [bend left=10]          node {} (S4)
             edge     [bend left=10]          node {} (S8)
             edge    [bend left=10]           node {} (S6)
        (S8) edge    [bend left=10]           node  [above left]{} (S7)
             edge    [bend left=10]           node {} (S5);
\end{tikzpicture}
\caption{3x3 grid} 
\vspace{-0.7cm}
    \label{grid1}
\end{wrapfigure}

However, the number of available sensors might be limited.
Can we achieve the same optimal reward with fewer sensors? What is the minimal number of sensors needed? Where should they be located? It turns out that, in this example, $2$ sensors (one in $s_2$ and one in $s_5$) suffice to achieve the minimal expected number of steps, i.e., $2.25$ . Intuitively, the agent just needs a simple positional (aka memory-less), deterministic strategy: if no sensor is present, go right; otherwise, go down. A ``symmetric'' solution would be to place the sensors in $s_6$ and $s_7$. Any other choice for placing 2 sensors yields a higher expected number of steps. For example, placing the sensors in $s_1$ and $s_2$ yields a minimal expected number of steps of $2.75$. 
The problem easily becomes more complex. Indeed, we show that this class of problems (our main focus of study) is undecidable.  

\smallskip
\noindent\textbf{The problem.}
We introduce the \emph{optimal observability problem} which is concerned with turning an MDP $\mdp$ into a POMDP $\pomdp$ such that $\pomdp$'s expected reward remains below a given threshold and, at the same time, the number of available observations (i.e. classes of observationally-equivalent states) is limited by a given budget. 
We show that the problem is undecidable in the general case, by reduction to the (undecidable) policy-existence problem for POMDPs~\cite{madani1999undecidability}.
Consequently, we focus on decidable variants of the problem, where POMDPs can use positional strategies only, for which we provide complexity results, decision procedures, and experiments. 

    
    

\smallskip
\noindent\textbf{Contributions.} 
%
%
Our main contributions can be summarized as follows:
\begin{enumerate}
    \item We introduce the novel \emph{optimal observability problem} (\textbf{OOP}) and show that it is undecidable in general (\Cref{optimalObservability}) by reduction to the (undecidable) policy-existence problem for POMDPs~\cite{madani1999undecidability}. Consequently, we study four decidable \textbf{OOP} variants by restricting the considered strategies and observation capabilities. 
    \item We show in \Cref{sec:oop-np} that, when restricted to \emph{positional and deterministic} strategies, the \textbf{OOP} becomes \textsc{np}-complete.  Moreover, we present an algorithm that uses optimal MDP strategies to determine the minimal number of observations required to solve the \textbf{OOP} for the optimal threshold.
    \item We show in \Cref{sec:randomised} that the \textbf{OOP} becomes decidable in \textsc{pspace} if restricted to positional, but \emph{randomized}, strategies. The proofs are by a reduction to the feasibility problem for a typed extension of parametric Markov chains~\cite{junges2020parameter,jansen2022parameter}.
    \item We provide in~\Cref{eval} an experimental evaluation of approaches for the decidable \textbf{OOP} variants on common POMDP benchmarks.
\end{enumerate}

\smallskip
\noindent\textbf{Related Work.} To the best of our knowledge, this is the first work considering the \emph{optimal observability problem} and its variants.
%
%
The closest problem we have found in the literature is the sensor synthesis problem for POMDPs with reachability objectives presented in~\cite{chatterjee2018sensor}.
Said problem departs from a POMDP with a partially defined observation function and consists of finding a completion of the function by adding additional observations subject to a budget on the number of observations (as in our case) and the size of the (memory-bounded) strategies.
The main difference w.r.t. our problem is in the class of POMDP objectives considered. The problem  in~\cite{chatterjee2018sensor} focuses on \emph{qualitative} almost-sure reachability properties (which is decidable for POMDPs~\cite{DBLP:conf/mfcs/ChatterjeeDH10}), while we focus on \emph{quantitative} optimal expected reward properties, which are generally undecidable for POMDPs~\cite{madani1999undecidability}. This leads to different complexity results for the decision problem studied (NP-complete for~\cite{chatterjee2018sensor}, undecidable in our general case) and their solution methods (SAT-based in~\cite{chatterjee2018sensor}, SMT-based for the decidable variants we study).

%
Optimal placement or selection of sensors has been studied before (\eg \cite{1421762,DBLP:journals/jmlr/KrauseSG08,spaan2009decision}). However, the only work we are aware of in this area that uses POMDPs is~\cite{spaan2009decision}. The problem studied in~\cite{spaan2009decision} is concerned with having a POMDP where the selection of $k$ out of $n$ sensors is part of the set of states of the POMDP together with the location of some entities in a 2D environment. At each state, the agent controlling the sensors has one of the ${n\choose k}$ choices to select the next $k$ active sensors. The goal is to synthesize and find strategies that dynamically select those sensors that optimize some objective function (\eg increasing certainty of specific state properties).
 The observation function in the POMDPs used in~\cite{spaan2009decision} is fixed whereas we aim at synthesizing said function.
 The same holds for security applications of POMDPs, such as~\cite{Sheyner}.   

We discuss further related work, particularly about decidability results for POMDPs,  parametric Markov models (cf.~\cite{jansen2022parameter}), and related tools in the relevant subsections.

\section{Preliminaries}\label{preliminaries}

We briefly introduce the formal models underlying our problem statements and their solution: Markov decision processes (MDPs) in \Cref{MDPs} and partially observable MDPs (POMDPs) in \Cref{POMDPs}.
A comprehensive treatment is found in~\cite{baier2008principles,russel2020artificial}.
%

\smallskip
\noindent\textit{Notation.}
A \emph{probability distribution} over a countable set $X$ is a function $\mu: X \rightarrow [0,1] \subseteq \mathbb{R}$ such that the (real-valued) probabilities assigned to all elements of $X$ sum up to one, i.e. $\sum_{x \in X} . \mu(x) =1$. For example, the \emph{Dirac distribution} $\delta_x$ assigns probability $1$ to an a-priori selected element $x \in X$ and probability $0$ to all other elements.
We denote by $\Dist{X}$ the set of all probability distributions over $X$.

\subsection{Markov Decision Processes (MDPs)}\label{MDPs}

We first recap Markov decision processes with rewards and dedicated goal states.

\begin{definition}[MDPs]
 A \emph{Markov decision process} is a tuple $M = (S, I, G, \Act, P, \rew)$  where $S$ is a finite set of \emph{states}, $I \subseteq S$ is a set of (uniformly distributed) \emph{initial states}, $G \subseteq S$ is a set of \emph{goal states}, $\Act$ is a finite set of \emph{actions}, $P\colon S \times \Act \rightarrow \Dist{S}$ is a \emph{transition probability function}, and $\rew\colon S \rightarrow \mathbb{R}_{\geq 0}$ is a \emph{reward function}. 
\end{definition}
%
\begin{example}\label{ex:1}
As a running example, consider an agent that is placed at one of four random locations on a line.
The agent needs to reach a {\color{\goalColor} goal} by moving to the $\ell$(eft) or $r$(ight). Whenever (s)he decides to move, (s)he successfully does so with some fixed probability $p \in [0,1]$; with probability $1-p$, (s)he stays at the current location due to a failure.
  \Cref{MDPprobs} depicts\footnote{The red and blue colors as well as the @s-labels will become relevant later.} an MDP $\mdpline$ modeling the above scenario using five states $s_0$-$s_4$. Here, ${\color{\goalColor}s_2}$ is the single goal state. All other states are initial. An edge $s_i \xrightarrow{\alpha: q} s_j$ indicates that $P(s_i, \alpha)(s_j) = q$. 
  The reward (omitted in \Cref{MDPprobs}) is $0$ for ${\color{\goalColor}s_2}$, and $1$ for all other states.
\label{plainMDP}
\end{example}
%
%
%
\begin{figure}[t]{}
\centering
\begin{tikzpicture}[->,>=stealth',shorten >=1pt,auto,node distance=1.8cm, semithick]

  \tikzstyle{every state}=[fill=white,draw=black,text=black]

  \node[state,inner sep=2pt,minimum size=1pt, fill=blue!20]         (S0)                   {$s_0$};
  \node[state,inner sep=2pt,minimum size=1pt, fill=red!20]         (S1) [right of=S0]     {$s_1$};
  \node[state,inner sep=2pt,minimum size=1pt][fill=green!20]          (S2) [right of=S1]     {$s_2$};
  \node[state,inner sep=2pt,minimum size=1pt, fill=red!20]         (S3) [right of=S2]     {$s_3$};
  \node[state,inner sep=2pt,minimum size=1pt, fill=blue!20]         (S4) [right of=S3]     {$s_4$};

  \node[below=0.15cm of S0] (s1) {$@ s_0$};
  \node[below=0.15cm of S1] (s2) {$@ s_1$};
  \node[below=0.15cm of S3] (s3) {$@ s_3$};
  \node[below=0.15cm of S4] (s4) {$@ s_4$};

  \path (S0) edge   [bend left=20]           node [above]{$r$ : $p$} (S1)
             edge   [loop above]             node {$r$ : $1-p$} (S0)
             edge   [loop left]             node [left] {$\ell$ : 1\, } (S0)
        (S1) edge                            node {$r$ : $p$} (S2)
             edge   [bend left=20]           node {$\ell$ : $p$} (S0)
             edge   [loop above]             node {$\ell$,$r$ : $1-p$} (S1)
        (S2) edge   [loop above]             node [above]{$\ell$,$r$ : 1} (S2)
        (S3) edge   [bend left=20]           node {$r$ : $p$} (S4)
             edge                            node [above]{$\ell$ : $p$} (S2)
             edge   [loop above]             node {$\ell$,$r$ : $1-p$} (S3)
        (S4) edge   [bend left=20]           node {$\ell$ : $p$} (S3)
             edge   [loop above]             node {$\ell$ : $1-p$} (S4)
             edge   [loop right]             node [right]{\, $r$ : 1} (S4);
\end{tikzpicture}
\caption{MDP $\mdpline$ for some fixed constant $p \in [0,1]$; the initial states are $s_0,s_1,s_3,s_4$.}
\label{MDPprobs}
\end{figure}
We often write $S_M$, $P_M$, and so on, to refer to the components of an MDP $M$. An MDP $M$ is a \emph{Markov chain} if there are no choices between actions, \ie $|\Act_M| = 1$.
We omit the set of actions when considering Markov chains.
If there is more than one action, we use \emph{strategies} to resolve all non-determinism.
%
%
%
\begin{definition}[Strategy]
  A \emph{strategy} for MDP $M$ is a function 
  $\sigma\colon S_M^{+} \to Dist(\Act_M)$
  that maps non-empty finite sequences of states to distributions over actions.
  We denote by $\frak{S}(M)$ the set of all strategies for MDP $M$.
\end{definition}
\noindent\textit{Expected rewards.}
We will formalize the problems studied in this paper in terms of the (minimal) expected reward $\MinExpRew{M}$ accumulated by an MDP $M$ over all paths that start in one of its initial states and end in one of its goal states.
Towards a formal definition, we first define paths.
A \emph{path fragment} of an MDP $M$ is a finite sequence $
\pi = s_0\, \alpha_0\, s_1\, \alpha_1\, s_2\, \ldots\, \alpha_{n-1}\, s_n 
$ 
for some natural number $n$ such that every transition from one state to the next one can be taken for the given action with non-zero probability, i.e. $P_M(s_i,\alpha_i)(s_{i+1}) > 0$ holds for all $i \in \{0, \ldots, n\}$.
We denote by $\mathit{first}(\pi) = s_0$ (resp. $\mathit{last}(\pi) = s_n$) the first (resp. last) state in $\pi$.
Moreover, we call $\pi$ a \emph{path} if $s_0$ is an initial state, \ie $s_0 \in I_M$, and $s_n \in G_\mdp$ is the first encountered goal state, \ie $s_1, \ldots, s_{n-1} \in S_\mdp \setminus G_M$ and $s_n \in G_\mdp$.
We denote by $\Paths{\mdp}$ the set of all paths of $M$.

The \emph{cumulative reward} of a path fragment $\pi = s_0\, \alpha_0\, \ldots\, \alpha_{n-1}\, s_n$ of $\mdp$ is the sum of all rewards along the path, that is,
\[
  \Rew{M}{\pi} ~=~ \sum_{i=0}^{n} rew_M(s_i).
\]
Furthermore, for a given strategy $\sigma$, the \emph{probability} of the above path fragment $\pi$ is\footnote{If $\pi$ is a path, notice that our definition does \emph{not} include the probability of starting in $\mathit{first}(\pi)$.}
\[
  \Prob{M}{\sigma}{\pi} ~=~ \prod_{i=0}^{n-1} P_M(s_i, \alpha_i)( s_{i+1}) \cdot \sigma(s_0 \ldots s_i)(\alpha_i).
\]
Put together, the expected reward of $M$ for strategy $\sigma$ is the sum of rewards of all paths weighted by their probabilities and divided by the number of initial states (since we assume a uniform initial distribution) -- at least as long as the goal states are reachable from the initial states with probability one; otherwise, the expected reward is infinite (cf.~\cite{baier2008principles}). Formally, $\ExpRew{M}{\sigma} = \infty$ if $\frac{1}{|I_M|} \cdot \sum_{\pi \in \Paths{M}} \Prob{M}{\sigma}{\pi} < 1$. Otherwise, 
\[
  \ExpRew{M}{\sigma} ~=~ \frac{1}{|I_M|} \cdot \sum_{\pi \in \Paths{M}} \Prob{M}{\sigma}{\pi} \cdot \Rew{M}{\pi}. 
\]
The \emph{minimal expected reward} of $M$ (over an infinite horizon) is then the infimum among the expected rewards for all possible strategies, that is,
\[
  \MinExpRew{M} ~=~ \inf_{\sigma \in \Strat(M)} \ExpRew{M}{\sigma}.
\]
The (maximal) expected reward is defined analogously by taking the supremum instead of the infimum. Throughout this paper, we focus on minimal expected rewards.

\smallskip
\noindent\textit{Optimal, positional, and deterministic strategies.}
In general, strategies may choose actions randomly and based on the history of previously encountered states. We will frequently consider three subsets of strategies.
First, a strategy $\sigma$ for $M$ is \emph{optimal} if it yields the minimal expected reward, \ie $\ExpRew{M}{\sigma} = \MinExpRew{M}$. 
Second, a strategy is \emph{positional} if actions depend on the current state only, \ie $\sigma(ws) = \sigma(s)$ for all $w \in S_M^*$ and $s \in S_M$.
Third, a strategy is \emph{deterministic} if the strategy always chooses exactly one action, \ie for all $w \in S_M^+$ there is an $a \in \Act_M$ such that $\sigma(w) = \delta_a$.
\begin{example}[cntd.]\label{ex:mdp-strat}
  An optimal, positional, and deterministic strategy $\sigma$ for the MDP $\mdpline$ (\Cref{MDPprobs}) chooses action $r$(ight) for states $s_0$, $s_1$ and $\ell$(eft) for $s_3$, $s_4$. 
  For $p = \nicefrac{2}{3}$, the (minimal) expected number of steps until reaching {\color{\goalColor}$s_2$} is $\ExpRew{\mdpline}{\sigma} = 3$.
\end{example}
Every positional strategy for an MDP $M$ induces a Markov chain over the same states. 
\begin{definition}[Induced Markov Chain]\label{def:imc}
The \emph{induced Markov chain} of an MDP $M$ and a positional strategy $\sigma$ of $M$ is given by $M[\sigma] = (S_M, I_M, G_M, P, \rew_M)$,
where the transition probability function $P$ is given by
\[
  P(s,s') ~=~ \sum_{\alpha \in \Act_M} P_M(s,\alpha)(s') \cdot \sigma(s)(a).
\]
\end{definition}
For MDPs, there always exists an optimal strategy that is also positional and deterministic (cf.~\cite{baier2008principles}).
Hence, the minimal expected reward of such an MDP $\mdp$ can alternatively be defined in terms of the expected rewards of its induced Markov chains:
\[
  \MinExpRew{\mdp} ~=~ \min \{ \ExpRew{\instance{\mdp}{\sigma}}{} ~|~ \sigma \in \Strat(\mdp)~\text{positional} \}
\]

\subsection{Partially Observable Markov Decision Processes}
\label{POMDPs}

A partially observable Markov decision process (POMDP) is an MDP with imperfect information about the current state, that is, certain states are indistinguishable.

\begin{definition}[POMDPs]\label{def:POMDP}
A \emph{partially observable Markov decision process} is a tuple 
$\pomdp = (M, O, \obs)$, where $M = (S, I, G, \Act, P, \rew)$ is an MDP, $O$ is a finite set of \emph{observations}, and $\obs\colon S \to O \uplus \{ \obsGoal \}$ is an \emph{observation function} such that \mbox{$\obs(s) = \obsGoal$ iff $s \in G$.}\footnote{Here, $A \uplus B$ denotes the union $A \cup B$ of sets $A$ and $B$ if $A \cap B = \emptyset$; otherwise, it is undefined.}
\end{definition}
For simplicity, we use a dedicated observation $\obsGoal$ for goal states and only consider observation functions of the above kind.
We write $\OO$ as a shortcut for $O \uplus \{\obsGoal\}$.
\begin{example}[cntd.]
The colors in \Cref{MDPprobs} indicate a POMDP obtained from $\mdpline$ by assigning observations {\color{blue!70!black}$o_1$} to $s_0$ and $s_4$, {\color{red!70!black}$o_2$} to $s_1$ and $s_4$, and $\obsGoal$ to {\color{\goalColor}$s_2$}.
Hence, we know how far away from the goal state {\color{\goalColor}$s_2$} we are but not which action leads to the goal.
\label{exPOMDP}
\end{example}
In a POMDP $\pomdp$, we assume that we cannot directly see a state, say $s$, but only its assigned observation $\obs_\pomdp(s)$ -- all states in $\obs^{-1}_{\pomdp}(o) = \{ s ~|~ \obs_\pomdp(s) = o \}$ thus become indistinguishable.
Consequently, multiple path fragments in the underlying MDP $M$ might also become indistinguishable.
More formally, the \emph{observation path fragment} $\obs_\pomdp(\pi)$ of a path fragment $\pi = s_0 \alpha_0 s_1 \ldots s_n \in \Paths{M}$ is defined as
\[
\obs_\pomdp(\pi) ~=~ \obs_\pomdp(s_0)\, \alpha_0\, \obs_\pomdp(s_1)\, \ldots\, \obs_\pomdp(s_n). 
\]
We denote by $\OPaths{\pomdp}$ the set of all observation paths obtained from the paths of $\pomdp$'s underlying MDP $M$, i.e. $\OPaths{\pomdp} = \{ \obs_\pomdp(\pi) ~|~ \pi \in \Paths{M} \}$.
%
%
Strategies for POMDPs are defined as for their underlying MDPs.
However, POMDP strategies must be \emph{observation-based}, that is, they have to make the same decisions for path fragments that have the same observation path fragment.
\begin{definition}[Observation-Based Strategies]
    An \emph{observation-based strategy} $\sigma$ for a POMDP $\pomdp = (M, O, \obs)$ is a function $\sigma : \OPaths{\pomdp} \to \Dist{\Act_M}$ such that:
    \begin{itemize}
        \item $\sigma$ is a strategy for the MDP $M$, i.e. $\sigma \in \Strat(M)$ and
        \item for all path fragments $\pi = s_0 \alpha_0 s_1 ... s_n$ and $\pi' = s_0' \alpha_0' s_1' ... s_n'$, if $\obs(\pi) = \obs(\pi')$, then $\sigma(s_0 s_1 \ldots s_n) = \sigma(s_0' s_1' \ldots s_n')$.
    \end{itemize}
\end{definition}
We denote by $\OStrat(\pomdp)$ the set of all observation-based strategies of $\pomdp$.
The \emph{minimal expected reward} of a POMDP $\pomdp = (M, O, \obs)$ is defined analogously to the expected reward of the MDP $M$ when considering only observation-based strategies:
\[
  \MinExpRew{\pomdp} ~=~ \inf_{\sigma \in \OStrat(\pomdp)} \ExpRew{\pomdp}{\sigma}.
\]
\noindent\textit{Strategies for POMDPs.}
Optimal, positional, and deterministic observation-based strategies for POMDPs are defined analogously to their counterparts for MDPs.
Furthermore, given a positional strategy $\sigma$, we denote by $\pomdp[\sigma] = (M_\pomdp[\sigma], O_\pomdp, \obs_\pomdp)$ the POMDP in which the underlying MDP $M$ is changed to the Markov chain induced by $M$ and $\sigma$.

  When computing expected rewards, we can view a POMDP as an MDP whose strategies are restricted to observation-based ones. Hence, the minimal expected reward of a POMDP is always greater than or equal to the minimal expected reward of its underlying MDP.
  In particular, if there is one observation-based strategy that is also optimal for the MDP, then the POMDP and the MDP have the same expected reward.

\begin{example}[cntd.]\label{ex:pomdp-strat}
  Consider the POMDP $\pomdp$ in \Cref{MDPprobs} for $p=\nicefrac{1}{2}$. 
  For the underlying MDP $\mdpline$, we have $\MinExpRew{\mdpline} = 4$.
  Since we cannot reach {\color{\goalColor}$s_2$} from {\color{red!70!black}$s_1$} and {\color{red!70!black}$s_3$} by choosing the same action, every positional and deterministic observation-based strategy $\sigma$ yields $\ExpRew{\pomdp}{\sigma} = \infty$.
  An observation-based positional strategy $\sigma'$ can choose each action with probability $\nicefrac{1}{2}$, which yields $\ExpRew{\pomdp}{\sigma'} = 10$.
  Moreover, for deterministic, but not necessarily positional, strategies,  $\MinExpRew{\pomdp} \approx 4.74$\footnote{Approximate solution provided by \textsc{prism}'s POMDP solver.}.
\end{example}
%
%
\smallskip
\noindent\textit{Notation for (PO)MDPs.}
Given a POMDP $\pomdp = (M, O, \obs)$ and an observation function $\obs'\colon S_M \to \OOp$, we denote by $\pomdp\setObs{\obs'}$ the POMDP obtained from $\pomdp$ by setting the observation function to $\obs'$, i.e. $\pomdp\setObs{\obs'} = (M, O', \obs')$.
We call $\pomdp$ 
\emph{fully observable} if all states can be distinguished from one another, i.e. $s_1 \neq s_2$ implies $\obs(s_1) \neq \obs(s_2)$ for all $s_1,s_2 \in S_M$. Throughout this paper, we do not distinguish between a fully-observable POMDP $\pomdp$ and its underlying MDP $M$. 
Hence, we use notation introduced for POMDPs, such as $\pomdp\setObs{\obs'}$, also for MDPs.

\section{The Optimal Observability Problem}\label{optimalObservability}
We now introduce and discuss observability problems of the form \emph{``what should be observable for a POMDP such that a property of interest can still be guaranteed?''}.

As a simple example, assume we want to turn an MDP $\mdp$ into a POMDP $\pomdp = (\mdp,O,\obs)$ by selecting an observation function $\obs\colon S_\mdp \to \OO$ such that $\mdp$ and $\pomdp$ have the same expected reward, that is, $\MinExpRew{\mdp} = \MinExpRew{\pomdp}$.
Since every MDP is also a POMDP, this problem has a trivial solution: We can introduce one observation for every non-goal state, i.e. $O = (S_\mdp \setminus G_\mdp)$, and encode full observability, i.e. $\obs(s) = s$ if $s \in S_\mdp \setminus G_\mdp$ and $\obs(s) = \obsGoal$ if $s \in G_\mdp$.
However, we will see that the above problem becomes significantly more involved if we add objectives or restrict the space of admissible observation functions $\obs$.

In particular, we will define in~\Cref{sec:basicproblem} the \emph{optimal observability problem} which is concerned with turning an MDP $\mdp$ into a POMDP $\pomdp$ such that $\pomdp$'s expected reward remains below a given threshold and, at the same time, the number of available observations, i.e. how many non-goal states can be distinguished with certainty, is limited by a budget.
In \Cref{sec:undecidabliity}, we show that the problem is undecidable.
\subsection{Problem Statement} \label{sec:basicproblem}
Formally, the optimal observability problem is the following decision problem:
\begin{definition}[Optimal Observability Problem (OOP)]\label{def:oo-problem}
  Given an MDP $\mdp$, a budget $B \in \mathbb{N}_{\geq 1}$, and a (rational) threshold $\tau \in \mathbb{Q}_{\geq 0}$,
  is there an observation function $\obs\colon S_\mdp \to \OO$ with $|O|\leq B$ such that $\MinExpRew{\mdp\setObs{\obs}} \leq \tau$?
\end{definition}
\begin{example}[cntd.]\label{ex:oop}
  Recall from \Cref{MDPprobs} the MDP $\mdpline$ and consider the OOP-instance $(\mdpline,B,\tau)$ for $p = \nicefrac{1}{2}$, $B = 2$, and $\tau = \MinExpRew{\mdpline} = 4$.
  As discussed in \Cref{ex:pomdp-strat}, the observation function given by the colors in \Cref{MDPprobs} is \emph{not} a solution. However, there is a solution: For $\obs(s_0) = \obs(s_1) = o_1$, $\obs(s_2) = \obsGoal$, and $\obs(s_2) = \obs(s_3) = o_2$, we have $\MinExpRew{\mdpline\setObs{\obs}} = 4$, because the optimal strategy for $\mdpline$ discussed in \Cref{ex:mdp-strat} is also observation-based for $\mdpline\setObs{\obs}$.
\end{example}
\subsection{Undecidability} \label{sec:undecidabliity}
We now show that the optimal observability problem (\Cref{def:oo-problem}) is undecidable. 

\begin{theorem}[Undecidability]
  The optimal observability problem is undecidable.
  \label{th:undecidability}
\end{theorem}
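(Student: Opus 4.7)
The plan is to reduce the (undecidable) policy-existence problem for POMDPs to the OOP. An instance of policy-existence asks, for a fixed POMDP $\pomdp = (M, O, \obs)$ and threshold $\tau$, whether $\MinExpRew{\pomdp} \leq \tau$. I would construct an OOP instance $(M', B, \tau')$ such that the answer coincides with that of the original policy-existence instance.

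The core tension is that in the OOP the algorithm \emph{chooses} the observation function, whereas in policy-existence the observation function is fixed. Without additional structure the OOP could always select a strictly finer observation function than $\obs$ and thereby dominate any fixed-$\obs$ instance. The reduction must therefore ensure that every observation function $\obs'$ on $M'$ that meets the budget and threshold induces, when restricted to $S_M$, a partition no finer than the one induced by $\obs$. My intended construction is to keep $M$ as a sub-MDP of $M'$, set $B = |O|$, and attach two kinds of gadgets: (i) \emph{anchor} gadgets $a_1,\dots,a_{|O|}$ whose internal structure forces each $a_i$ to receive its own distinct observation on any $\obs'$ that meets the threshold — thereby exhausting the budget — and (ii) \emph{discriminator} gadgets built around the classical twin-state trick (a coin toss leads to one of two states that require opposite actions for reward $0$; confusing them costs a large expected penalty), arranged so that for every pair $s,s'\in S_M$ with $\obs(s)\neq\obs(s')$ any admissible $\obs'$ must also separate $s$ and $s'$.

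Once the partition on $S_M$ induced by an admissible $\obs'$ is pinned to the one induced by $\obs$, the expected reward $\MinExpRew{M'\setObs{\obs'}}$ decomposes into a fixed contribution of the gadgets plus $\MinExpRew{\pomdp}$. By routing the initial distribution so that the gadget contribution is a predictable constant $c$, I would set $\tau' = \tau + c$ and conclude: the OOP instance has a yes-answer iff $\MinExpRew{\pomdp} \leq \tau$. Since the policy-existence problem is undecidable and the construction of $(M', B, \tau')$ is effective in $(\pomdp,\tau)$, this yields undecidability of the OOP.

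The main obstacle I anticipate is the gadget design. The discriminators must (i) \emph{strictly} penalize any deviation from $\obs$'s partition by an amount exceeding the slack in $\tau'$, (ii) leave the original reward computation in $\pomdp$ essentially intact (or shifted only by a constant that is independent of the strategy), and (iii) interact cleanly with the assumption that goal states are uniquely observed via $\obsGoal$ and that initial states are uniformly distributed — this last point is delicate because scaling probabilities across many pairwise discriminators while preserving the equality $\MinExpRew{M'\setObs{\obs'}} = \MinExpRew{\pomdp} + c$ requires careful bookkeeping. I expect the cleanest route is to make all gadgets reach a goal state in one step after the discriminating choice, so their contribution is both path-independent and additive.
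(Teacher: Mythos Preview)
Your high-level plan---reduce from the policy-existence problem, set the budget to $B=|O|$, and add anchor states that must receive pairwise distinct observations so the budget is exhausted---matches the paper exactly. The divergence is in \emph{how} the chosen $\obs'$ is forced to agree with $\obs$ on the original states.

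The paper does \emph{not} use pairwise discriminator gadgets. Instead it \emph{tags the actions}: every action $\alpha$ is replaced by copies $\alpha_o$, one per observation $o\in O$. In a state $s$ of the original MDP, $\alpha_{\obs(s)}$ behaves like $\alpha$, while every wrongly-tagged $\alpha_{o'}$ leads to a sink with infinite expected reward; the anchor state $s_o$ similarly accepts only tag $o$. An observation-based strategy must use a single tag per $\obs'$-class, so every original state $s$ is forced to share its $\obs'$-class with the anchor $s_{\obs(s)}$. This pins $\obs'$ to $\obs$ (up to renaming) without touching the transition structure seen by a correctly-tagged strategy, so the expected reward along original paths is preserved verbatim. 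A further neat trick: the anchors reach a state $\stau$ of reward exactly $\tau$, so the gadget contribution averages to $\tau$ and the threshold stays $\tau$ rather than $\tau+c$.

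Your twin-state discriminators face a real obstacle that you flag but do not resolve. To make ``confusing $s$ and $s'$'' costly, the gadget must act \emph{at} $s$ and $s'$ (otherwise nothing ties the penalty to $\obs'(s),\obs'(s')$). But any new action or transition you add at $s$ is equally available when $s$ is reached through the original dynamics of $M$; if that action reaches a goal cheaply it becomes a shortcut that lowers $\MinExpRew{\pomdp}$, and if it is expensive the strategy simply avoids it and the gadget no longer forces separation. Satisfying your condition~(ii) (``leave the original reward computation essentially intact'') while still making the discriminator bite is exactly the difficulty, and the twin-state trick by itself does not solve it. The action-tagging idea sidesteps this entirely: correctly-tagged actions replicate the original behaviour, so no shortcuts appear, and wrong tags are punished uniformly via the sink.
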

The proof is by reduction to the policy-existence problem for POMDPs~\cite{madani1999undecidability}.
\begin{definition}[Policy-Existence Problem]\label{def:policy-existence}
  Given a POMDP $\pomdp$ and a rational threshold $\tau \in \mathbb{Q}_{\geq 0}$, does $\MinExpRew{\pomdp} \leq \tau$ hold?
\end{definition}
\begin{proposition}[Madani et al.~\cite{madani1999undecidability}]\label{prop:policy-existence}
  	The policy-existence problem is undecidable.
\end{proposition}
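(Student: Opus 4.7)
My plan is to reduce from the \emph{emptiness problem for probabilistic finite automata (PFAs)}, which is undecidable by a classical result of Paz. A PFA $A = (Q, \Sigma, \delta, q_0, F)$ with probabilistic transitions $\delta\colon Q \times \Sigma \to \Dist{Q}$ assigns to every input word $w = \sigma_1 \ldots \sigma_n$ an acceptance probability $\Pr_A(w)$; the emptiness question asks, given a rational cut-point $\lambda \in (0,1)$, whether there exists a word $w$ with $\Pr_A(w) \geq \lambda$.

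Given $(A, \lambda)$, I would construct a POMDP $\pomdp$ whose underlying MDP has state set $Q \cup \{\mathit{rej}, \mathit{goal}\}$, single initial state $q_0$, single goal state $\mathit{goal}$, and action set $\Sigma \cup \{\mathit{stop}\}$. The transitions are: for $q \in Q$ and $\sigma \in \Sigma$, action $\sigma$ induces the PFA distribution $\delta(q, \sigma)$; action $\mathit{stop}$ at $q \in F$ moves deterministically to $\mathit{goal}$, while at $q \notin F$ it moves to $\mathit{rej}$; the state $\mathit{rej}$ transitions deterministically to $\mathit{goal}$ regardless of action, and $\mathit{goal}$ self-loops. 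The observation function assigns a single common observation to every state in $Q$ (making the PFA dynamics invisible to any strategy), a distinct observation to $\mathit{rej}$, and $\obsGoal$ to $\mathit{goal}$. The reward is $0$ everywhere except at $\mathit{rej}$, which carries reward $1$.

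The key computation is then $\MinExpRew{\pomdp} = 1 - \sup_w \Pr_A(w)$, so setting the threshold $\tau = 1 - \lambda$ yields $\MinExpRew{\pomdp} \leq \tau$ iff some word $w$ satisfies $\Pr_A(w) \geq \lambda$, which transfers undecidability from PFA emptiness to policy-existence. In the easy direction, any deterministic finite action sequence $w \cdot \mathit{stop}$ reaches $\mathit{goal}$ without penalty with probability $\Pr_A(w)$ and reaches $\mathit{rej}$ first with probability $1 - \Pr_A(w)$, giving expected reward $1 - \Pr_A(w)$; conversely, any strategy that fails to play $\mathit{stop}$ eventually with probability one never reaches $\mathit{goal}$ and therefore has infinite expected reward.

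The step I expect to be the main obstacle is arguing that no randomized, history-dependent observation-based strategy does better than the best such deterministic sequence. The argument would use blindness on $Q$ to collapse every observation history (while still in $Q$) to the bare step counter, and then exploit convexity of the expected-reward functional in the per-step action distribution, so that the infimum over observation-based strategies is attained by a mixture of deterministic stopping strategies, each contributing $1 - \Pr_A(w)$ for the word $w$ played before stopping, and therefore equals $\inf_w (1 - \Pr_A(w)) = 1 - \sup_w \Pr_A(w)$.
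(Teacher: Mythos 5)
First, a remark on what you are being compared against: the paper does not actually prove this proposition --- it is imported from Madani et al., and the paper's only original content on the point is a footnote in Appendix A.1 explaining how to pass from their results on \emph{maximal} reachability/reward to \emph{minimal} expected rewards. A self-contained reduction from probabilistic finite automata is very much in the spirit of the cited source, and most of your construction is sound. In particular, the blindness argument is correct: because every state of $Q$ carries the same observation, an observation-based strategy's behaviour while in $Q$ can depend only on its own past actions, so it induces a distribution over words $w\cdot\mathit{stop}$ (plus possibly non-stopping runs) that is independent of the hidden PFA state. Hence $\ExpRew{\pomdp}{\sigma}$ is either $\infty$ or a convex combination of the values $1-\Pr_A(w)$, and $\MinExpRew{\pomdp} = 1-\sup_w \Pr_A(w)$ as you claim.

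The gap is in the final equivalence. You need $\MinExpRew{\pomdp}\leq 1-\lambda$ iff $\exists w\colon \Pr_A(w)\geq\lambda$, but what the construction actually delivers is $\MinExpRew{\pomdp}\leq 1-\lambda$ iff $\sup_w \Pr_A(w)\geq\lambda$. These are not equivalent: the supremum of a PFA's acceptance probabilities need not be attained (this is precisely the phenomenon behind the value-$1$ problem), so there are instances where the emptiness answer is NO while $\sup_w\Pr_A(w)=\lambda$, and your reduction would then wrongly report YES. The repair is standard but must be made explicit: reduce from the Condon--Lipton \emph{gap} version of PFA emptiness, which states that it is undecidable to distinguish PFAs satisfying $\exists w\colon \Pr_A(w)>1-\epsilon$ from those satisfying $\forall w\colon \Pr_A(w)<\epsilon$ (for fixed $\epsilon<\nicefrac{1}{2}$). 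With your construction and threshold $\tau=\nicefrac{1}{2}$, the first class yields $\MinExpRew{\pomdp}\leq\epsilon<\tau$ and the second yields $\MinExpRew{\pomdp}\geq 1-\epsilon>\tau$, so a decision procedure for policy existence would separate the two classes; this sidesteps the attainment issue entirely. With that substitution (and keeping your argument that any strategy failing to stop almost surely has infinite expected reward under the paper's definition), the proof goes through.
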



%

\begin{figure}[t]
 \centering
 \resizebox{0.9\textwidth}{!}
 {\begin{tikzpicture}[node distance=2cm,>=stealth',bend angle=45,auto]

     \tikzstyle{place}=[circle,thick,draw=black,fill=white,minimum size=6mm]
     \tikzstyle{prob}=[circle,thick,draw=black,fill=black,minimum size=0.3pt,inner sep=2pt]
     \tikzstyle{goal}=[circle,thick,draw=black,fill=green!20,minimum size=6mm]
     \tikzstyle{red place}=[place,draw=red!75,fill=red!20]
     \tikzstyle{transition}=[rectangle,thick,draw=black!75,
   			  fill=black!20,minimum size=10mm]
     \tikzstyle{transition1}=[rectangle,thick,draw=black!75,
   			  fill=black!20,minimum size=22mm]
     \tikzstyle{mdp}=[rectangle,densely dotted,draw=black!100,
   			  fill=none,minimum size=45mm]
     \tikzstyle{mdp1}=[rectangle,densely dotted,draw=black!100,
   			  fill=none,minimum width=115mm, minimum height= 48mm]
     \tikzstyle{help}=[circle,thick,draw=none,
   			  fill=black,minimum size=0.1mm]
    \tikzstyle{dot}=[circle,thick,draw=none,
   			  fill=black,minimum size=0.1mm]

   \tikzstyle{every label}=[black]

    \node [goal] (g)                          {G};

     \node [place] (o1) [below left of=g] {};

     \node [place] (o2)[right=1cm of o1] {};
 
    \node [place] (o3)[below right =1.5cm and 0.3cm of o1] {$s$};
    
    \node (obs) [left of = o3, node distance=1.5cm] {$\obs_\pomdp(s) = o$};
    
    \node [place] (o4)[left=0.5cm of o1] {};

     \node [mdp] (m) [below left=-4.2cm and -2.2cm of o3] {};

     \node [mdp1] (m1) [below right=-4.33cm and -3cm of o3] {};

     \node [place] (sx) [below right= -0.5cm and 3cm of m]  {$\sink$}
     edge   [loop right]             node [above right]{$\Act'$} (sx)
    edge [pre,bend left=5] node[above right=0.1cm and -0.6cm ]{$\alpha_{\tilde{o}},~\beta_{\tilde{o}}$} (o3)
    ;

     \node [place] (sg) [above right= -0.7cm and 3cm of m]  {$\stau$}
     edge   [post,bend right=10]             node [above right]{$\Act'$} (g);

     \node[above left =-0.6cm and -0.6cm of m] (s1) {$\mdp$};
     \node[above right =-0.6cm and -0.6cm of m1] (s2) {$\mdp'$};

     \node [place] (l1) [right=0.8cm of m] {$s_{o}$}
     edge [post,bend right=5] node[above right=0.3cm and -0.4cm ]{$\alpha_{\tilde{o}},\beta_{\tilde{o}}$} (sx);

      \node [place] (l2) [right=2.5cm of l1] {$s_{\tilde{o}}$}
      edge [post,bend left=5] node[above right=0.1cm and 0.1cm ]{$\alpha_o, \beta_o$} (sx);

      \node [prob] (p1) [above right = 0.5cm and 0.3cm of l1]{}
      edge [post] node [] {} (sg)
      edge [post,bend left = 15] node [] {} (l2);

      \node [prob] (p2) [above right = 0.5cm and 0.3cm of l2]{}
      edge [post] node [] {} (sg);
      
      \node [prob] (p3) [above left = 0.5cm and 0.3cm of o3]{}
      edge [post] node [] {} (o4)
      edge [post,bend left = 15] node [] {} (o1);
      
      \node [prob] (p4) [above right = 0.5cm and 0.3cm of o3]{}
      edge [post] node [] {} (o2)
      edge [post,bend right = 15] node [] {} (o1);

    \coordinate[above right=1cm and 1cm of l2]  (d1) ;
    \coordinate[right=2cm of l2]  (d2) ;
    \coordinate[above left=1.3cm and 0.3cm of l1]  (d3) ;
    \draw (l1) -- (p1) node [above left= -0.2 and 0.4cm]{$\alpha_{o},\beta_{o}$};
    \draw (l2) -- (p2) node [above right= -0.3 and 0.3cm]{$\alpha_{\tilde{o}},\beta_{\tilde{o}}$};
    \draw (o3) -- (p3) node [above left= -0.2 and 0.1cm]{$\alpha_{o}$};

	\draw (o3) -- (p4) node [above right = -0.2 and 0.1cm]{$\beta_{o}$};
    
    \path 
    	(p1) edge[post, bend right=35] (l1)
    	(p2) edge[post, bend left=35] (l2)
    	(p2) edge[post, bend right=5] (l1)
    ;

 \end{tikzpicture}
}
     \caption{Sketch of the MDP $\mdp'$ constructed in the undecidability proof (\Cref{th:undecidability}). We assume the original POMDP $\pomdp$ uses actions $\Act = \{\alpha, \beta\}$ and observations $O = \{o,\tilde{o}\}$. Edges with a black dot indicate the probability distribution selected for the action(s) next to it. Edges without a black dot are taken with probability one. Concrete probabilities, rewards,  and transitions of unnamed states have been omitted for simplicity.}
     \label{fig:reduction}
 \end{figure}

\begin{figure}[t]
\begin{equation*}
\begin{aligned}[c]
   \mdp' ~=~ & (S', I', G, \Act', P', \rew') \\
   S' ~=~ & S \uplus S_O \uplus \{\sink, \stau\} \\
   S_O ~=~ & \{ s_o ~|~ o \in O \} \\
   I' ~=~ & I \uplus S_O \\
   \Act' ~=~ & \biguplus_{o \in O} \Act_o	\\ 
   \Act_o ~=~ & \{ \alpha_o ~|~ \alpha \in \Act \} 
\end{aligned}
\qquad\hspace{-2mm}
\begin{aligned}[c]
   P'(s,\alpha_o) ~=~ &
   \begin{cases}
      P(s,\alpha), & ~\text{if}~ s \in (S\setminus G) ~\text{,}~ \obs(s) = o \\
      P(s,\alpha), & ~\text{if}~ s \in G \\
      \mathit{unif}(S_O \cup \{ \stau \}), & ~\text{if}~ s = s_o \in S_O  \\
      \mathit{unif}(G), & ~\text{if}~ s = \stau \\
      \delta_{\sink}, & ~\text{otherwise} 
   \end{cases} \\
   \rew'(s) ~=~ &
   \begin{cases}
      \rew(s), & ~\text{if}~ s \in S \\
      0, & ~\text{if}~ s \in S_O \\
      1, & ~\text{if}~ s = \sink \\
      \tau & ~\text{if}~ s = \stau
   \end{cases}
\end{aligned}
\end{equation*}
  \caption{Formal construction of the MDP $\mdp'$ in the proof of \Cref{th:undecidability}. Here, $\mdp'$ is derived from 
   the POMDP $\pomdp = (\mdp, O, \obs)$, where $\mdp = (S, I, G, \Act, P, \rew)$.
   Moreover, $\mathit{uniform}(S'')$ assigns probability $\nicefrac{1}{|S''|}$ to states in $S''$ and probability $0$, otherwise.
   }
   \label{fig:undecidability-proof}
\end{figure}
\begin{proof}[of~\Cref{th:undecidability}]
  By reduction to the policy-existence problem.
  Let $(\pomdp,\tau)$ be an instance of said problem,
  where $\pomdp = (\mdp, O, \obs)$ is a POMDP, $\mdp = (S, I, G, \Act, P, \rew)$ is the underlying MDP, and $\tau \in \mathbb{Q}_{\geq 0}$ is a threshold.
  Without loss of generality, we assume that $G$ is non-empty and that $|\mathit{range}(obs)| = |O|+1$, where $\mathit{range}(obs) = \{ \obs(s) ~|~ s \in S \}$.
 We construct an \textbf{OOP}-instance $(\mdp', B, \tau)$, where $B = |O|$, to decide whether $\MinExpRew{\pomdp} \leq \tau$ holds.
 
\paragraph{Construction of $\mdp'$.}
 \Cref{fig:reduction} illustrates the construction of $\mdp'$; a formal definition is found in
 \Cref{fig:undecidability-proof}.
Our construction extends $\mdp$ in three ways:
First, we add a sink state $\sink$ such that reaching $\sink$ with some positive probability immediately leads to an infinite total expected reward.
Second, we add a new initial state $s_o$ for every observation $o \in O$. Those new initial states can only reach each other, the sink state $\sink$, or the goal states via the new state $\stau$.
Third, we \emph{tag} every action $\alpha \in Act$ with an observation from $O$, \ie for all $\alpha \in \Act$ and $o \in O$, we introduce an action $\alpha_o$. 
For every state $s \in S \setminus G$, taking actions tagged with $\obs(s)$ behaves as in the original POMDP $\pomdp$.
Taking an action with any other tag leads to the sink state.
Intuitively, strategies for $\mdp'$ thus have to pick actions with the same tags for states with the same observation in $\pomdp$.
However, it could be possible that a different observation function than the original $\obs$ could be chosen.
To prevent this, every newly introduced initial state $s_o$ (for each observation $o \in O$) leads to $\sink$ if we take an action that is not tagged with $o$. Each $s_o$ thus represents one observable, namely $o$.
To rule out observation functions with less than $|O|$ observations, our transition probability function moves from every new initial state $s_o \in S_O$ to every $s_o' \in S_O$ and to $\stau$ with some positive probability (uniformly distributed for convenience). If we would assign the same observation to two states in $s_o, s_o' \in S_O$, then there would be two identical observation-based paths to $s_o$ and $s_o'$. Hence, any observation-based strategy inevitably has to pick an action with a tag that leads to $\sink$.
In summary, the additional initial states enforce that -- up to a potential renaming of observations -- we have to use the \emph{same} observation function as in the original POMDP. 

 Clearly, the MDP $\mdp'$ is computable (even in polynomial time). 
 Our construction also yields a correct reduction (see \Cref{app:undecidability} for the detailed proof), \ie we have
 \begin{align*}
   \underbrace{\MinExpRew{\pomdp} \leq \tau}_{\text{policy-existence problem}}
   \quad\Longleftrightarrow\quad 
   \underbrace{\exists \obs'\colon \MinExpRew{\mdp'\setObs{\obs'}} \leq \tau.}_{\text{optimal observability problem, where $|\mathit{range}(\obs')| \leq |\OO|$}}
   \tag*{\qed}
 \end{align*}
\end{proof}

\section{Optimal Observability for Positional Strategies}\label{positional}

Since the optimal observability problem is undecidable in general (cf. \Cref{th:undecidability}), we consider restricted versions. 
In particular, we focus on \emph{positional} strategies throughout the remainder of this paper. 
We show in \Cref{sec:oop-np} that the optimal observability problem becomes \textsc{np}-complete when restricted to positional \emph{and} deterministic strategies.
Furthermore, one can determine the minimal required budget that still yields the exact minimal expected reward by analyzing the underlying MDP (\Cref{sec:oop-np}). 
In Section~\ref{sec:randomised}, we explore variants of the optimal observability problem, where the budget is lower than the minimal required one. We show that an extension of parameter synthesis techniques can be used to solve those variants.

\subsection{Positional and Deterministic Strategies}
\label{sec:oop-np}

We now consider a version of the optimal observability problem in which only positional and deterministic strategies are taken into account.
Recall that a positional and deterministic strategy for $\pomdp$ assigns one action to every state, \ie it is of the form $\sigma\colon S_\pomdp \to \Act_\pomdp$.
Formally, let $\StratPD(\pomdp)$ denote the set of all positional and deterministic strategies for $\pomdp$. The minimal expected reward over strategies in $\StratPD(\pomdp)$ is
\[
  \MinExpRewPD{\pomdp} ~=~ \inf_{\sigma \in \StratPD(\pomdp)} \ExpRew{M}{\sigma}.
\]
The \emph{optimal observability problem for positional and deterministic strategies} is then defined as in \Cref{def:oo-problem}, but using $\MinExpRewPD{\pomdp}$ instead of $\MinExpRew{\pomdp}$:

\begin{definition}[Positional Deterministic Optimal Observability Problem (PDOOP)]\label{def:pdoop}
  Given an MDP $\mdp$, $B \in \mathbb{N}_{\geq 1}$, and $\tau \in \mathbb{Q}_{\geq 0}$,
  does there exist an observation function $\obs\colon S_\mdp \to \OO$ with $|O| \leq B$ such that $\MinExpRewPD{\pomdp\setObs{\obs}} \leq \tau$?
\end{definition}

\begin{example}[ctnd.]
Consider the \textbf{PDOOP}-instance $(\mdpline,2,4)$, where $\mdpline$ is the MDP in \Cref{MDPprobs} for $p = \nicefrac{1}{2}$.
Then there is a solution by assigning the observation $o_1$ to $s_0$ and $s_1$ (and moving $r$(ight) for $o_1$), and $o_2$ to $s_3$ and $s_4$ (and moving $\ell$(eft) for $o_2$).
\end{example}
%
Analogously, we restrict the policy-existence problem (cf. \Cref{def:policy-existence}) to positional and deterministic strategies.
\begin{definition}[Positional Deterministic Policy-Existence Problem (PDPEP)]
  Given a POMDP $\pomdp$ and $\tau \in \mathbb{Q}_{\geq 0}$, does $\MinExpRewPD{\pomdp} \leq \tau$ hold?
\end{definition}
\begin{proposition}[Sec. 3 from \cite{littman1994memoryless}]\label{prop:littman}
  	\textnormal{\textbf{PDPEP}} is \textsc{np}-complete.
\end{proposition}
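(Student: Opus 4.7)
The plan is to prove NP-completeness by separately establishing membership in \textsc{np} and \textsc{np}-hardness.

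For the upper bound, I would give a guess-and-check argument. A positional deterministic observation-based strategy $\sigma$ can be represented as a map from observations to actions, so its size is bounded by $|O_{\pomdp}| \cdot \lceil \log |\Act_{\pomdp}| \rceil$, which is polynomial in the input. Given such a $\sigma$, the induced Markov chain $\pomdp[\sigma]$ is computable in polynomial time, and its expected reward to $G_{\pomdp}$ can be computed exactly by solving a linear system over the rationals (standard reachability-reward computation on Markov chains, cf.~\cite{baier2008principles}). Comparing the resulting value with $\tau$ finishes the check in polynomial time, which places \textbf{PDPEP} in \textsc{np}.

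For \textsc{np}-hardness, I would reduce from 3-SAT. Given a formula $\varphi$ with variables $x_1, \dots, x_n$ and clauses $C_1, \dots, C_m$, I build a POMDP with two actions $\Act = \{\top, \bot\}$ and $n$ observations $o_1, \dots, o_n$, one per variable (plus the mandatory $\obsGoal$). The initial state distribution picks a clause index $j$ uniformly. For each clause $C_j$, a dedicated gadget sequentially visits three states, each labelled with the observation $o_i$ of the variable appearing as the corresponding literal; the action chosen at that observation encodes the truth value assigned to $x_i$. Within a gadget, a transition branches to a zero-reward sink leading to a goal state whenever the chosen action \emph{satisfies} the current literal, and otherwise continues to the next literal. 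If all three literals are falsified, the path is routed through a single high-reward trap state before reaching a goal. Because $\sigma$ is positional and observation-based, it must commit to one action per $o_i$ globally, which is exactly a Boolean assignment to $\varphi$. Choosing the trap reward $R$ and threshold $\tau$ so that $\MinExpRewPD{\pomdp} \leq \tau$ iff every clause contributes a satisfied path then yields the required equivalence: $\varphi$ is satisfiable iff the constructed \textbf{PDPEP}-instance is a yes-instance.

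The main obstacle will be designing the clause gadgets so that (i) the identity of the clause currently being evaluated is \emph{not} leaked through any observation — otherwise a positional strategy could exploit the clause index to ``cheat'' — and (ii) the penalty contributed by a single unsatisfied clause strictly dominates any cumulative reward from satisfied clauses after averaging over the uniform initial distribution. The first issue is handled by reusing the same variable-observations $o_i$ across all clause gadgets, so that no observation sequence distinguishes two clauses that share literals; the second issue is addressed by choosing $R$ sufficiently larger than $m$ times the maximum per-clause contribution from a satisfied path, and setting $\tau$ just below the value that one unsatisfied clause would enforce. Verifying these two properties quantitatively, while preserving the polynomial-size guarantee of the reduction, is the delicate part of the argument.
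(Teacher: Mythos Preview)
The paper does not actually prove this proposition: it is quoted as a known result and attributed to Littman~\cite{littman1994memoryless}, with no argument given beyond the citation. So there is no ``paper's proof'' to compare against here.

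That said, your sketch is a faithful reconstruction of the standard argument (and of Littman's original one). The \textsc{np}-membership half is exactly what the paper itself reuses later in the proof of Theorem~\ref{th:np}: guess a positional deterministic observation-based strategy, build the induced Markov chain, and evaluate its expected reward by linear algebra. The 3-SAT reduction for hardness is also the classical route; since you restrict to \emph{positional} strategies, your worry about observation sequences leaking the clause identity is actually moot---the strategy only sees the current observation $o_i$, so any choice it makes is automatically a global assignment to $x_i$, regardless of which clause gadget it is in. The only place where your write-up is looser than it needs to be is the threshold calibration: rather than arguing qualitatively that $R$ must ``dominate'' the satisfied-clause contributions, you can simply set every non-trap reward to $0$, the trap reward to $1$, and the threshold to $\tau = 0$. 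Then $\MinExpRewPD{\pomdp} = 0$ iff no clause ever reaches the trap under the chosen assignment, i.e.\ iff $\varphi$ is satisfiable, and the polynomial-size bound is immediate.
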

\textsc{np}-hardness of \textnormal{\textbf{PDOOP}} then follows by a reduction from \textnormal{\textbf{PDPEP}}, which is similar to the reduction in our undecidability proof for arbitrary strategies (cf. \Cref{th:undecidability}). In fact, \textnormal{\textbf{PDOOP}} is not only \textsc{np}-hard but also in \textsc{np}.
\begin{theorem}[NP-completeness]\label{th:np}
  \textnormal{\textbf{PDOOP}} is \textsc{np}-complete.
\end{theorem}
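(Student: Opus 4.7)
The plan is to prove membership in $\textsc{np}$ and $\textsc{np}$-hardness separately.

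For membership in $\textsc{np}$, I would guess as a certificate an observation function $\obs'\colon S_\mdp \to \OO$ with $|O| \leq B$ together with a positional deterministic observation-based strategy $\sigma$ for $\mdp\setObs{\obs'}$. Since $\sigma$ depends only on the current observation, it can be encoded as a map $\OO \to \Act_\mdp$ of size polynomial in the input. Verification then reduces to building the induced Markov chain $\instance{\mdp\setObs{\obs'}}{\sigma}$, solving the standard linear system of size $|S_\mdp|$ that characterises its expected reward (cf.~\cite{baier2008principles}), and checking whether the result is at most $\tau$; all these steps run in polynomial time.

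For $\textsc{np}$-hardness, I would reduce from the positional deterministic policy-existence problem, which is $\textsc{np}$-hard by Proposition~\ref{prop:littman}. Given an instance $(\pomdp, \tau)$, the reduction reuses verbatim the polynomial-time construction of $\mdp'$ from the proof of Theorem~\ref{th:undecidability} (cf.~Figures~\ref{fig:reduction}--\ref{fig:undecidability-proof}) together with budget $B = |O|$. Recall that the auxiliary initial states $s_o$ force any feasible observation function $\obs'$ to assign, up to renaming, exactly the original $|O|$ observations to the states of $\pomdp$, while the action tags force every observation-based strategy to pick, at each state observed as $o$, an action whose tag is $o$; otherwise the strategy inevitably reaches the reward-penalising sink $\sink$, incurring infinite expected reward.

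The main obstacle is to verify that the correspondence established in Theorem~\ref{th:undecidability} for arbitrary strategies carries over to the restricted class $\StratPD$. I would handle this by exhibiting projection and lifting maps between finite-reward positional deterministic observation-based strategies on $\mdp'\setObs{\obs'}$ and those on $\pomdp$: projection reads off, for each observation $o$, the unique untagged action $\alpha$ such that $\alpha_o$ is chosen, and restricts to states of $\mdp$; lifting sends each action choice $\alpha$ at a state $s$ of $\pomdp$ to the tagged action $\alpha_{\obs(s)}$ in $\mdp'$, and at each auxiliary state picks any $\sink$-avoiding tag. Both maps preserve positionality, determinism, and expected reward (auxiliary states contribute zero reward), yielding the equivalence
\begin{align*}
  \MinExpRewPD{\pomdp} \leq \tau \;\Longleftrightarrow\; \exists\,\obs'\colon\, \MinExpRewPD{\mdp'\setObs{\obs'}} \leq \tau \text{ with } |\mathit{range}(\obs')| \leq |\OO|,
\end{align*}
which, combined with membership in $\textsc{np}$, establishes $\textsc{np}$-completeness.
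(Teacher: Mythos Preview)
Your approach matches the paper's: membership in \textsc{np} via guessing $(\obs,\sigma)$ and verifying with linear programming on the induced Markov chain, and hardness via the same reduction from \textbf{PDPEP} as in Theorem~\ref{th:undecidability}. The paper's appendix in fact simplifies the construction for positional strategies (each $s_o$ transitions directly to $\stau$ rather than uniformly over $S_O \cup \{\stau\}$), but your verbatim reuse also works, since for positional strategies facts~(a) and~(b) follow already from the single-state constraints without any history argument.

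One imprecision deserves correction: your claim that the lifting and projection maps ``preserve \ldots\ expected reward (auxiliary states contribute zero reward)'' is not right. The auxiliary state $\stau$ carries reward $\tau$, so the expected reward collected along paths starting in $S_O$ is exactly $\tau$, not zero. Consequently $\ExpRew{\mdp'\setObs{\obs'}}{\sigma'}$ is the convex combination
\[
\frac{|I|}{|I|+|S_O|}\cdot\ExpRew{\pomdp}{\sigma} \;+\; \frac{|S_O|}{|I|+|S_O|}\cdot\tau,
\]
which is $\leq \tau$ if and only if $\ExpRew{\pomdp}{\sigma} \leq \tau$. The equivalence you state is correct, but its justification requires this convex-combination argument rather than a zero-contribution claim; without it the change in the number of initial states would already break literal preservation of expected reward.
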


\begin{proof}[Sketch]
To see that \textnormal{\textbf{PDOOP}} is in \textsc{np}, consider a \textnormal{\textbf{PDOOP}}-instance $(\mdp,B,\tau)$. 
We guess an observation function $\obs\colon S_\mdp \to \{1,\ldots,|B|\} \uplus \{\obsGoal\}$ 
and a positional and deterministic strategy $\sigma\colon S_\mdp \to \Act_\mdp$. Both are clearly polynomial in the size of $\mdp$ and $B$.
Then $\obs$ is a solution for the \textnormal{\textbf{PDOOP}}-instance $(\mdp,B,\tau)$ iff (a) $\sigma$ is an observation-based strategy and (b) $\ExpRew{\mdp\setObs{\obs}}{\sigma} \leq \tau$.
Since $\sigma$ is positional and deterministic, property (a) amounts to checking whether $\obs(s) = \obs(t)$ implies $\sigma(s) = \sigma(t)$ for all states $s,t \in S_\mdp$, which can be solved in time quadratic in the size of $\mdp$.
To check property (b), we construct the induced Markov chain $\mdp\setObs{\obs}[\sigma]$, which is linear in the size of $\mdp$ (see \Cref{def:imc}).
Using linear programming (cf.~\cite{baier2008principles}), we can determine the Markov chain's expected reward in polynomial time, \ie we can check that
\[
  \ExpRew{\mdp\setObs{\obs}[\sigma]}{}
  ~=~ \ExpRew{\mdp\setObs{\obs}}{\sigma} ~\leq~ \tau.
\]
We show \textsc{np}-hardness by polynomial-time reduction from \textnormal{\textbf{PDPEP}} to \textnormal{\textbf{PDOOP}}.
  The reduction is similar to the proof of \Cref{th:undecidability}
  but uses \Cref{prop:littman} instead of \Cref{prop:policy-existence}.
  We refer to \Cref{app:hardness-remark} for details.
  In particular, notice that the construction in \Cref{fig:undecidability-proof} is polynomial in the size of the input $\pomdp$, because the constructed MDP $M'$ has $|S|+|O_\pomdp| + 2$ states and $|\Act_\pomdp| \cdot |O_\pomdp|$ actions.
 \qed
\end{proof}
Before we turn to the optimal observability problem for possibly randomized strategies, we remark that, for positional and deterministic strategies, we can also solve a stronger problem than optimal observability: how many observables are needed to turn an MDP into POMDP with the same minimal expected reward?
\begin{definition}[Minimal Positional Budget Problem (MPBP)]\label{def:minimalBudget}
  Given an MDP $\mdp$, determine an observation function 
  $\obs\colon S_\mdp \to \OO$ such that
  \begin{itemize}
  	\item $\MinExpRewPD{\mdp\setObs{\obs}} = \MinExpRewPD{\mdp}$ and
  	\item $\MinExpRewPD{\mdp\setObs{\obs}} < \MinExpRewPD{\mdp\setObs{\obs'}}$ for all observation functions $\obs'\colon S_\mdp \to \OOp$ with $|O'| < |O|$.
  \end{itemize}
\end{definition}
The main idea for solving the problem \textbf{MPBP} is that every optimal, positional, and deterministic (\textsc{opd}, for short) strategy $\sigma\colon S_\mdp \to \Act_\mdp$ for an MDP $\mdp$ also solves \textbf{PDOOP} for $\mdp$ with threshold $\tau = \MinExpRewP{\mdp}$ and budget $B = |\mathit{range}(\sigma)|$:
A suitable observation function $\obs\colon S_\mdp \to \mathit{range}(\sigma)$ assigns action $\alpha$ to every state $s \in S_\mdp$ with $\sigma(s) = \alpha$.
It thus suffices to find an \textsc{opd} strategy for $\mdp$ that uses a minimal set of actions.
A brute-force approach to finding such a strategy iterates over all subsets of actions $A \subseteq \Act_\mdp$: For each $A$, we construct an MDP $\mdp_A$ from $\mdp$ that keeps only the actions in $A$, and determine an \textsc{opd} strategy $\sigma_A$ for $\mdp_A$. 
The desired strategy is then given by the strategy for the smallest set $A$ such that $\ExpRew{\mdp_A}{\sigma_A} = \MinExpRewP{\mdp}$.
Since finding an \textsc{opd} strategy for a MDP is possible in polynomial time (cf. \cite{baier2008principles}), the problem \textbf{MPBP} can be solved in $O(2^{|\Act_\mdp|} \cdot \textit{poly(size(\mdp)))}$.
\begin{example}[ctnd.]
An \textsc{opd} strategy $\sigma$ for the MDP $\mdpline$ in \Cref{MDPprobs} with $p=1$ is given by $\sigma(s_0) = \sigma(s_1) = r$ and $\sigma(s_3) = \sigma(s_4) = \ell$.
Since this strategy maps to two different actions, two observations suffice for selecting an observation function $\obs$ such that $\MinExpRewPD{\mdpline\setObs{\obs}} = \MinExpRewPD{\mdpline} = \nicefrac{3}{2}$.
\end{example}
\subsection{Positional Randomized Strategies}\label{sec:randomised}
In the remainder of this section, we will remove the restriction to deterministic strategies, \ie we will study the optimal observability problem for positional and \emph{possibly randomized} strategies.
Our approach builds upon a typed extension of parameter synthesis techniques for Markov chains, which we briefly introduce first.
For a comprehensive overview of parameter synthesis techniques, we refer to~\cite{junges2020parameter,jansen2022parameter}. 

\smallskip
\noindent\textbf{Typed Parametric Markov Chains.}
A typed parametric Markov chain (tpMC) admits expressions instead of constants as transition probabilities.
We admit variables (also called \emph{parameters}) of different types in expressions.
The types $\mathbb{R}$ and $\mathbb{B}$ represent real-valued and $\{0,1\}$-valued variables, respectively.
We denote by $\mathbb{R}_{=C}$ (resp. $\mathbb{B}_{=C}$) a type for real-valued (resp. $\{0,1\}$-valued) variables such that the values of all variables of this type sum up to some fixed constant $C$.\footnote{We allow using multiple types with different names of this form. For example, $\mathbb{R}^{1}_{= 1}$ and $\mathbb{R}^{2}_{= 1}$ are types for two different sets of variables whose values must sum up to one.}
Furthermore, we denote by $V(T)$ the subset of $V$ consisting of all variables of type $T$.
Moreover, $\Poly{V}$ is the set of multivariate polynomials with rational coefficients over variables taken from $V$.
\begin{definition}[Typed Parametric Markov Chains] 
  A \emph{typed parametric Markov chain} is a tuple 
  $\pmc = (S, I, G, V, P, \rew)$, where $S$ is a finite set of \emph{states}, 
  $I \subseteq S$ is a set of \emph{initial states}, 
  $G \subseteq S$ is a set of \emph{goal states}, 
  $V$ is a finite set of typed variables, 
  $P\colon S \times S \rightarrow \Poly{V}$ is a \emph{parametric transition probability function}, and
  $\rew\colon S \rightarrow \mathbb{R}_{\geq 0}$ is a \emph{reward function}.
\end{definition}
An \emph{instantiation} of a tpMC $\pmc$ is a function $\iota\colon V_\pmc \to \mathbb{R}$ such that 
\begin{itemize}
	\item for all $x \in V_\pmc(\mathbb{B}) \cup V_\pmc(\mathbb{B}_{= C})$, we have $\iota(x) \in \{0,1\}$;
	\item for all $V_\pmc(\mathbb{D}_{= C}) = \{ x_1,\ldots,x_n \} \neq \emptyset$ with $\mathbb{D} \in \{\mathbb{B},\mathbb{R}\}$, we have $\sum_{i=1}^{n} \iota(x_i) = C$. 
\end{itemize}
Given a polynomial $q \in \Poly{V_{\pmc}}$, we denote by $\instance{q}{\iota}$ the real value obtained from replacing in $q$ every variable $x \in V_{\pmc}$ by $\iota(x)$.
We lift this notation to transition probability functions by setting $\instance{P_\pmc}{\iota}(s,s') = \instance{P_\pmc(s,s')}{\iota}$ for all states $s, s' \in S_\pmc$.
An instantiation $\iota$ is \emph{well-defined} if it yields a well-defined transition probability function, i.e. if $\sum_{s' \in S_\pmc} \instance{P_\pmc}{\iota}(s,s') = 1$ for all $s \in S_\pmc$.
Every well-defined instantiation $\iota$ induces a Markov chain $\instance{\pmc}{\iota} = (S_\pmc, I_\pmc, G_\pmc, \instance{P}{\iota}, \rew_\pmc)$.
We focus on the feasibility problem -- is there a well-defined instantiation satisfying a given property? -- for tpMCs, because of a closed connection to POMDPs.
\begin{definition}[Feasibility Problem for tpMCs]
  Given a tpMC $\pmc$ and a threshold $\tau \in \mathbb{Q}_{\geq 0}$, 
  does there exist a well-defined instantiation $\iota$ such that 
  $\ExpRew{\instance{\pmc}{\iota}}{} \leq \tau$.
\end{definition}
Junges~\cite{junges2020parameter} studied decision problems for parametric Markov chains (pMCs) over real-typed variables.
In particular, he showed that the feasibility problem for pMCs over real-typed variables is \textsc{etr}-complete. Here, ETR refers to the \emph{Existential Theory of Reals}, \ie all true sentences of the form $\exists x_1 \ldots \exists x_n . P(x_1,...,x_n)$, where $P$ is a quantifier-free first-order formula over (in)equalities between polynomials with real coefficients and free variables $x_1, \ldots, x_n$. The complexity class \textsc{etr} consists of all problems that can be reduced to the \textsc{etr} in polynomial time.
We extend this result to tpMCs.
\begin{lemma}\label{th:feasibility}
  The feasibility problem for tpMCs is \textsc{etr}-complete.
\end{lemma}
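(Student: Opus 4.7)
The proof has two parts, corresponding to the two directions of ETR-completeness. For ETR-hardness, I can import the established result of Junges~\cite{junges2020parameter} directly: a pMC over real-typed variables is a special case of a tpMC that uses only $\mathbb{R}$-typed parameters (together with the implicit constraint that every row of the transition matrix sums to one), so Junges' polynomial-time ETR-hard reduction already produces instances of tpMC feasibility. Hence tpMC feasibility is ETR-hard.

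For membership, my plan is to extend Junges' ETR encoding with polynomial-size gadgets for the additional types. Given a tpMC $\pmc$ and threshold $\tau$, I construct an ETR sentence $\varphi$ whose real variables comprise one $x_v$ per parameter $v \in V_\pmc$, one $r_s$ per state $s \in S_\pmc$ encoding the expected reward accumulated from $s$ until hitting $G_\pmc$, plus the auxiliary variables Junges uses to capture reachability and to pin down the correct solution of the Bellman system. On top of Junges' constraints (well-definedness $\sum_{s'} P_\pmc(s,s') = 1$ for every $s$, the reachability sub-system, the Bellman constraints on the $r_s$, and the threshold inequality $\tfrac{1}{|I_\pmc|}\sum_{s \in I_\pmc} r_s \le \tau$), I add the following typing gadgets. (a) For every $v \in V_\pmc(\mathbb{R})$, the bounds $0 \le x_v \le 1$. (b) For every $v \in V_\pmc(\mathbb{B})$, the polynomial constraint $x_v(x_v - 1) = 0$, which restricts $x_v$ to $\{0,1\}$. (c) For every non-empty $V_\pmc(\mathbb{D}_{=C}) = \{v_1,\ldots,v_n\}$ with $\mathbb{D} \in \{\mathbb{R},\mathbb{B}\}$, the linear equality $\sum_{i=1}^n x_{v_i} = C$, combined with the per-variable constraint $x_{v_i}(x_{v_i}-1)=0$ in the Boolean case.

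By construction, the valuations of the $x_v$'s that satisfy the typing gadgets are exactly the well-defined instantiations of $\pmc$ in the sense of the definition preceding the lemma, and the reward sub-encoding ensures that, for each such valuation, the $r_s$'s evaluate to the expected reward of $\instance{\pmc}{\iota}$. Hence $\varphi$ is satisfiable iff the feasibility instance is positive. The sentence $\varphi$ has $O(|V_\pmc| + |S_\pmc|)$ variables and $O(|S_\pmc|^2)$ polynomial (in)equalities whose degree is bounded by $1$ plus the maximal degree of the polynomials appearing in $P_\pmc$, so its size is polynomial in $\mathrm{size}(\pmc) + \mathrm{size}(\tau)$. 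This gives a polynomial-time many-one reduction to ETR.

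The main technical obstacle I anticipate is showing that the typing gadgets integrate cleanly with Junges' encoding of the expected reward, which has to represent a \emph{unique} real value (the true $\ExpRew{\instance{\pmc}{\iota}}{}$) rather than merely an arbitrary solution of the Bellman relations -- a subtlety that Junges resolves via the reachability sub-system that forces the $r_s$'s to coincide with the least non-negative fixed point. Since the new gadgets only constrain parameter variables and do not share variables with the reward sub-formula, their effect is limited to restricting the set of admissible instantiations; correctness of the reward encoding carries over unchanged, and the only additional check is that any valuation satisfying the typing gadgets does yield a well-defined transition probability function, which is immediate from (a)--(c) together with the well-definedness constraint.
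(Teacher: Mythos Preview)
Your proposal follows essentially the same route as the paper: hardness is inherited from Junges' result because pMCs are tpMCs, and membership is obtained by taking Junges' ETR encoding of pMC feasibility and conjoining polynomial-size typing constraints (the paper writes $y=0 \vee y=1$ where you write $x_v(x_v-1)=0$, and both add the linear sum constraint for the $\mathbb{D}_{=C}$ types). One small correction: your gadget~(a), the bound $0 \le x_v \le 1$ for plain $\mathbb{R}$-typed parameters, is not part of the paper's definition of an instantiation and is therefore over-restrictive---an $\mathbb{R}$-typed parameter may take any real value as long as the resulting transition matrix is stochastic, so drop that clause and rely on Junges' existing well-definedness constraints for the transition entries.
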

%
%
A proof is found in~\Cref{app:feasibility}.
Since \textsc{etr} lies between $\textsc{np}$ and $\textsc{pspace}$ (cf. \cite{DBLP:conf/stoc/Canny88}), decidability immediately follows:
\begin{theorem}\label{th:feasibility-pspace}
  The feasibility problem for tpMCs is decidable in \textsc{pspace}.
\end{theorem}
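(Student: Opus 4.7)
The plan is to derive Theorem \ref{th:feasibility-pspace} as an immediate corollary of Lemma \ref{th:feasibility} combined with Canny's classical result~\cite{DBLP:conf/stoc/Canny88} on the complexity of the existential theory of the reals. Since Lemma \ref{th:feasibility} already establishes that the feasibility problem for tpMCs lies in \textsc{etr} (in fact, is \textsc{etr}-complete), the only remaining ingredient is a known complexity upper bound for \textsc{etr}.

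First, I would recall the reduction direction provided by Lemma \ref{th:feasibility}: on input $(\pmc, \tau)$, one can construct in polynomial time an \textsc{etr}-sentence $\varphi_{\pmc,\tau}$ that is true iff there exists a well-defined instantiation $\iota$ of $\pmc$ with $\ExpRew{\instance{\pmc}{\iota}}{} \leq \tau$. The reduction already encodes the well-definedness constraints (including those arising from the sum-to-$C$ types $\mathbb{R}_{=C}$ and $\mathbb{B}_{=C}$, and the Boolean restrictions $\iota(x) \in \{0,1\}$) as polynomial equalities and inequalities, so $\varphi_{\pmc,\tau}$ is a genuine sentence in the first-order theory of the reals.

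Second, I would invoke Canny's theorem, which states that deciding sentences of \textsc{etr} can be done in \textsc{pspace}. Composing the polynomial-time reduction with Canny's \textsc{pspace} decision procedure yields a \textsc{pspace} algorithm for feasibility of tpMCs: the reduction is computable in polynomial time (hence polynomial space), and the intermediate \textsc{etr}-sentence can be produced on-the-fly or stored in polynomial space before being passed to the Canny procedure. Thus the overall composition remains in \textsc{pspace}, which proves the theorem.

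I do not expect any real obstacle here, since the statement is essentially a corollary; the only subtlety worth flagging is that Lemma \ref{th:feasibility} must really produce an \textsc{etr}-sentence (not a more expressive fragment), which is already ensured by restricting the encoding to polynomial (in)equalities over existentially quantified real variables. Hence no additional machinery beyond Lemma \ref{th:feasibility} and~\cite{DBLP:conf/stoc/Canny88} is needed.
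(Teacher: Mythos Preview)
Your proposal is correct and matches the paper's approach exactly: the paper derives \Cref{th:feasibility-pspace} as an immediate corollary of \Cref{th:feasibility} together with the inclusion $\textsc{etr} \subseteq \textsc{pspace}$ from~\cite{DBLP:conf/stoc/Canny88}. No additional argument is needed.
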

%
%
\smallskip
\noindent\textbf{Positional Optimal Observability via Parameter Synthesis.}
We are now ready to show that the optimal observability problem over positional strategies is decidable.
Formally, let $\StratP(\pomdp)$ denote the set of all positional strategies for $\pomdp$. The minimal expected reward over strategies in $\StratP(\pomdp)$ is then given by
\[
  \MinExpRewP{\pomdp} ~=~ \inf_{\sigma \in \StratP(\pomdp)} \ExpRew{\pomdp}{\sigma}.
\]
\begin{definition}[Positional Observability Problem (POP)]\label{pop}
  Given an MDP $\mdp$, a budget $B \in \mathbb{N}_{\geq 1}$, and a threshold $\tau \in \mathbb{Q}_{\geq 0}$,
  is there a function $\obs\colon S_\mdp \to \OO$ with $|O| \leq B$ such that $\MinExpRewP{\mdp\setObs{\obs}} \leq \tau$?
\end{definition}
To solve a \textbf{POP}-instance $(\mdp,B,\tau)$, we construct a tpMC $\pmc$ such that every well-defined instantiation corresponds to an induced Markov chain $\mdp\setObs{\obs}[\sigma]$ obtained by selecting an observation function $\obs\colon S_\mdp \to \{1,\ldots,B\} \uplus \{\obsGoal\}$ and a positional strategy $\sigma$.
Then the \textbf{POP}-instance $(\mdp,B,\tau)$ has a solution iff the feasibility problem for $(\pmc, \tau)$ has a solution, which is decidable by \Cref{th:feasibility-pspace}. 
Our construction of $\pmc$ is inspired by~\cite{junges2020parameter}.
The main idea is that a positional randomized POMDP strategy takes every action with some probability depending on the given observation.
Since the precise probabilities are unknown, we represent the probability of selecting action $\alpha$ given observation $o$ by a parameter $x_{o,\alpha}$. 
Those parameters must form a probability distribution for every observation $o$, i.e. they will be of type $\mathbb{R}^{o}_{=\,1}$.
In the transition probability function, we then pick each action with the probability given by the parameter for the action and the current observation.
To encode observation function $\obs$, we introduce a Boolean variable $y_{s,o}$ for every state $s$ and observation $o$ that evaluates to $1$ iff $\obs(s) = o$.
Formally, the tpMC $\pmc$ is constructed as follows:
\begin{definition}[Observation tpMC of an MDP]\label{obs}
For an MDP $\mdp$ and a budget $B \in \mathbb{N}_{\geq 1}$, the corresponding \emph{observation tpMC} $\mathcal{D}_{\mdp} = (S_\mdp, I_\mdp, G_\mdp, V, P, \rew_\mdp)$ is given by
\begin{align*}
  & O ~=~ \{1, \ldots, B\} 
  \hspace{36mm}
  V ~=~ \biguplus_{s \in S_\mdp \setminus G_\mdp} V(\mathbb{B}^{s}_{=1}) ~~\uplus~~ \biguplus_{o \in O} V(\mathbb{R}^o_{=\,1})
  \\
  & V(\mathbb{B}^{s}_{=1}) ~=~ \{  y_{s,o} \mid o \in O \} 
  \qquad\qquad
  V(\mathbb{R}^o_{=\,1}) ~=~ \{  x_{o,\alpha} \mid
   \alpha \in \Act_\mdp \}
  \\[0.25em]
   & 
   \hspace{3cm}P(s,s') ~=~ 
  	\sum\limits_{\alpha \in \Act_\mdp}  \sum\limits_{o \in O} y_{s,o} \cdot x_{o,\alpha} \cdot P_\mdp(s,\alpha)(s'),
\end{align*}
where, to avoid case distinctions, we define $y_{s,o}$ as the constant $1$ for all $s \in G_\mdp$.
\end{definition}
Our construction is sound in the sense that every Markov chain obtained from an MDP $\mdp$ by selecting an observation function and an observation-based positional strategy corresponds to a well-defined instantiation of the observation tpMC of $\mdp$.
\begin{restatable}{lemma}{secondlemma}\label{thm:tpmc-sound}
  Let $\mdp$ be an MDP and $\pmc$ the observation tpMC of $\mdp$ for budget $B \in \mathbb{N}_{\geq 1}$.
  Moreover, let $O = \{1,\ldots,B\}$.
  Then, the following sets are identical:
  \[
    \{ \mdp\setObs{\obs}[\sigma] ~|~ \obs\colon S_\mdp \to \OO, \sigma \in \StratP(\mdp\setObs{\obs})  \}
    ~=~
    \{ \pmc[\iota] ~|~ \iota\colon V_{\pmc_\mdp}\to\mathbb{R}~\text{well-defined} \}
  \]
\end{restatable}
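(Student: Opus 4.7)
The plan is to prove both set inclusions by providing explicit translations between well-defined instantiations of $\pmc$ and pairs $(\obs,\sigma)$ consisting of an observation function and a positional observation-based strategy.

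For the inclusion from left to right, I would fix an observation function $\obs\colon S_\mdp \to \OO$ with $O = \{1,\ldots,B\}$ and a positional strategy $\sigma \in \StratP(\mdp\setObs{\obs})$, and define an instantiation $\iota$ by setting $\iota(y_{s,o}) = 1$ if $\obs(s) = o$ and $\iota(y_{s,o}) = 0$ otherwise (for each non-goal $s$), and $\iota(x_{o,\alpha}) = \sigma(s)(\alpha)$ for an arbitrary representative $s$ with $\obs(s) = o$ (choosing any distribution if no such $s$ exists). The choice of representative is immaterial because $\sigma$ is observation-based. Well-definedness then follows from two easy checks: $\sum_{o \in O} \iota(y_{s,o}) = 1$ since exactly one $o$ equals $\obs(s)$, and $\sum_\alpha \iota(x_{o,\alpha}) = 1$ since $\sigma$ returns probability distributions.

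For the reverse inclusion, given a well-defined instantiation $\iota$, the typing constraints on $V(\mathbb{B}^s_{=1})$ force, for each non-goal state $s$, a unique observation $o_s \in O$ with $\iota(y_{s,o_s}) = 1$. I would then define $\obs(s) = o_s$ for non-goal states, $\obs(s) = \obsGoal$ for $s \in G_\mdp$, and $\sigma(s)(\alpha) = \iota(x_{\obs(s),\alpha})$. The latter is a probability distribution by the $\mathbb{R}^o_{=\,1}$-typing and is observation-based by construction, since its value depends on $s$ only through $\obs(s)$.

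The only remaining (routine) step is to verify that the transition probability functions of $\mdp\setObs{\obs}[\sigma]$ and $\instance{\pmc}{\iota}$ coincide in both directions, which is where the $y_{s,o}=1$ convention for goal states becomes important so that goal states need no separate treatment. Combining \Cref{def:imc} with \Cref{obs}, the sum
\[
  \sum_{\alpha \in \Act_\mdp} \sum_{o \in O} \iota(y_{s,o}) \cdot \iota(x_{o,\alpha}) \cdot P_\mdp(s,\alpha)(s')
\]
collapses, because only $o = \obs(s)$ contributes, to $\sum_\alpha \sigma(s)(\alpha) \cdot P_\mdp(s,\alpha)(s')$, which is precisely $P_{\mdp\setObs{\obs}[\sigma]}(s,s')$. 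Since the state sets, initial and goal states, and reward functions agree by construction, the two induced Markov chains are identical, establishing both inclusions. The main subtlety, rather than an obstacle, is bookkeeping around unused observations in the left-to-right direction and the uniform treatment of goal states via the $y_{s,o}=1$ convention.
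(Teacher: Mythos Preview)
Your argument is correct and self-contained: you build the instantiation directly from $(\obs,\sigma)$, check the type constraints, and then collapse the double sum in the parametric transition function to recover the induced Markov chain; the reverse direction reads off $\obs$ from the Boolean block and $\sigma$ from the real block. This is a genuinely different route from the paper's proof. The paper does not verify the transition-function identity by hand. Instead, it splits any well-defined instantiation as $\iota = \iota_y \cup \iota_x$, observes that the partially instantiated chain $\pmc[\iota_y]$ is exactly the (untyped) pMC associated with the POMDP $\mdp\setObs{\obs}$ in the sense of Junges, and then invokes Junges' existing one-to-one correspondence between positional POMDP strategies and well-defined instantiations of that pMC to handle the $\iota_x$ part. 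In other words, the paper reduces the lemma to a known black box, whereas you reprove that black box inline for this particular construction.

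What each approach buys: the paper's factorisation is shorter and makes explicit that the only new content over Junges' result is the Boolean encoding of $\obs$; your direct computation is more elementary, avoids the external dependency, and makes transparent exactly where the $\mathbb{B}^s_{=1}$ constraint forces a unique observation per state and where the $\mathbb{R}^o_{=1}$ constraint yields a distribution. Your handling of unused observations (picking an arbitrary distribution for $x_{o,\cdot}$ when $\obs^{-1}(o)=\emptyset$) is a detail the paper's proof leaves implicit in the cited proposition, so in that respect your write-up is slightly more careful.
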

\begin{proof}
	Intuitively, the values of $y_{s,o}$ determine the observation function $\obs$ and the values of $x_{s,\alpha}$ determine the positional strategy. See \Cref{app:tpmc-sound} for details.
  \qed
\end{proof}
Put together, \Cref{thm:tpmc-sound} and \Cref{th:feasibility-pspace} yield a decision procedure for the positional observability problem: Given a \textbf{POP}-instance ($\mdp,B,\tau)$, construct the observation tpMC $\pmc$ of $\mdp$ for budget $B$.
By \Cref{th:feasibility-pspace}, it is decidable in \textsc{etr}
 whether there exists a well-defined instantiation $\iota$ such that $\ExpRew{\pmc[\iota]}{} \leq \tau$, which, by \Cref{thm:tpmc-sound}, holds iff there exists an observation function $\obs\colon S \to \{1,\ldots B\} \uplus \{\obsGoal\}$ and a positional strategy $\sigma \in \StratP(\mdp\setObs{\obs})$ such that $\MinExpRewP{\mdp\setObs{\obs}} \leq \ExpRew{\mdp\setObs{\obs}}{\sigma} \leq \tau$. Hence,
\begin{theorem}\label{th:pop-problem}
The positional observability problem \textbf{POP} is decidable in \textsc{etr}.
\end{theorem}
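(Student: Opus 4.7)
The plan is to reduce \textbf{POP} in polynomial time to the feasibility problem for tpMCs, then invoke \Cref{th:feasibility} to obtain the \textsc{etr} bound. Given a \textbf{POP}-instance $(\mdp, B, \tau)$, I would first construct the observation tpMC $\pmc_\mdp$ for budget $B$ as in \Cref{obs}. This construction is polynomial in the input: the variable set $V$ contains $|S_\mdp \setminus G_\mdp|\cdot B$ Boolean parameters and $B \cdot |\Act_\mdp|$ real-valued parameters, while each polynomial $P(s,s')$ is a sum of at most $B \cdot |\Act_\mdp|$ quadratic monomials of the form $y_{s,o}\cdot x_{o,\alpha}\cdot P_\mdp(s,\alpha)(s')$.

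Next I would verify that $(\mdp, B, \tau)$ is a positive instance of \textbf{POP} iff $(\pmc_\mdp, \tau)$ is a positive instance of tpMC feasibility. For the forward direction, given an observation function $\obs\colon S_\mdp \to \OO$ with $|O|\leq B$ and a positional strategy $\sigma \in \StratP(\mdp\setObs{\obs})$ witnessing $\MinExpRewP{\mdp\setObs{\obs}} \leq \ExpRew{\mdp\setObs{\obs}}{\sigma} \leq \tau$, \Cref{thm:tpmc-sound} yields a well-defined instantiation $\iota$ with $\pmc_\mdp[\iota] = \mdp\setObs{\obs}[\sigma]$, so that $\ExpRew{\pmc_\mdp[\iota]}{} \leq \tau$. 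The converse direction is symmetric, using the other inclusion of \Cref{thm:tpmc-sound} to recover a suitable pair $(\obs,\sigma)$ from any feasible $\iota$. Finally, \Cref{th:feasibility} places tpMC feasibility in \textsc{etr}; composing the polynomial reduction with that decision procedure gives \textsc{etr}-decidability of \textbf{POP}.

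The main obstacle has already been discharged into \Cref{thm:tpmc-sound} and \Cref{th:feasibility}: the former establishes the correspondence between $(\obs,\sigma)$ pairs and well-defined instantiations, and the latter extends the \textsc{etr}-completeness of pMC feasibility to the typed setting with sum-type constraints. Consequently, the residual work is essentially bookkeeping — unfolding the definition of $P_{\pmc_\mdp}(s,s')$ and checking that, for the instantiation induced by $(\obs,\sigma)$, the inner sum $\sum_{\alpha, o} y_{s,o}\cdot x_{o,\alpha}\cdot P_\mdp(s,\alpha)(s')$ collapses to $\sum_{\alpha} P_\mdp(s,\alpha)(s') \cdot \sigma(s)(\alpha)$, which matches $P_{\mdp\setObs{\obs}[\sigma]}(s,s')$ by \Cref{def:imc}. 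The only mildly delicate point is ensuring that every well-defined $\iota$ arises from some \emph{observation-based} strategy; this is automatic because the real-valued parameters $x_{o,\alpha}$ are indexed by observations (not states), so the induced positional strategy $\sigma(s)(\alpha) = \sum_{o} y_{s,o} \cdot x_{o,\alpha}$ depends on $s$ only through $\obs(s)$.
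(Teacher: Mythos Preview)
Your proposal is correct and follows essentially the same route as the paper: construct the observation tpMC of $\mdp$ for budget $B$ (\Cref{obs}), invoke \Cref{thm:tpmc-sound} for the bijection between well-defined instantiations and pairs $(\obs,\sigma)$, and conclude via \Cref{th:feasibility} that feasibility---and hence \textbf{POP}---lies in \textsc{etr}. The additional bookkeeping you sketch (polynomial size of $\pmc_\mdp$, unfolding $P_{\pmc_\mdp}$ to match \Cref{def:imc}, and observation-basedness of the recovered strategy) is exactly the content of the proof of \Cref{thm:tpmc-sound} in the appendix, so you are not adding a new ingredient but merely reiterating why that lemma holds.
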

In fact, \textbf{POP} is \textsc{etr}-complete because the policy-existence problem for POMDPs is \textsc{etr}-complete when restricted to positional strategies~\cite[Theorem 7.7]{junges2020parameter}. The hardness proof is similar to the reduction in \Cref{sec:undecidabliity}. Details are found in \Cref{app:hardness-remark}.
\begin{example}[ctnd.] \label{ex:bellman}
\Cref{figure:b} depicts the observation tpMC of the MDP $\mdpline$ in \Cref{MDPprobs} for $p = 1$ and budget $B = 2$. 
The Boolean variable $y_{s,o}$ is true if we observe $o$ for state $s$.
Moreover, $x_{o,\alpha}$ represents the rate of choosing action $\alpha$ when $o$ is been observed. 
As is standard for Markov models~\cite{DBLP:books/wi/Puterman94}, including parametric ones~\cite{junges2020parameter}, the expected reward can be expressed as a set of recursive Bellman equations (parametric in our case). For the present example those equations yield the following \textsc{etr} constraints:  
\[
\begin{array}{rcl}
r_0 & = & 1 + (y_{s_0,o_1} \cdot x_{o_1, \ell} + y_{s_0,o_2} \cdot x_{o_2, \ell}) \cdot r_0 
              + (y_{s_0,o_1} \cdot x_{o_1, r} + y_{s_0,o_2} \cdot x_{o_2, r}) \cdot r_1 \\
r_1 & = & 1 + (y_{s_1,o_1} \cdot x_{o_1, \ell} + y_{s_1,o_2} \cdot x_{o_2, \ell}) \cdot r_0 
              + (y_{s_1,o_1} \cdot x_{o_1, r} + y_{s_1,o_2} \cdot x_{o_2, r}) \cdot r_2 \\
r_2 & = & 0 \\
r_3 & = & 1 + (y_{s_3,o_1} \cdot x_{o_1, \ell} + y_{s_3,o_2} \cdot x_{o_2, \ell}) \cdot r_2 
              + (y_{s_3,o_1} \cdot x_{o_1, r} + y_{s_3,o_2} \cdot x_{o_2, r}) \cdot r_4 \\
r_4 & = & 1 + (y_{s_4,o_1} \cdot x_{o_1, \ell} + y_{s_4,o_2} \cdot x_{o_2, \ell}) \cdot r_3 
              + (y_{s_4,o_1} \cdot x_{o_1, r} + y_{s_4,o_2} \cdot x_{o_2, r}) \cdot r_4 \\
\tau & \geq & \frac{1}{4} \cdot (r_0 + r_1 + r_3 + r_4) 
\end{array}
\]
\noindent where $r_i$ is the expected reward for paths starting at $s_i$, i.e. $r_i = \sum_{\pi \in \Paths{\mdpline} \mid \pi[0]=s_i} \Prob{\mdpline}{\sigma}{\pi} \cdot \Rew{\mdpline}{\pi}$. Note that $\ExpRew{\mdpline}{\sigma} =  \frac{1}{4} \cdot (r_0 + r_1 + r_3 + r_4)$ for the strategy $\sigma $ defined by the parameters $x_{o,\alpha}$. 
\end{example}
\begin{figure}[t]
\adjustbox{max width=\textwidth}{
\begin{tikzpicture}[->,>=stealth',shorten >=1pt,auto,node distance=3.0cm,
                    semithick]
  \tikzstyle{every state}=[fill=white,draw=black,text=black]

  \node[state,inner sep=2pt,minimum size=1pt]         (S0)                   {$s_0$};
  \node[state,inner sep=2pt,minimum size=1pt]         (S1) [right of=S0]     {$s_1$};
  \node[state,inner sep=2pt,minimum size=1pt][fill=green!20]          (S2) [right of=S1]     {$s_2$};
  \node[state,inner sep=2pt,minimum size=1pt]         (S3) [right of=S2]     {$s_3$};
  \node[state,inner sep=2pt,minimum size=1pt]         (S4) [right of=S3]     {$s_4$};

  \path (S0) edge   [bend left=20]           node [above][align=right]{\hspace{0.4cm}${\color{white}+}y_{s_0,o_1} \cdot x_{o_1, r}$\\$+y_{s_0,o_2} \cdot x_{o_2, r}$} (S1)
             edge   [loop above]             node [above][align=center] {${\color{white}+}\ y_{s_0,o_1} \cdot x_{o_1, \ell}$\\$+\ y_{s_0,o_2} \cdot x_{o_2, \ell}$ } (S0)
        (S1) edge                            node [above][align=center] {${\color{white}+}y_{s_1,o_1} \cdot x_{o_1, r}$\\$+y_{s_1,o_2} \cdot x_{o_2, r}$ } (S2)
             edge   [bend left=20]           node [below][align=center] {${\color{white}+}y_{s_1,o_1} \cdot x_{o_1, \ell}$\\$  +y_{s_1,o_2} \cdot x_{o_2, \ell}$ } (S0)
        (S2) edge   [loop above]             node [above]{$1$} (S2)
        (S3) edge   [bend left=20]           node [above][align=center] {\hspace{-0.3cm}${\color{white}+}y_{s_3,o_1} \cdot x_{o_1, r}$\\ \hspace{-0.3cm}$ +y_{s_3,o_2} \cdot x_{o_2, r}$ } (S4)
             edge                            node [above][align=center] {${\color{white}+}y_{s_3,o_1} \cdot x_{o_1, \ell}$\\$ +y_{s_3,o_2} \cdot x_{o_2, \ell}$ } (S2)
        (S4) edge   [bend left=20]           node [below][align=center] {${\color{white}+}y_{s_4,o_1} \cdot x_{o_1, \ell}$\\$ +y_{s_4,o_2} \cdot x_{o_2, \ell}$ } (S3)
             edge   [loop above]             node [above right=0cm and -0.8cm][align=center] {${\color{white}+}y_{s_4,o_1} \cdot x_{o_1, r}$\\$ +y_{s_4,o_2} \cdot x_{o_2, r}$ } (S4);
\end{tikzpicture}
}
  \caption{Observation tpMC for the MDP $\mdpline$ in~\Cref{MDPprobs} with $p=1$ and budget 2.}
  \label{figure:b}	
\end{figure}

\subsubsection{Sensor Selection Problem}\label{SSP}
We finally consider a variant of the positional observability problem in which  
observations can only be made through a fixed set of location sensors that can be turned on or off for every state. 
In this scenario, a POMDP can either observe its position (i.e. the current state) or nothing at all (represented by $\bot$).\footnote{We provide a generalized version for multiple sensors per state in~\Cref{sec:generalSSP}}
Formally, we consider \emph{location POMDPs} $\pomdp$ with observations $O_\pomdp = D \uplus \{ \bot \}$, where $D  \subseteq \{ @s \mid s \in (S_\pomdp \setminus G_\pomdp) \}$ are the observable locations and the observation function is
\begin{align*}
  \obs_\pomdp(s) ~=~ 
  \begin{cases}
  	@s,   ~\text{if}~ @s \in D \\
  	\obsGoal, & ~\text{if}~ s \in G_\pomdp \\
  	\bot, & ~\text{if}~ @s \notin D ~\text{and}~ s \notin G_\pomdp .
  \end{cases}	
\end{align*}
\begin{example}[ctnd.]
  Consider the MDP $\mdpline$ with $p=1$ and location sensors assigned as in \Cref{MDPprobs}. With a budget of $2$ we can only select 2 of the 4 location sensors. For example, we can turn on the sensors on one side, say $@s_0$, $@s_1$.
    The observation function is then given by $\obs(s_0)=@s_1$, $\obs(s_1)=@s_2$, and $\obs(s_3)=\obs(s_4) = \bot$.
    This is an optimal sensor selection as it reveals whether one is located left or right of the goal. 
\end{example}
The \emph{sensor selection problem} aims at turning an MDP into a location POMDP with a limited number of observations such that the expected reward stays below a threshold.
\begin{definition}[Sensor Selection Problem (SSP)]\label{def:fixedoo-problem} Given an MDP $\mdp$, a budget $B \in \mathbb{N}_{\geq 1}$, and $\tau \in \mathbb{Q}_{\geq 0}$,
  is there an observation function $\obs\colon S_\mdp \to \OO$ with $|O| \leq B$ such that $\pomdp = (\mdp,O,\obs)$ is a location POMDP and  $\MinExpRewP{\pomdp} \leq \tau$? 
\end{definition}
To solve the \textbf{SSP}, we construct a tpMC similar to \Cref{obs}.
The main difference is that we use a Boolean variable $y_i$ to model whether the location sensor $@s_i$ is on ($1$) or off ($0$).
Moreover, we require that at most $B$ sensors are turned on.
\begin{definition}[Location tpMC of an MDP]\label{predef}
For an MDP $\mdp$ and a budget $B \in \mathbb{N}_{\geq 1}$, the corresponding \emph{location tpMC} $\mathcal{D}_{\mdp} = (S_\mdp, I_\mdp, G_\mdp, V, P, \rew_\mdp)$ is given by
\begin{align*}
  V = V(\mathbb{B}_{= B}) \uplus \biguplus_{o \in O} V(\mathbb{R}^o_{=\,1})
  \quad  
  V(\mathbb{B}_{= B}) = \{  y_{s} \mid s \in S_\mdp \setminus G_\mdp \} 
  \quad 
  V(\mathbb{R}^o_{=\,1}) = \{  x_{s,\alpha} \mid \alpha \in \Act_\mdp \}
  \\
   P(s,s') ~=~ \sum\limits_{\alpha \in \Act}  
   y_s \cdot x_{s,\alpha}\cdot P(s,\alpha)(s') 
   +
   (1-y_s) \cdot x_{\bot,\alpha}\cdot P(s,\alpha)(s'),
   \hspace{3em}
\end{align*}
where, to avoid case distinctions, we define $y_s$ as the constant $1$ for all $s \in G_\mdp$.
\end{definition}
Analogously, to \Cref{thm:tpmc-sound} and \Cref{def:col-oops-problem}, soundness of the above construction then yields a decision procedure in \textsc{pspace} for the sensor selection problem (see \Cref{app:location-tpmc-sound}).

\begin{restatable}{lemma}{thirdlemma}\label{thm:location-tpmc-sound}
  Let $\mdp$ be an MDP and $\pmc$ the location tpMC of $\mdp$ for budget $B \in \mathbb{N}_{\geq 1}$.
  Moreover, let $\mathit{LocObs}$ be the set of observation functions $\obs\colon S_\mdp \to \OO$ such that $\mdp\setObs{\obs}$ is a location MDP.
  Then, the following sets are identical:
  \[
    \{ \mdp\setObs{\obs}[\sigma] ~|~ \obs \in \mathit{LocObs},~\sigma \in \StratP(\mdp\setObs{\obs})  \}
    ~=~
    \{ \pmc[\iota] ~|~ \iota\colon V_{\pmc_\mdp}\to\mathbb{R}~\text{well-defined} \}
  \]
\end{restatable}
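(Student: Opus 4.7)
The plan is to mirror the proof of \Cref{thm:tpmc-sound} by exhibiting explicit, mutually inverse encodings between pairs $(\obs, \sigma)$ on the left-hand side and well-defined instantiations $\iota$ on the right. The two ingredients to encode are the observation function (carried by the Boolean variables $y_s$, where $y_s = 1$ iff the sensor at $s$ is active) and the positional strategy (carried by the real-valued variables $x_{s,\alpha}$ for observable locations and $x_{\bot,\alpha}$ for the shared blind observation).

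For the $\subseteq$ direction, starting from $\obs \in \mathit{LocObs}$ and $\sigma \in \StratP(\mdp\setObs{\obs})$, I would set $\iota(y_s) = 1$ iff $\obs(s) = @s$; put $\iota(x_{s,\alpha}) = \sigma(s)(\alpha)$ for every state with an active sensor; and put $\iota(x_{\bot,\alpha}) = \sigma(s_\bot)(\alpha)$ for an arbitrary state $s_\bot$ with $\obs(s_\bot) = \bot$. The last choice is unambiguous precisely because $\sigma$ is observation-based: all $\bot$-observed states prescribe the same action distribution. Well-definedness of $\iota$ then follows routinely: the $y_s$ are Boolean-valued and sum to the number of active sensors (matching $B$ by the constraint on $\mathit{LocObs}$ discussed below), and each family $\{x_{o,\alpha}\}_\alpha$ forms a probability distribution because $\sigma$ does. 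To establish $\pmc[\iota] = \mdp\setObs{\obs}[\sigma]$, I would evaluate the parametric transition $P_\pmc(s,s')$ from \Cref{predef} under $\iota$ by a case split on $\obs(s)$: when $\obs(s)=@s$ the factor $y_s = 1$ kills the $(1-y_s)$ summand; when $\obs(s)=\bot$ the opposite happens and the shared distribution $x_{\bot,\alpha}$ is used. In both cases the expression collapses to $\sum_\alpha \sigma(s)(\alpha) \cdot P_\mdp(s,\alpha)(s')$, which is exactly the transition probability function of the induced Markov chain as given in \Cref{def:imc}.

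For the $\supseteq$ direction, I would invert the above encoding: given a well-defined $\iota$, read off $\obs$ by setting $\obs(s) = @s$ when $\iota(y_s)=1$, $\obs(s) = \obsGoal$ on goal states, and $\obs(s) = \bot$ otherwise, and define $\sigma(s)(\alpha)$ to equal $\iota(x_{s,\alpha})$ or $\iota(x_{\bot,\alpha})$ according to $\iota(y_s)$. By construction $\sigma(s)$ depends only on $\obs(s)$, so $\sigma$ is a positional observation-based strategy, and the same case split as above yields the induced-chain equality. The main obstacle I anticipate is the bookkeeping around the type $\mathbb{B}_{=B}$: since this type forces exactly $B$ of the $y_s$ to be $1$, I must argue that restricting $\mathit{LocObs}$ to functions with exactly $B$ active sensors does not shrink the left-hand set of induced Markov chains. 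This follows from a padding argument: turning on an additional, previously inactive sensor at some state $s^\star$ with $\obs(s^\star)=\bot$ and letting the new $\sigma$ at $s^\star$ copy the distribution it previously used under observation $\bot$ leaves the induced Markov chain --- and hence its expected reward --- unchanged. With that normalization in place, the forward and backward encodings are manifestly inverse and the stated set equality follows.
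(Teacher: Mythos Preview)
Your proposal is correct and follows essentially the same two-direction encoding as the paper's proof: Boolean variables $y_s$ carry the observation function, the real-valued $x_{\cdot,\alpha}$ carry the strategy, and the transition function collapses via a case split on whether $y_s=1$ or $y_s=0$. The differences are in packaging, not in substance. The paper first instantiates only the $y$-variables, observes that the resulting partially-instantiated chain coincides with the pMC of \Cref{correspondingPMC}, and then delegates the strategy-to-instantiation correspondence to Junges' \Cref{pmcpomdpEq} as a black box; you instead construct $\iota$ on both variable families at once and verify the induced-chain equality by hand, which is a bit more elementary but amounts to re-proving the special case of that proposition you need. Your explicit discussion of the $\mathbb{B}_{=B}$ constraint (exactly $B$ active sensors versus at most $B$) and the padding argument to normalize the left-hand side is a point of care the paper's proof does not spell out.
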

%
%
\begin{theorem}\label{def:col-oops-problem}
The sensor selection problem \textbf{SSP} is decidable in \textsc{etr}, and thus in \textsc{pspace}.
\end{theorem}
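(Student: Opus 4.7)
The plan is to mirror the proof of \Cref{th:pop-problem} but using the location tpMC from \Cref{predef} and \Cref{thm:location-tpmc-sound} in place of the observation tpMC and \Cref{thm:tpmc-sound}. Given an \textbf{SSP}-instance $(\mdp,B,\tau)$, I would first construct the location tpMC $\pmc$ of $\mdp$ for budget $B$. The construction is clearly polynomial in $|S_\mdp| + |\Act_\mdp| + B$: the parameter set consists of $|S_\mdp \setminus G_\mdp|$ Boolean sensor variables of type $\mathbb{B}_{=B}$ and $|\Act_\mdp| \cdot (|S_\mdp \setminus G_\mdp| + 1)$ real-valued strategy variables (one distribution per location and one for $\bot$), and each transition polynomial has size linear in $|\Act_\mdp|$.

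Next, I would invoke \Cref{thm:location-tpmc-sound} to transfer the existential question. The lemma provides a bijective correspondence between well-defined instantiations of $\pmc$ and pairs $(\obs,\sigma)$ where $\obs$ induces a location POMDP with the prescribed budget and $\sigma$ is a positional strategy of $\mdp\setObs{\obs}$. Consequently,
\[
  \exists \obs\ \bigl( \mdp\setObs{\obs}~\text{is a location POMDP with}~|O| \leq B ~\text{and}~ \MinExpRewP{\mdp\setObs{\obs}} \leq \tau \bigr)
\]
holds iff there exists a well-defined instantiation $\iota$ of $\pmc$ with $\ExpRew{\pmc[\iota]}{} \leq \tau$, i.e.\ iff $(\pmc,\tau)$ is a positive instance of the tpMC feasibility problem. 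Applying \Cref{th:feasibility-pspace} (and the inclusion \textsc{etr} $\subseteq$ \textsc{pspace}) then yields the decision procedure in the claimed complexity class.

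One subtlety to address is the mismatch between the \emph{exactly} $B$ active sensors enforced by the type $\mathbb{B}_{=B}$ and the \emph{at most} $B$ observations requested in \Cref{def:fixedoo-problem}. I would handle this by first reducing to the case $B \le |S_\mdp \setminus G_\mdp|$ (otherwise one trivially has full observability and can compute $\MinExpRewP{\mdp}$ directly from the underlying MDP) and then arguing that refining the observation function can only enlarge $\StratP(\mdp\setObs{\obs})$ and hence only lower $\MinExpRewP{\mdp\setObs{\obs}}$; thus any witness with $k < B$ active sensors can be padded to one with exactly $B$ active sensors without breaking the threshold. This monotonicity argument, together with \Cref{thm:location-tpmc-sound} and \Cref{th:feasibility-pspace}, completes the proof. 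The main obstacle is not technical depth but making sure this budget-padding step is stated carefully, since everything else reduces to reusing the tpMC feasibility machinery already established.
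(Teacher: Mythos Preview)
Your proposal is correct and follows essentially the same route as the paper: construct the location tpMC from \Cref{predef}, invoke \Cref{thm:location-tpmc-sound} to reduce \textbf{SSP} to tpMC feasibility, and conclude via \Cref{th:feasibility-pspace}. Your explicit treatment of the ``exactly $B$'' versus ``at most $B$'' discrepancy (by monotonicity of $\MinExpRewP{\cdot}$ under observation refinement and padding) is a point the paper glosses over, so your argument is in fact slightly more careful here.
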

\begin{example}
\Cref{figure:c} shows the location tpMC of the location POMDP in \Cref{MDPprobs} for $p=1$ and budget $2$.
The Boolean variable $y_{s}$ indicates if the sensor $@ s$ is be turned on, while the variables $x_{s,\alpha}$ indicates the rate of choosing action $\alpha$ if sensor $@ s$ is turned on; otherwise, \ie if sensor $@ s$ is turned off, $x_{\bot,\alpha}$ is used, which is the rate of choosing action $\alpha$ for unknown locations.  
\end{example}
\begin{figure}[t]
\adjustbox{max width=\textwidth}{
	\begin{tikzpicture}[->,>=stealth',shorten >=1pt,auto,node distance=3cm,
                    semithick]
  \tikzstyle{every state}=[fill=white,draw=black,text=black]

  \node[state,inner sep=2pt,minimum size=1pt]         (S0)                   {$s_0$};
  \node[state,inner sep=2pt,minimum size=1pt]         (S1) [right of=S0]     {$s_1$};
  \node[state,inner sep=2pt,minimum size=1pt] [fill=green!20]         (S2) [right of=S1]     {$s_2$};
  \node[state,inner sep=2pt,minimum size=1pt]         (S3) [right of=S2]     {$s_3$};
  \node[state,inner sep=2pt,minimum size=1pt]         (S4) [right of=S3]     {$s_4$};

  \path (S0) edge   [bend left=20]           node [above][align=right]{\hspace{1.6cm}\footnotesize${\color{white}+}y_{0} \cdot x_{0, r}$\\ \footnotesize$ +  (1 - y_0) \cdot x_{\bot,r}$} (S1)
             edge   [loop above]             node [above][align=center] {\footnotesize${\color{white}+}y_{0} \cdot x_{0, l} $\\ \footnotesize$ +  (1 - y_0) \cdot x_{\bot,l}$ } (S0)
        (S1) edge                            node[above][align=center] {\footnotesize ${\color{white}+}y_{1} \cdot x_{1, r} $\\ \footnotesize$+  (1 - y_1) \cdot x_{\bot,r}$} (S2)
             edge   [bend left=20]           node [below][align=center]{\footnotesize${\color{white}+}y_{1} \cdot x_{1, l} $\\ \footnotesize$ +  (1 - y_1) \cdot x_{\bot,l}$} (S0)
        (S2) edge   [loop above]             node [above]{} (S2)
        (S3) edge   [bend left=20]           node [above][align=center]  {\footnotesize${\color{white}+}y_{3} \cdot x_{3, r}  $\\ \footnotesize$+  (1 - y_3) \cdot x_{\bot,r}{\color{white}+}$} (S4)
             edge                            node [above][align=center]{\footnotesize ${\color{white}+}y_{3} \cdot x_{3, l}  $\\ \footnotesize$+  (1 - y_3) \cdot x_{\bot,l}$ } (S2)
        (S4) edge   [bend left=20]           node [below][align=center] {\footnotesize${\color{white}+}y_{4} \cdot x_{4, l}  $\\ \footnotesize$+  (1 - y_4) \cdot x_{\bot,l}$ } (S3)
             edge   [loop above]             node [above right=0cm and -0.8cm][align=center]{\footnotesize${\color{white}+}y_{4} \cdot x_{4, r}  $\\ \footnotesize$+  (1 - y_4) \cdot x_{\bot,r}$} (S4);
\node[below=0.15cm of S0] (s1) {$@ s_0$};
  \node[below=0.15cm of S1] (s2) {$@ s_1$};
  \node[below=0.15cm of S3] (s3) {$@ s_3$};
  \node[below=0.15cm of S4] (s4) {$@ s_4$};

\end{tikzpicture}
}
\caption{Location tpMC for the location POMDP in~\Cref{MDPprobs} with $p=1$ and budget $2$.}
\label{figure:c}
\end{figure}

\section{Implementation and Experimental Evaluation}\label{eval}

Our approaches for solving the optimal observability problem and its variants fall into two categories: (a) parameter synthesis (cf. \Cref{sec:randomised}) and (b) brute-force enumeration of observation functions combined with probabilistic model checking (cf. \Cref{th:np}).
In this section, we evaluate the feasibility of both approaches.
Regarding approach (a), we argue in \Cref{eval:tools} why existing parameter synthesis tools cannot be applied out-of-the-box to the optimal observability problem.
Instead, we performed SMT-backed experiments based on direct \textsc{etr}-encodings (see \Cref{th:pop-problem,def:col-oops-problem}); the implementation and experimental setup is described in \Cref{eval:setup}.
\Cref{eval:results} presents experimental results using our \textsc{etr}-encodings for approach (a) and, for comparison, an implementation of approach (b) using the probabilistic model checker \textsc{prism}~\cite{prism}.

\subsection{Solving Optimal Observability Problems with Parameter Synthesis Tools}
\label{eval:tools}

Existing tools that can solve parameter synthesis problems for Markov models, such as \textsc{param}~\cite{DBLP:journals/sttt/HahnHZ11}, \textsc{prophesy}~\cite{dehnert2015prophesy,DBLP:conf/atva/CubuktepeJJKT18,jansen2022parameter}, and \textsc{storm}~\cite{storm}, are, to the best of our knowledge, restricted to (1) \emph{real-valued parameters} and (2) \emph{graph-preserving} models.
Restriction (1) means that they do not support \emph{typed} parametric Markov chains, which are needed to model the search for an observation function and budget constraints.
Restriction (2) means that the choice of synthesized parameter values may not affect the graph structure of the considered Markov chain. For example, it is not allowed to set the probability of a transition to zero, which effectively means that the transition is removed.
While the restriction to graph-preserving models is sensible for performance reasons, it rules out Boolean-typed variables, which we require in our tpMC-encodings of the positional observability problem (\Cref{pop}) and the sensor selection problem (\Cref{def:fixedoo-problem}). 
For example, the tpMCs in \Cref{figure:b} and \Cref{figure:c}, which resulted from our running example, are \emph{not} graph-preserving models.
It remains an open problem whether the same efficient techniques developed for parameter synthesis of graph-preserving models can be applied to typed parametric Markov chains.
It is also worth mentioning that for both \textbf{POP} and \textbf{SSP} the typed extension for pMCs is not strictly necessary. However, the types simplify the presentation and are straightforward to encode into \textsc{etr}. Alternatively, one can encode  Boolean variables in ordinary pMCs as in \cite[Fig. 5.23 on page 144]{junges2020parameter}. 
We opted for the typed version of pMCs to highlight what is challenging for existing parameter synthesis tools. 

\subsection{Implementation and Setup}
\label{eval:setup}

As outlined above, parameter synthesis tools are currently unsuited for solving the 
positional observability (\textbf{POP}, \Cref{pop}) and the sensor selection problem (\textbf{SSP}, \Cref{def:fixedoo-problem}).
We thus implemented direct \textsc{etr}-encodings of \textbf{POP} and \textbf{SSP} instances for positional, but randomized, strategies based on the approach described in \Cref{sec:randomised}.
We also consider the positional-deterministic observability problem (\textbf{PDOOP}, \Cref{def:pdoop}) by adding constraints to our implementation for the \textbf{POP} to rule out randomized strategies. 
Our code is written in Python with   \textsc{z3}  ~\cite{z3} as a backend. More precisely, for every tpMC parameter in \Cref{obs} and \Cref{predef} there is a corresponding variable in the   \textsc{z3}   encoding. For example, if a   \textsc{z3}   model assigns $1$ to the   \textsc{z3}   variable \texttt{ys01}, which corresponds to the tpMC parameter $y_{s_0,o_1}$ (\Cref{obs}), we have $\obs(s_0) = o_1$. Thus, we can directly construct the observation function. Similarly, we can map the results for the \textbf{SSP}. 
Furthermore, the expected reward for each state is computed using standard techniques based on Bellmann equations as explained in~\Cref{ex:bellman}.



For comparison, we also implemented a brute-force approach for positional and deterministic strategies described in
\Cref{sec:oop-np}, which enumerates all observation functions and corresponding observation-based strategies, hence analyzing the resulting induced DTMCs with \textsc{prism}~\cite{prism}. \footnote{Brute-force enumeration of POMDPs has not been considered as PRISM's POMDP solver uses approximation techniques and does not allow to restrict to positional strategies.} Our code and all examples are available online.\footnote{ \color{blue}\url{https://github.com/alyziakonsta/Optimal-Observability-Problem}.}

\noindent 
\textbf{Benchmark Selection.}
To evaluate our approaches for \textbf{P(D)OP} and \textbf{SSP}, we created variants (with different state space sizes, probabilities, and thresholds) of two standard benchmarks from the POMDP literature, grid(world)~\cite{LITTMAN1995362} and maze~\cite{MCCALLUM1993190}, and our running example (cf. \Cref{ex:1}).
Overall, we considered $26$ variants for each problem.

\noindent 
\textbf{Setup.}
All experiments were performed on an HP EliteBook 840 G8 with an 11th Gen Intel(R) Core(TM) i7@3.00GHz and 32GB RAM. 
We use Ubuntu 20.04.6 LTS, Python 3.8.10,   \textsc{z3}   version 4.12.4, and \textsc{prism} 4.8 with default parameters (except for using exact model checking).
We use a timeout of 15 minutes for each individual execution.
%
\subsection{Experimental Results}
\label{eval:results}

\newcommand{\timeout}[1]{\cellcolor{black!20} t.o.}
\newcommand{\na}[1]{N/A}
\newcommand{\nato}[1]{\cellcolor{black!20} N/A}
\newcommand{\unknown}[1]{\cellcolor{white} unk}
\newcommand{\fail}[1]{\cellcolor{black!20}}
\newcommand{\heaps}[1]{\cellcolor{black!20} o.o.m.}
\renewcommand{\arraystretch}{1.3}

\begin{table}[t!]\centering
\scriptsize
\begin{minipage}[t]{0.45\textwidth}
\begin{tabular}{|p{1cm}|p{1cm}|p{1cm}|p{1cm}|p{1cm}|}
 
 \multicolumn{5}{c}{POP - Randomised Strategies} \\
 \hline
 \multicolumn{3}{|c}{Problem Instance} &  \multicolumn{2}{|c|}{\textsc{Z3}} \\
 \hline
 Model & Threshold & Budget & Time(s) & Reward \\
 \hline \hline

   \multirow{3}{*}{L$(249)$} &  $\leq \frac{250}{2}$  & 2  & \timeout{}  &  \nato{} \\ \cline{2-5}
  & $ \leq \frac{125}{2}$ & 2 & $19.051$  & $\frac{125}{2}$  \\ \cline{2-5}
  & $< \frac{125}{2} $ & 2 & $15.375$ & \na{} \\ 
\hline\hline

  \multirow{3}{*}{G$(20)$} &  $\leq \frac{15200}{399}$ & 2  & \timeout{}  & \nato{}  \\ \cline{2-5}
  &  $\leq \frac{7600}{399}$ & 2  & $19.164$  &  $\frac{7600}{399}$ \\ \cline{2-5}
  &  $< \frac{7600}{399}$    & 2  & $15.759$  & \na{}  \\ 
  
  \hline\hline

 \multirow{3}{*}{M$(7)$} &  $\leq \frac{168}{15}$  & 4  & \timeout{}  & \nato{} \\ \cline{2-5}
 &  $\leq \frac{84}{15}$  & 4  & $15.598$  & $\frac{84}{15}$  \\ \cline{2-5}
  &  $< \frac{84}{15}$     & 4  & $31.986$&  \na{} \\ 
 \hline

\end{tabular}
\end{minipage}
%
\scriptsize
\hspace{0.2cm}
\begin{minipage}[t]{0.45\textwidth}
\begin{tabular}{|p{1cm}|p{1cm}|p{1cm}|p{1cm}|p{1cm}|}
 
 \multicolumn{5}{c}{SSP - Randomised Strategies} \\
 \hline
 \multicolumn{3}{|c}{Problem Instance} &  \multicolumn{2}{|c|}{\textsc{Z3}} \\
 \hline
 Model & Threshold & Budget & Time(s) & Reward  \\
 \hline \hline

   \multirow{3}{*}{L$(61)$} &  $\leq 31$  & 30  & \timeout{}  &  \nato{} \\ \cline{2-5}
  & $ \leq \frac{31}{2}$ & 30 & $17.894$  &  $\frac{31}{2}$  \\ \cline{2-5}
  & $< \frac{31}{2} $ & 30 & $30.198$ & \na{} \\ 
\hline\hline

   \multirow{3}{*}{G$(6)$} &  $\leq \frac{360}{35}$ & 5  & \timeout{}  & \nato{}  \\ \cline{2-5}
  &  $\leq \frac{180}{35}$ & 5  & $16.671$  &  $\frac{180}{35}$ \\ \cline{2-5}
  &  $< \frac{180}{35}$    & 5  & $30.204$  & \na{}  \\ 
  \hline\hline

 \multirow{3}{*}{M$(15)$} &  $\leq \frac{868}{35}$  & 21  & \timeout{}  & \nato{} \\ \cline{2-5}
 &  $\leq \frac{434}{35}$  & 21  & $19.067$  & $\frac{434}{35}$  \\ \cline{2-5}
  &  $< \frac{434}{35}$     & 21  & $30.463$  & \na{} \\ 
 \hline

\end{tabular}
\end{minipage}
\caption{Excerpt of experimental results for randomised strategies.}
\label{table:randomized}
\end{table}

 \Cref{table:randomized,table:deterministic} show an excerpt of our experiments for selected variants of the three benchmarks, including the largest variant of each benchmark that can be solved for randomized and deterministic strategies, respectively. 
 The full tables containing the results of all experiments are found in~\Cref{sec:fulltables}. 
 The left-hand side of
 \Cref{table:randomized,table:deterministic} show our results for the \textbf{P(D)OP}, whereas the right-hand side shows our results for the \textbf{SSP}. 
 We briefly go over the columns used in both tables.
 
 The considered variant is given by the columns model, threshold and budget. There are three kinds of models.
 We denote by $L(k)$ a variant of our running example MDP $\mdpline$ scaled up to $k$ states. We choose $k$ as an an odd number such that the goal is always in the middle of the line. Likewise, we write $G(k)$ to refer to an $k \times k$ grid model, where the goal state is in the bottom right corner. Finally, $M(k)$ refers to the maze model, where $k$ is the (odd) number of states. An example of the maze can be seen in~\Cref{mazeExample}.

 
 The column   \textsc{z3}   represents the runtime for our direct \textsc{etr}-encoding with   \textsc{z3}   as a backend. The column PRISM shows the runtime for the brute-force approach. All runtimes are in seconds. 
 We write t.o. if a variant exceeds the timeout. If the (expected) reward is not available due to a timeout, we write N/A in the respective column. In both cases, we color the corresponding cells grey. If our implementation manages to prove that there is no solution, we also write N/A, but leave the cell white.

%
We choose three different threshold constraints in each problem, if the optimal cumulative expected reward is $\tau$ we use the threshold constraints $\leq 2\tau$, $\leq \tau$, and $< \tau$. The last one should yield no solution. The budget is always the minimal optimal one.

\textbf{Randomised strategies.} \Cref{table:randomized} shows that our implementation can solve several non-trivial \textbf{POP}/\textbf{SSP}-instances for randomized strategies. Performance is better when the given thresholds are closer to the optimal one (namely $\leq \tau$ and $< \tau$). For large thresholds ($\leq 2\tau$) the implementation times out earlier (see details in~\Cref{sec:fulltables}). We comment on this phenomenon in more detail later. 



\textbf{Deterministic strategies.} \Cref{table:deterministic} shows our results for deterministic strategies. 
We observe that we can solve larger instances for deterministic strategies than for randomized ones.
Considering the performances of both tools, the SMT-backed approach outperforms the brute-force \textsc{PRISM}-based one. For the \textbf{PDOOP}, we observe that   \textsc{z3}   can solve some of the problems for the $L(377)$ states, whereas \textsc{PRISM} times our for instances larger than $L(9)$. Also,   \textsc{z3}   is capable of solving problems for grid instances $G(y)$ up to $k = 24$
and maze instances $M(k)$ up to $k = 39$,
while \textsc{PRISM} cannot solve any problem instance of these models. For the \textbf{SSP}, we can see that   \textsc{z3}   manages to solve $L(y)$ instances up to $k = 193$, whereas \textsc{PRISM} gives up after $k = 7$. 

\textbf{On the impact of thresholds.}
For both randomized and deterministic strategies we observe that larger thresholds yield considerably longer solver runtimes and often lead to a time-out. At first, this behavior appears peculiar because larger thresholds allow for more possible solutions.
To investigate this peculiar further, we studied the benchmark $L(7)$ considering the \textbf{PDOOP} with thresholds $\leq \tau$, for $\tau$ ranging from $1$ to $1000$. An excerpt of the considered thresholds and verification times is provided in \Cref{tab:thresholds}. For the optimal threshold 2,   \textsc{z3}   finds a solution in 0.079s. Increasing the threshold (step size 0.25) until 4.5 leads to a steady increase in verification time up to 15.027s. Verification requires more than 10min for thresholds in [4.75, 5.5]. For larger thresholds, verification time drops to less than 0.1s. Hence, increasing the threshold first decreases performance, but at some point, performance becomes better again. We have no definitive answers on the threshold’s impact, but we conjecture that a larger threshold increases the search space, which might decrease performance. At the same time, a larger threshold can also admit more models, which might increase performance.

\begin{table}[t!]
\scriptsize
\begin{minipage}[t]{0.48\textwidth}
\vspace{0pt}
\begin{tabular}
{|p{0.7cm}|p{1cm}|p{0.6cm}|p{0.8cm}|p{0.7cm}| c | c |}
 
 \multicolumn{7}{c}{PDOOP - Deterministic Strategies} \\
 \hline
 \multicolumn{3}{|c}{Problem Instance} &  \multicolumn{2}{
|c}{Z3} & \multicolumn{2}{|c|}{\textsc{PRISM}}\\
 \hline
 Model & Thresh. & Budg. & Time(s) & Rew. & Time(s) & Rew.  \\
 \hline \hline

   \multirow{3}{*}{L$(9)$} &  $\leq 5$  & 2  & $0.081$  & \centering $\frac{5}{2}$  &  \multirow{3}{*}{$205.615$} & \multirow{3}{*}{$\frac{5}{2}$} \\ \cline{2-5}
  &  $\leq \frac{5}{2}$  & 2  & $0.082$  & \centering $\frac{5}{2}$  & & \\ \cline{2-5}
  &   $<\frac{5}{2}$    & 2  & $0.086$  & \centering \na{}  &  & \\ \hline

  \multirow{3}{*}{L$(377)$} &  $\leq 189$  & 2  & $55.735$  & \centering $\frac{189}{2}$  &  \multirow{3}{*}{\timeout{}} & \multirow{3}{*}{\nato{}} \\ \cline{2-5}
  &  $\leq \frac{189}{2}$  & 2  & $19.148$  & \centering $\frac{189}{2}$& \timeout{} & \nato{} \\ \cline{2-5}
  &   $< \frac{189}{2}$    & 2  & $353.311$  & \centering \na{}
  & \fail{} & \fail{}\\

  \hline\hline

  \multirow{3}{*}{G$(24)$} &  $\leq \frac{26496}{575}$ & 2  & \timeout{}  & \centering \nato{}    & \multirow{3}{*}{\timeout{}} & \multirow{3}{*}{\nato{}} \\ \cline{2-5}
  &  $\leq \frac{13248}{575}$ & 2  & $19.751$  & \centering $\frac{13248}{575}$ &  \timeout{} & \nato{}\\ \cline{2-5}
  &  $< \frac{13248}{575}$    & 2  & $30.843$  & \centering \na{}  &  \fail{} & \fail{}\\  \hline\hline

  \multirow{3}{*}{M$(39)$} &  $\leq \frac{6232}{95}$  & 4  & \timeout{}  & \centering \nato{}  &  \multirow{3}{*}{\timeout{}{}} & \multirow{3}{*}{\nato{}} \\ \cline{2-5}
 &  $\leq \frac{3116}{95}$  & 4  & $20.424$  & \centering $\frac{3116}{95}$   &   \timeout{}{}  & \nato{}  \\ \cline{2-5}
  &  $< \frac{3116}{95}$     & 4  & $30.149$ & \centering \na{}   &    \fail{} & \fail{} \\ 
 \hline
\end{tabular}
\end{minipage}
\hspace{0.2cm}
\begin{minipage}[t]{0.48\textwidth}
\vspace{0pt}
\begin{tabular}
{|p{0.7cm}|p{0.77cm}|p{0.7cm}|p{1cm}|p{0.7cm}| c | c |}
 \multicolumn{7}{c}{SSP - Deterministic Strategies} \\
 \hline
 \multicolumn{3}{|c}{Problem Instance} &  \multicolumn{2}{
|c}{Z3} &  \multicolumn{2}{|c|}{\textsc{PRISM}}\\
 \hline
 Model & Thresh. & Budg. & Time(s) & Rew. & Time(s) & Rew.  \\
 \hline \hline
  
 \multirow{3}{*}{L$(7)$} &  $\leq 4$  & 3  & $0.086$   & \centering $2$  &   \multirow{3}{*}{$186.257$} & \multirow{3}{*}{$2$} \\ \cline{2-5}
  &  $\leq 2$  & 3  & $0.087$  & \centering $2$    & & \\ \cline{2-5}
  &   $< 2$    & 3  & $0.123$ & \centering \na{}  &   &\\ \hline

  \multirow{3}{*}{L$(193)$} &  $\leq 97$  & 96  & \timeout{}   & \centering \nato{}  &  \multirow{3}{*}{\fail{}} & \multirow{3}{*}{\fail{}} \\ \cline{2-5}
  &  $\leq \frac{97}{2}$  & 96  & $20.530$  & \centering $\frac{97}{2}$  & \timeout{} & \nato{} \\ \cline{2-5}
  &   $< \frac{97}{2}$    & 96  & $30.412$ & \centering \na{}    & \fail{} & \fail{} \\  \hline
  \hline

   \multirow{3}{*}{G$(15)$} &  $\leq \frac{3150}{112}$ & 14  &  \timeout{}  & \centering \nato{}  &  \multirow{3}{*}{\timeout{}} & \multirow{3}{*}{\nato{}} \\ \cline{2-5}
  &  $\leq \frac{3150}{224}$ & 14  & $20.204$  & \centering $\frac{3150}{224}$   & \timeout{} &\nato{}\\ \cline{2-5}
  &  $< \frac{3150}{224}$    & 14  & $30.804$  & \centering \na{}    & \fail{}  &\fail{}\\

  \hline\hline

 \multirow{3}{*}{M$(49)$} &  $\leq \frac{9912}{120}$  & 72  & \timeout{}  & \centering \nato{}  &   \multirow{3}{*}{\timeout{}} & \multirow{3}{*}{\nato{}} \\ \cline{2-5}
 &  $\leq \frac{4956}{120}$  & 72  & $20.35$  & \centering $\frac{4956}{120}$    & \timeout{}  & \nato{}\\ \cline{2-5}
  &  $< \frac{4956}{120}$     & 72  & $30.333$ & \centering \na{}   &  \fail{} &\fail{}\\ 

 \hline
\end{tabular}
\end{minipage}
\caption{Excerpt of experimental results for deterministic strategies.}
\label{table:deterministic}
\end{table}

%
%
%
%


\begin{table}[t]
    \centering
    \scriptsize
    \begin{tabular}{ |m{1.5cm}||c|c|c|c|c|c|c|c|c|c|c|c|c|c|c|c|c|c|c|c|c|c|c|c|c|c| } 
     \hline
     \centering Thresh.  &  1  & 1.5 & 2  & 3 & 4  & 4.5 & 4.75 & 5.5 & 5.75 & 50 & 100 & 500 & 1000 \\ \hline
     \centering{ Time (s) } & 30.125  & 30.444 & 0.079 & 1.498 & 8.803 &  15.027& \timeout{} & \timeout{}  & 0.083  & 0.089 & 0.083 & 0.079 & 0.083 \\
     \hline
    \end{tabular}
    \caption{PDOOP $L(7)$ with deterministic strategies.}
    \label{tab:thresholds}
\end{table}



\smallskip
\noindent\textbf{Discussion.} 
Our experiments demonstrate that SMT solvers, specifically   \textsc{z3}, can be used out-of-the-box to solve small-to-medium sized \textbf{POP}- and \textbf{SSP}-instances that have been derived from standard examples in the POMDP literature.
In particular, for deterministic strategies, the SMT-backed approach clearly outperforms a brute-force approach based on (exact) probabilistic model checking.

Although the considered problem instances are, admittedly, small-to-medium sized\footnote{at least for notoriously-hard POMDP problems; some instances have ca. 600 states.},
they are promising for several reasons:
First, our SMT-backed approach is a faithful, yet naive, \textsc{etr}-encoding of the \textbf{POP}, and leaves plenty of room for optimization.
Second,   \textsc{z3}   does, to the best of our knowledge, not use a decision procedure specifically for \textsc{etr}, which might further hurt performance.
Finally, we showed in \Cref{sec:randomised} that \textbf{POP} can be encoded as a feasibility problem for (typed) parametric Markov chains.
Recent advances in parameter synthesis techniques (cf. \cite{DBLP:conf/atva/CubuktepeJJKT18,jansen2022parameter}) demonstrate that those techniques can scale to parametric Markov chains with tens of thousands of states.
While the available tools cannot be used out-of-the-box for solving observability problems because of the graph-preservation assumption, 
it might be possible to extend them in future work.

It is also worth mentioning that our implementation not only provides an answer to the decidability problems  \textbf{P(D)OP} and \textbf{SSP}, but it also synthesizes the corresponding observation function and the strategy if they exist. However, the decision problem and the problem of synthesising such observation function have the same complexities.

\section{Conclusion and Future Work}\label{conclusion}
 We have introduced the novel \emph{optimal observability problem} (\textbf{OOP}). 
 The problem is undecidable in general, \textsc{np}-complete when restricted to positional and deterministic strategies.
 %
%
 %
 We have also shown that the \textbf{OOP} becomes decidable in \textsc{pspace} if restricted to positional, but \emph{randomized}, strategies, and that it can be reduced to parameter synthesis on a novel typed extension of parametric Markov chains~\cite{junges2020parameter,jansen2022parameter}, which we exploit in our SMT-based implementation. 
 Our experiments show that SMT solvers can be used out-of-the-box to solve small-to-medium-sized instances of observability problems derived from POMDP examples found in the literature.
 Although we have focused on proving upper bounds on minimal expected rewards, our techniques also apply to other observability problems on  POMDPs that can be encoded as a query on tpMCs, based on our faithful encoding of POMDPs as tpMCs with the observation function as a parameter. For example, the sensor synthesis for almost-sure reachability properties~\cite{chatterjee2018sensor} can be encoded.  Moreover, one obtains dual results for proving lower bounds on maximal expected rewards. 
 For future work, we believe that scalability could be significantly improved by extending parameter synthesis tools such that they can deal with typed and non-graph-preserving parametric Markov chains.

\bibliographystyle{splncs04}
\bibliography{bibliografy.bib} 

\appendix
\newpage
\section{Appendix}


\subsection{Proof of \Cref{th:undecidability} (correctness of reduction)}
\label{app:undecidability}

\paragraph{Proof obligation.}\footnote{
We chose to consider upper bounds on minimal expected rewards such that we can use the same reduction for proving undecidability and \textsc{np}-hardness, where the reduction is based on Littmann~\cite{littman1994memoryless} and considers minimal expected rewards.
An attentive reader might note, however, that Madani et al.~\cite{madani1999undecidability} focus on \emph{maximal} expected rewards. Their results can also be adapted to show that policy existence for minimal expected rewards is undecidable. To this end, note that the proofs only consider a single goal state with non-zero reward (see \cite[Sec. 2.2.4]{madani1999undecidability}).  In other words, the problem is already undecidable when considering (maximal) reachability probabilities. Furthermore, Madani et al. show that maximizing the probability of \emph{both} reaching and avoiding goal states is undecidable. The latter can be viewed as a special case of minimizing the total expected reward.
}

 \begin{align*}
   \underbrace{\MinExpRew{\pomdp} \leq \tau}_{\text{policy-existence problem}}
   \quad\Longleftrightarrow\quad 
   \underbrace{\exists \obs'\colon \MinExpRew{\mdp'\setObs{\obs'}} \leq \tau.}_{\text{optimal observability problem, where $|\mathit{range}(\obs')| \leq |\OO|$}}
 \end{align*}

\paragraph{Action tags.}
Before we show the correctness of our construction, we collect a few useful definitions and facts.
We call an action $\alpha_o$ \emph{correctly tagged} for state $s$ iff (i) $s \in (S \setminus G)$ and $\obs(s) = o$, (ii) $s = s_o \in S_O$, \emph{or} (iii) $s \in G \uplus \{\stau,\sink\}$.
Analogusly, a strategy $\sigma'$ for $\mdp'$ is \emph{correctly tagged} if it always selects correctly tagged actions, \ie for all path fragments $\pi$, we have $\sigma(\pi) \in \Act_{\obs(\mathit{last}(\pi))}$.
The \emph{tagging} of a path fragment $\pi = s_0 \alpha^0 s_1 \alpha^1 \ldots s_n$ of $\pomdp$ is
\[ \mathit{tag}(\pi) ~=~ s_0 \alpha^0_{\obs(s_0)} s_1 \alpha^1_{\obs(s_1)} \ldots s_n. \]
Conversely, the \emph{untagging} of a path fragment $\pi = s_0 \alpha^0_{o_0} s_1 \alpha^1_{o_1} \ldots s_n$ is
\[ \mathit{untag}(\pi) = s_0 \alpha^0 s_1 \alpha^1 \ldots s_n. \]
Finally, for a path fragment $\pi = s_0 \alpha^0 s_1 \alpha^1 \ldots s_n$, we write $\sigma(\pi)$ as a shortcut for $\sigma(s_0 s_1 \ldots s_n)$.
The following facts hold by construction of MDP $\mdp'$, budget $B$, and threshold $\tau$:
\begin{enumerate}
	\item Every strategy for $\mdp'$ that is \emph{not} correctly tagged, yields infinite expected reward.
	\item For every correctly tagged strategy, the probability of reaching $\stau$ from $s_o \in S_O$ is one; the expected reward collected via paths starting in $S_O$ is thus $\tau$.
	\item For every path fragment $\pi$ of $\mdp'$ with $\mathit{first}(\pi) \in I$ and $\mathit{last}(\pi) \neq \sink$, $\mathit{untag}(\pi)$ is a path fragment of $\pomdp$ with the same probability and cumulative reward $\Rew{M}{\pi}$.
\end{enumerate}

\paragraph{``$\Longrightarrow$''}
 Assume $\MinExpRew{\pomdp} \leq \tau$.
 By definition, there exists an observation-based strategy $\sigma$ for $\pomdp$ such that
 $\ExpRew{\pomdp}{\sigma} \leq \tau$. 
 We fix some observation $\mathfrak{o} \in O$
 and select the observation function $\obs'\colon S' \to \OO$ as follows:
 \begin{align*}
   \obs'(s) ~=~ 
   \begin{cases}
   	  \obs(s), & ~\text{if}~ s \in S\setminus G \\
   	  \obsGoal, & ~\text{if}~ s \in G \\
   	  o, & ~\text{if}~ s = s_o \in S_O \\
   	  \mathfrak{o}, & ~\text{if}~ s \in \{\sink,\stau\}
   \end{cases}
 \end{align*}
 Clearly, $|\mathit{range}(\obs')| = |O|+1$, so $\obs'$ is an observation function within budget $B = |O|$.
 Now, let $\pomdp' = \mdp'\setObs{\obs'}$.
 To show that $\MinExpRew{\pomdp'} \leq \tau$, we construct an observation-based strategy $\sigma'$ that always picks correctly-tagged actions.
 Whenever possible, $\sigma'$ behaves like $\sigma$ by ignoring action tags.
 For observation paths that do not exist in $\pomdp$, we pick correctly tagged actions uniformly at random.
 Formally, $\sigma'$ is given by
 
 \begin{align*}
   \sigma'(\pi)(\alpha_o) ~=~ 
   \begin{cases}
      \sigma(\mathit{untag}(\pi))(\alpha_o), & ~\text{if}~ \obs(\mathit{last}(\pi)) = o ~\text{and}~ \obs(\mathit{untag}(\pi)) \in \OPaths{\pomdp}  \\
      \nicefrac{1}{|\Act_o|}, & ~\text{if}~ \obs(\mathit{last}(\pi)) = o ~\text{and}~ \obs(\mathit{untag}(\pi)) \notin \OPaths{\pomdp} \\
   	  0, & ~\text{if}~ \obs(\mathit{last}(\pi)) \neq o
   \end{cases}
 \end{align*}
 We omitted choosing actions for the states $\stau$ and $\sink$ as they assign the same distribution to any action.
 Notice that this strategy is computable, since, for any given $\pi$, we only need to consider observation paths in $\OPaths{\pomdp}$
 up to the length of $\pi$.
 
 Now, consider the following:
 \begin{align*}
   & \MinExpRew{\pomdp'} \\
   ~\leq~ & \ExpRew{\pomdp'}{\sigma'} \tag{by definition} \\
   ~=~ & \frac{1}{|I| + |S_O|} \cdot \sum_{\pi \in \Paths{\pomdp'}} \Prob{\pomdp'}{\sigma'}{\pi} \cdot \Rew{\pomdp'}{\pi} \tag{by definition} \\
   ~=~ &  \frac{|S_O|}{|I| + |S_O|} \cdot \underbrace{\frac{1}{|S_O|} \sum_{\pi \in \Paths{\pomdp'}, \mathit{first}(\pi) \in S_O} \Prob{\pomdp'}{\sigma'}{\pi} \cdot \Rew{\pomdp'}{\pi}}_{~=~ \tau~\text{(expected reward collected via paths starting in $S_O$)}} \tag{by fact 2}  \\
          & ~+~ \frac{|I|}{|I| + |S_O|} + \underbrace{\frac{1}{|I|} \cdot \sum_{\pi \in \Paths{\pomdp'}, \mathit{first}(\pi) \in I} \Prob{\pomdp'}{\sigma'}{\pi} \cdot \Rew{\pomdp'}{\pi}}_{~\leq~ \tau} \tag{by fact 3} \\
   ~\leq~ & \frac{|S_O|}{|I| + |S_O|} \cdot \tau + \frac{|I|}{|I| + |S_O|} \cdot \tau ~=~ \tau.
 \end{align*}
 Hence, there is also a solution to the optimal observability problem for $(\mdp', B, \tau)$.
 
\paragraph{``$\Longleftarrow$''}
Assume there exists an observation function $\obs'\colon S' \to \OOp$ such that $|O'| \leq B$ and $\MinExpRew{\mdp'\setObs{\obs'}} \leq \tau$.
In the following, let $\pomdp' = \mdp'\setObs{\obs'}$.
Then there exists a strategy $\sigma'$ such that 
\begin{align*}
	\MinExpRew{\mdp'\setObs{\obs'}} ~\leq~ \ExpRew{\mdp'\setObs{\obs'}}{\sigma'} ~\leq~ \tau.
\end{align*}
By definition, $\sigma'$ is observation-based.
Moreover, $\sigma'$ must be correctly tagged due to fact 1.
We notice three further facts (proofs are found in \Cref{app:undecidability-facts}):
\begin{enumerate}
	\item[(a)] For all $s_p, s_q \in S_O$, we have $\obs'(s_p) \neq \obs'(s_q)$.
	\item[(b)] For all $s \in S \setminus G$, we have $\obs'(s) = \obs'(s_{\obs(s)})$.
	\item[(c)] $\frac{1}{|I|+|S_O|} \cdot \sum_{\pi \in \Paths{\pomdp'}, \mathit{first}(\pi) \in I} \Prob{\pomdp'}{\sigma'}{\pi} \cdot \Rew{\pomdp'}{\pi} ~\leq~ \frac{|I|}{|I|+|S_O|} \cdot \tau$.
\end{enumerate}
To show that $\MinExpRew{\pomdp} \leq \tau$, we define a strategy $\sigma$ for $\pomdp$ that works just like strategy $\sigma'$ by always choosing correctly tagged actions:
\begin{align*}
  \sigma(\pi)(\alpha) ~=~ 
  \sigma'(\mathit{tag}(\pi))(\alpha_{\obs(\mathit{last}(\pi))}).
\end{align*}
By fact (b), we have for all $s,t \in S$ that $\obs'(s) = \obs'(t)$ implies $\obs(s) = \obs(t)$.
Hence, $\sigma$ is an observation-based strategy for $\obs$, because $\sigma'$ is an observation-based strategy for $\obs'$.
To complete the proof, consider the following:
\begin{align*}
   & \MinExpRew{\pomdp} \\
   ~\leq~ & \ExpRew{\pomdp}{\sigma} \tag{by definition} \\
   ~\leq~ & \frac{1}{|I|} \cdot \sum_{\pi \in \Paths{\pomdp}} \Prob{\pomdp}{\sigma}{\pi} \cdot \Rew{\pomdp}{\pi} \tag{by definition} \\
   ~\leq~ & \frac{1}{|I|} \cdot \sum_{\pi \in \Paths{\pomdp'}, \mathit{first}(\pi) \in I} \Prob{\pomdp'}{\sigma'}{\pi} \cdot \Rew{\pomdp'}{\pi} \tag{by definition of $\sigma$ and fact 3} \\
   ~\leq~ & \frac{1}{|I|} \cdot \frac{|I|+|S_O|}{1} \cdot \frac{1}{|I|+|S_O|} \cdot \sum_{\pi \in \Paths{\pomdp'}, \mathit{first}(\pi) \in I} \Prob{\pomdp'}{\sigma'}{\pi} \cdot \Rew{\pomdp'}{\pi} \tag{algebra} \\
   ~\leq~ & \frac{|I|+|S_O|}{|I|} \cdot \frac{|I|}{|I|+|S_O|} \cdot \tau \tag{by fact (c)} \\
   ~=~ & \tau.
\end{align*}
Hence, $\sigma$ is a solution to the policy-existence problem instance $(\pomdp,\tau)$.
\qed  

\subsection{Proof of remaining facts for \Cref{th:undecidability}}
\label{app:undecidability-facts}

\subsubsection{Fact (a).} 
To show: For all $s_p, s_q \in S_O$, we have $\obs'(s_p) \neq \obs'(s_q)$.

Assume towards a contradiction that there are $s_p, s_q \in S_O$ such that $\obs'(s_p) = \obs'(s_q)$.
By construction of $\mdp'$, there are two path fragments $\pi_p$ and $\pi_q$ such that
\begin{itemize}
	\item $\mathit{last}(\pi_p) = s_p \neq s_q = \mathit{last}(\pi_q)$,
	\item $\pi_p$ and $\pi_q$ have the same observation paths, \ie $\obs'(\pi_p) = \obs'(\pi_q)$, and
	\item both paths have non-zero probability for strategy $\sigma'$, \ie $\Prob{\pomdp'}{\sigma'}{\pi_p}, \Prob{\pomdp'}{\sigma'}{\pi_q} > 0$.
\end{itemize}
Since $\sigma'$ is an observation-based strategy, it selects the same distribution over tagged actions for the sequence of observations in $\pi_p$ and $\pi_q$.
However, this means that $\sigma'$ cannot be correctly tagged, because $s_p, s_q \in S_O$ and $s_p \neq s_q$. Contradiction.

\subsubsection{Fact (b).}  
To show: For all $s \in S \setminus G$, we have $\obs'(s) = \obs'(s_{\obs(s)})$.

Assume towards a contradiction that there is a state $s \in S \setminus G$ such that $\obs(s) = o$ but $\obs'(s) \neq \obs'(s_o)$. 
By fact (a), $\obs'$ must assign pairwise distinct observations to states in $S_O$.
Since there are only $|O| = |S_O|$ observations that can be assigned to states in $(S\setminus G)$ ($\obsGoal$ is reserved for goal states), there is a state $t \in S_O$ such that $t \neq s_o$ and $\obs'(s) = \obs'(t)$.
By construction of $\mdp'$, there are two path fragments $\pi_s$ and $\pi_t$ such that
\begin{itemize}
	\item $\mathit{last}(\pi_s) = s \neq t = \mathit{last}(\pi_t)$,
	\item $\pi_s$ and $\pi_t$ have the same observation paths, \ie $\obs'(\pi_s) = \obs'(\pi_t)$, and
	\item both paths have non-zero probability for strategy $\sigma'$, \ie $\Prob{\pomdp'}{\sigma'}{\pi_s}, \Prob{\pomdp'}{\sigma'}{\pi_t} > 0$.
\end{itemize}
Such paths always exist because there is at least one path fragment leading to $s$, namely $\pi_s = s$, and we can reach every state in $S_O$ with arbitrary observation path fragments.
Since $\sigma'$ is an observation-based strategy, it selects the same distribution over tagged actions for the sequence of observations in $\pi_s$ and $\pi_t$.
However, this means that $\sigma'$ cannot be correctly tagged, because $\obs(s) = o$, $t \neq s_o$, and $t \in S_O$. Contradiction.

\subsubsection{Fact (c).}
To show: $\frac{1}{|I|+|S_O|} \cdot \sum_{\pi \in \Paths{\pomdp'}, \mathit{first}(\pi) \in I} \Prob{\pomdp'}{\sigma'}{\pi} \cdot \Rew{\pomdp'}{\pi} ~\leq~ \frac{|I|}{|I|+|S_O|} \cdot \tau$.

By assumption, we have $\ExpRew{\pomdp'}{\sigma'} \leq \tau$. More precisely,
 \begin{align*}
   & \ExpRew{\pomdp'}{\sigma'} \\
   ~=~ & \frac{1}{|I| + |S_O|} \cdot \sum_{\pi \in \Paths{\pomdp'}} \Prob{\pomdp'}{\sigma'}{\pi} \cdot \Rew{\pomdp'}{\pi} \tag{by definition} \\
   ~=~ &  \frac{1}{|I|+|S_O|} \cdot \sum_{\pi \in \Paths{\pomdp'}, \mathit{first}(\pi) \in S_O} \Prob{\pomdp'}{\sigma'}{\pi} \cdot \Rew{\pomdp'}{\pi}  \tag{$I' = I \cup S_O$}\\
          & ~+~ \frac{1}{|I|+|S_O|} \cdot \sum_{\pi \in \Paths{\pomdp'}, \mathit{first}(\pi) \in I} \Prob{\pomdp'}{\sigma'}{\pi} \cdot \Rew{\pomdp'}{\pi} \\
   ~=~ & \frac{|S_O|}{|I|+|S_O|} \cdot \tau + \frac{1}{|I|+|S_O|} \cdot \sum_{\pi \in \Paths{\pomdp'}, \mathit{first}(\pi) \in I} \Prob{\pomdp'}{\sigma'}{\pi} \cdot \Rew{\pomdp'}{\pi} \tag{by fact 2} \\
   ~\leq~ & \tau.
 \end{align*}
Now, subtracting $\frac{|S_O|}{|I|+|S_O|}$ on both sides of the last inequality yields
\begin{align*}
	& \frac{1}{|I|+|S_O|} \cdot \sum_{\pi \in \Paths{\pomdp'}, \mathit{first}(\pi) \in I} \Prob{\pomdp'}{\sigma'}{\pi} \cdot \Rew{\pomdp'}{\pi} \\
	~\leq~ & \tau \,-\, \frac{|S_O|}{|I|+|S_O|} \cdot \tau \\
	~=~ & \frac{|I|}{|I|+|S_O|} \cdot \tau
\end{align*}
as stated in fact (c).

\subsection{Remark on Hardness for Positional Strategies}
\label{app:hardness-remark}
 
The \textsc{etr}-hardness proof for positional as well as the \textsc{np}-hardness proof for deterministic and positional strategies is very similar to the undecidability proof sketched in \Cref{app:undecidability}.

However, since we do not have to deal with history-dependent strategies, our construction can be simplified such that tagged states $s_o$ move to $s_{\tau}$ for every action $\alpha_o$ tagged with the same observation. That is, the transition probability function changes to
\begin{align*}
P'(s,\alpha_o) ~=~ &
   \begin{cases}
      P(s,\alpha_o), & ~\text{if}~ s \in (S\setminus G) ~\text{,}~ \obs(s) = o \\
      P(s,\alpha_o), & ~\text{if}~ s \in G \\
      {\delta_{\stau}}, & ~\text{if}~ s = s_o \in S_O  \\
      \mathit{unif}(G), & ~\text{if}~ s = \stau \text{ and $s'$ is some fixed state in $G$} \\
      \delta_{\sink}, & ~\text{otherwise}.
   \end{cases}
\end{align*}
In other words, for positional strategies we can enforce that all states in $S_O$ are assigned different observations without making sure that all possible observation paths between those states are possible with some positive probability. In fact, in our modified construction, two distinct states in $S_O$ cannot reach each other anymore. 

To show the direction ``$\Longrightarrow$'', we then proceed as in \Cref{app:undecidability} but we have to construct a positional strategy $\sigma'$ for $\pomdp'$.
For our simplified construction, it suffices to pick an action for states $s_o \in S_O$ based on the given strategy $\sigma$ for any state $\hat{s}$ in the original POMDP $\pomdp$ with observation $o$ (which exists without loss of generality).
Hence, we construct a positional strategy as follows:
\begin{align*}
   \sigma'(s)(\alpha_o) ~=~ 
   \begin{cases}
      \sigma(s)(\alpha_o), & ~\text{if}~ \obs(s) = o \\
      \sigma(\hat{s})(\alpha), & ~\text{if}~ s = s_o \in S_O \text{ and } \obs(\hat{s}) = o \text{ for some } \hat{s} \in S \\
   	  0, & ~\text{if}~ \obs(\mathit{last}(\pi)) \neq o
   \end{cases}
 \end{align*}
 As before, we omitted choosing actions for the states $\stau$ and $\sink$ as they assign the same distribution to any action.
 By construction, the above strategy is always correctly tagged.
 The remainder of the proof then proceeds as in the proof of \Cref{th:undecidability}.
 
The proof of the converse direction ``$\Longleftarrow$'' is analogous to the proof of \Cref{th:undecidability}. However, we remark that proving facts (a) - (c) becomes simpler for our modified construction because
\begin{itemize}
	\item every observation-based path that ends up in a state $s_o \in S_O$ is of length one and
	\item all considered strategies are positional.
\end{itemize}

\subsection{Maze example}\label{mazeExample}

\begin{example}[Agent in a Maze]
Another classical variant of our grid-like models is the maze depicted in  Figure~\ref{fig:mazegraph}, which we took from ~\cite{MCCALLUM1993190}. An agent is placed in a random location on the maze and needs to find his/her goal by moving $\{\mathit{left, right, up, down}\}$. For simplicity, we omit the self-loops. The goal is at state $s_9$.
    The agent wants to reach the goal within the minimum number of steps. In the case of full observability, the agent is always able to distinguish the states and choose the optimal action. Given the underlying MDP $\mdp$ and a PDO strategy $\sigma: S_\mdp \rightarrow Act_\mdp$, where $\sigma(s_8) = \sigma(s_5) = \sigma(s_7) = \sigma(s_{10}) = up$, $\sigma(s_0) = \sigma(s_1) = right$, $\sigma(s_3) = \sigma(s_4) = left$, $\sigma(s_2) = \sigma(s_6) = down$.  We can construct a POMDP $\pomdp = (\mdp, O, obs)$, where $O = range(\sigma)$ and $obs(s) = \sigma(s)$. Hence, we need $|range(\sigma)| = 4$ observations and the following observation function: $obs(s_8) = obs(s_5) = obs(s_7) = obs(s_{10})$ and $obs(s_0) = obs(s_1)$ and $obs(s_3) = obs(s_4)$ and $obs(s_2) = obs(s_6)$. By choosing this observation function we get $\MinExpRewPD \mdp = \MinExpRewPD \pomdp$.
\end{example}

In our experiments, we consider instances of the maze that can only include an odd number of columns, but we are free to choose the number of rows. For the experiments, we increase the number of columns by $2$ each time, while the number of rows by $1$. For example for maze $M(5)$ as in~\Cref{fig:mazegraph} the number of rows is $3$. The next maze $M(7)$ consists of $7$ columns and $4$ rows. The number states of the maze can be calculated as $(\#rows \times 3) + \#columns$. 

\begin{figure}[]
\centering
\begin{tikzpicture}[->,>=stealth',shorten >=1.5pt,auto,node distance=1.5cm,
                    semithick]
  \tikzstyle{every state}=[fill=white,draw=black,text=black]

  \node[state,inner sep=3pt,minimum size=1pt]         (S0)                   {$s_0$};
  \node[state,inner sep=3pt,minimum size=1pt]         (S1) [right of=S0]     {$s_1$};
  \node[state,inner sep=3pt,minimum size=1pt]         (S2) [right of=S1]     {$s_2$};
  \node[state,inner sep=3pt,minimum size=1pt]         (S3) [right of=S2]     {$s_3$};
  \node[state,inner sep=3pt,minimum size=1pt]         (S4) [right of=S3]     {$s_4$};
  \node[state,inner sep=3pt,minimum size=1pt]         (S5) [below of=S0]     {$s_5$};
  \node[state,inner sep=3pt,minimum size=1pt]         (S6) [below of=S2]     {$s_6$};
  \node[state,inner sep=3pt,minimum size=1.5pt]         (S7) [below of=S4]     {$s_7$};
  \node[state,inner sep=3pt,minimum size=1pt]         (S8) [below of=S5]     {$s_8$};
  \node[state,inner sep=3pt,minimum size=2.5pt][fill=green!20]          (S9) [below of=S6]     {$s_9$};
  \node[state,inner sep=2pt,minimum size=1pt]         (S10) [below of=S7]     {$s_{10}$};

  \path (S0) edge [bend left=10]             node {} (S1)
             edge  [bend right=10]             node {} (S5)
        (S1) edge   [bend right=10]            node {} (S2)
             edge  [bend left=10]             node [] {} (S0)
        (S2) edge     [bend right=10]          node [above ] {l} (S1)
             edge    [bend left=10]           node {r} (S3)
             edge    [bend right=10]           node [left]{d} (S6)
        (S3) edge    [bend right=10]           node{} (S4)
             edge   [bend left=10]            node {} (S2)
        (S4) edge    [bend right=10]           node {} (S3)
             edge   [bend left=10]            node {} (S7)
        (S5) edge   [bend right=10]            node {} (S0)
        edge   [bend left=10]            node {} (S8)
        (S6) edge    [bend right=10]           node [right] {u} (S2)
             edge    [bend left=10]           node {} (S9)
        (S7) edge     [bend left=10]          node {} (S4)
             edge     [bend left=10]          node {} (S10)
        (S8) 
             edge    [bend left=10]           node {} (S5)
        (S9) 
             edge    [bend left=10]           node {} (S6)
        (S10) 
             edge    [bend left=10]           node {} (S7);
\end{tikzpicture}
\caption{Maze from~\cite{MCCALLUM1993190}}
    \label{fig:mazegraph}
\end{figure}

\subsection{Proof of \Cref{th:feasibility}}\label{app:feasibility}

\paragraph{To show:} The feasibility problem for tpMCs is \textsc{etr}-complete.
\begin{proof}
  Junges~\cite[Thms. 4.15, 5.49 and Rem. 4.3]{junges2020parameter} showed that the feasibility problem for parametric Markov chains over real-typed variables (pMC, for short) is \textsc{etr}-complete. 
  Since every pMC is also a tpMC, it follows immediately that the feasibility problem for tpMCs is \textsc{etr}-hard.
  To show membership in \textsc{etr}, let $\pmc$ be a tpMC.
  Moreover, let $\pmc'$ be the pMC obtained from $\pmc$ by changing the types of all variables to $\mathbb{R}$.
  As shown in~\cite{junges2020parameter}, we can construct an \textsc{etr}-sentence $\exists x_1 \ldots x_n \colon \varphi$ such that 
  \[
     \models \exists x_1 \ldots x_n \colon \varphi
     \quad\text{iff}\quad 
     \text{there is a well-defined }
     \iota\colon V_{\pmc'} \to \mathbb{R}
     \text{ s.t. }
     \ExpRew{\instance{\pmc'}{\iota}}{} \leq \tau.
  \]
  Here, every variable $y \in V_\pmc'$ corresponds to exactly one variable $x_i$.
  For simplicity, we write $y$ to refer to the corresponding variable in $\varphi$.
  It then suffices to extend $\exists x_1 \ldots x_n \colon \varphi$ to a sentence $\exists x_1 \ldots x_n \colon \varphi \wedge \psi$, where the \textsc{etr}-formula $\psi$ ensures that instantiations $\iota$ are well-defined when taking the types of variables in $V_\pmc$ into account.
  More precisely, $\psi$ is a conjunction of the following constraints:
  \begin{itemize}
  	\item $y = 0 \vee y = 1$ for every variable $y \in V_\pmc(\mathbb{B}) \cup V_\pmc(\mathbb{B}_{= C})$ and
  	\item $y_1 + y_2 + \ldots + y_m = C$ for all $V_\pmc(\mathbb{D}_{=\,C}) = \{y_1,\ldots,y_m\} \neq \emptyset$ with $\mathbb{D} \in \{\mathbb{B}, \mathbb{R}\}$.
  \end{itemize}
  These quantifier-free constraints can be expressed in \textsc{etr} and represent an instantiation of a tpMC. Hence, $\exists x_1 \ldots x_n \colon \varphi \wedge \psi$
  holds if and only if there exists a well-defined instantiation $\iota\colon V_{\pmc} \to \mathbb{R}$ such that $\ExpRew{\instance{\pmc}{\iota}}{} \leq \tau$.
  \qed
\end{proof}

\subsection{Proof of \Cref{thm:tpmc-sound}}
\label{app:tpmc-sound}

We first state an existing result, which we will reuse to keep the proof short.
Junges~\cite{junges2020parameter} showed that one can reason about expected values of POMDPS by analyzing pMCs (over real-valued variables) -- at least as long as one considers positional strategies only. Towards a precise statement, we first define the pMC corresponding to a POMDP.
\begin{definition}[pMCs of POMDPs]\label{correspondingPMC}
  The \emph{pMC $\pmc$ of a POMDP} $\pomdp = (\mdp, O, \obs)$ is given by $\pmc = (S_\mdp,I_\mdp,G_\mdp,V,P,\rew_\mdp)$, where
  \begin{align*}
    V ~=~ \{ x_{o,\alpha} ~|~ o \in O, \alpha \in \Act_\mdp \}
    \qquad\text{and}\qquad
    P(s,s') ~=~ \sum_{\alpha \in \Act_M} x_{\obs(s),\alpha} \,\cdot\, P_\mdp(s,\alpha)(s').
  \end{align*}	
\end{definition}

Intuitively, a strategy for a location POMDP takes every action with some probability depending on the given observation.
Since the precise probabilities are unknown, we represent the probability of selecting action $\alpha$ given that we observe $o$ by a parameter $x_{o,\alpha}$. In the transition probability function, we then pick each action with the probability given by the parameter for the action and the current observation.

Junges~\cite{junges2020parameter} showed that finding optimal positional strategies for POMDPs corresponds to feasibility synthesis for pMCs over real-valued variables, \ie there is a one-to-one correspondence between positional POMDP strategies and well-defined instantiations of the corresponding POMDP:
\begin{proposition}[Junges, 2020]\label{pmcpomdpEq}
  Let $\pmc$ be the pMC of a POMDP $\pomdp$. For every positional strategy $\sigma \in \StratP(\pomdp)$, there exists a well-defined instantiation $\iota$ such that the induced Markov chain $\instance{\pomdp}{\sigma}$ is identical to the Markov chain $\instance{\pmc}{\iota}$
  and vice versa.
\end{proposition}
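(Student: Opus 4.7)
The plan is to prove the set equality by constructing a bijection between pairs $(\obs,\sigma)$ and well-defined instantiations $\iota$, and showing that corresponding pairs/instantiations induce \emph{the same} Markov chain. Both directions of the construction are essentially forced by the shape of the variable types: the Boolean sum-to-one variables $\{y_{s,o}\}_{o \in O}$ exactly encode the choice of observation $\obs(s)$, while the real sum-to-one variables $\{x_{o,\alpha}\}_{\alpha \in \Act_\mdp}$ exactly encode the distribution over actions an observation-based positional strategy assigns to observation $o$.

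For the $\subseteq$ direction, I would take a pair $(\obs, \sigma)$ with $\sigma \in \StratP(\mdp\setObs{\obs})$ and define $\iota(y_{s,o}) = 1$ iff $\obs(s) = o$ (and $0$ otherwise), and $\iota(x_{o,\alpha}) = \sigma(s_o)(\alpha)$ for some representative non-goal state $s_o$ with $\obs(s_o) = o$, assigning an arbitrary valid distribution over $\Act_\mdp$ for observations that no state carries. Observation-basedness of $\sigma$ guarantees that the choice of representative is irrelevant, so $\iota$ is well-defined as a function. The type constraints then reduce to two standard facts: $\obs$ assigns exactly one observation per state (giving sum-to-one over $\{0,1\}$-valued $y_{s,o}$), and $\sigma(s)$ is a probability distribution (giving sum-to-one over real-valued $x_{o,\alpha}$). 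Conversely, given a well-defined $\iota$, the type $\mathbb{B}^s_{=1}$ forces a unique observation $\obs(s) := o$ such that $\iota(y_{s,o}) = 1$, and setting $\sigma(s)(\alpha) := \iota(x_{\obs(s),\alpha})$ yields a positional strategy that depends on $s$ only through $\obs(s)$ and is therefore observation-based.

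Finally, I would verify that corresponding pairs/instantiations produce identical induced Markov chains. The state sets, initial and goal state sets, and reward functions agree by definition, so only the transition probability functions require checking. The key calculation is
\[
  \iota(P_\pmc(s,s'))
  ~=~ \sum_{\alpha} \sum_{o \in O} \iota(y_{s,o}) \cdot \iota(x_{o,\alpha}) \cdot P_\mdp(s,\alpha)(s')
  ~=~ \sum_{\alpha} \sigma(s)(\alpha) \cdot P_\mdp(s,\alpha)(s'),
\]
where the second equality uses that exactly one $\iota(y_{s,o})$ is non-zero, namely at $o = \obs(s)$. This matches the transition probability of $\mdp\setObs{\obs}[\sigma]$ from \Cref{def:imc}. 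The only subtlety I anticipate is the bookkeeping for goal states $s \in G_\mdp$, where $y_{s,o}$ is fixed to the constant $1$ rather than being a proper variable; this requires a short separate check, but it is orthogonal to the main correspondence since the observation of goal states is always $\obsGoal$ and their outgoing transitions do not enter the expected reward.
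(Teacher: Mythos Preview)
Your proposal does not address the stated proposition. Proposition~\ref{pmcpomdpEq} concerns the pMC of a \emph{fixed} POMDP $\pomdp$ as in Definition~\ref{correspondingPMC}, whose only parameters are the real-valued variables $x_{o,\alpha}$; there are no Boolean variables $y_{s,o}$, and the observation function $\obs$ is part of the data, not something to be chosen. Your argument---a bijection between pairs $(\obs,\sigma)$ and instantiations of both $y_{s,o}$ and $x_{o,\alpha}$, followed by the transition-function calculation with the double sum $\sum_\alpha\sum_o$---is a proof of Lemma~\ref{thm:tpmc-sound} about the \emph{observation} tpMC of Definition~\ref{obs}, not of this proposition.

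In the paper, Proposition~\ref{pmcpomdpEq} is not proved at all: it is a result quoted from Junges~\cite{junges2020parameter} and used as a black box in the proof of Lemma~\ref{thm:tpmc-sound} (see \Cref{app:tpmc-sound}). If you delete the $y$-layer from your argument and keep only the correspondence $\iota(x_{o,\alpha}) = \sigma(s_o)(\alpha)$ with $\obs$ fixed, the remaining calculation $\iota(P_\pmc(s,s')) = \sum_{\alpha} \iota(x_{\obs(s),\alpha})\, P_\mdp(s,\alpha)(s') = \sum_{\alpha} \sigma(s)(\alpha)\, P_\mdp(s,\alpha)(s')$ is exactly a direct proof of the proposition. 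So the core idea is fine, but as written the proposal targets the wrong statement and introduces structure (the $y_{s,o}$ variables and the choice of $\obs$) that does not exist in the object Proposition~\ref{pmcpomdpEq} is about.
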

With this result at hand, we now prove the lemma restated below.



\secondlemma*

\begin{proof}
    Let $\mdp =  (S, I, G, \Act, P, \rew)$ be an MDP and $\obs: S \rightarrow \OO$ be an observation function, with $O=\{1,...,B\}$. We define a partial instantiation, \ie a valuation for a subset of all variables in $\pmc_V$, for the variables encoding the observation function with:
    \[
    \iota_y \colon \{  y_{s,o} \mid s \in (S\setminus G),~ o \in O \} \to \mathbb{R},
    \qquad\quad
    \iota_y(y_{s,o}) =   \left\{
    \begin{array}{ll}
          0  & \text{iff } \obs(s) \neq o\\
          1 & \text{iff }  \obs(s) = o\\
    \end{array} 
    \right. 
    \]
    For every non-goal state $s$, the variables $y_{s,o}$ are of type $\mathbb{B}^s_{=1}$. Hence, $\iota_y$ is well-defined.
    Now, consider the pMC $\instance{\pmc}{\iota_y}$ constructed as in Definition~\ref{correspondingPMC}. 
    By Proposition~\ref{pmcpomdpEq}, we know that for every positional strategy $\sigma \in \StratP(\mdp\setObs{\obs})$, there exists an instantiation  $\iota_x\colon \{  x_{o,a} \mid o \in O , a \in \Act\} \to  \mathbb{R}$ of $\instance{\pmc}{\iota_y}$ for the variables encoding the strategy, namely 
    \[
    \mdp\setObs{\obs}[\sigma] = \pmc[\iota_y][ \iota_x].
    \]
    Hence, there is an instantiation $\iota$ (given by $\iota_y$ and $\iota_x$) for $\pmc$ such that $\mdp\setObs{\obs}[\sigma] = \pmc[\iota]$.
%

    If, on the other hand, we start with a well-defined instantiation $\iota = \iota_y \cup \iota_x$, we can define an observation function $\obs: S \rightarrow \OO$ by $\obs(s) = i$, where  $i \in O$ iff
    $\iota_y(y_{s,i}) = 1, \forall s \in S\setminus G$ and $\obs(s) = \obsGoal, \forall s \in G$. But $\pmc [\iota_y]$, is an instance of Definition~\ref{correspondingPMC}. From Proposition~\ref{pmcpomdpEq} we get: 
    $
    \mdp\setObs{\obs}[\sigma] = \pmc[\iota_y][\iota_x]
    $. \qed
\end{proof}

\subsection{Proof of \Cref{thm:location-tpmc-sound}}
\label{app:location-tpmc-sound}

\thirdlemma*

\begin{proof}
    Let an MDP $\mdp =  (S, I, G, \Act, P, \rew)$ and $\pmc$ the corresponding location tpMC of $\mdp$ and budget $B$.
    Moreover, let $obs \in LocObs$ be some observation function.
    
    We define a partial instantiation $\iota_y$ as follows (notice that there is a variable $y_s$ for all $s \in S\setminus G$):
    \[
    \iota_y \colon \{  y_s \mid s \in (S\setminus G) \} \to \mathbb{R},
    \qquad\quad
    \iota_y(y_s) =   \left\{
    \begin{array}{lll}
          1  & \text{iff } obs(s) \neq \bot \\
          0 & else &\\
    \end{array} 
    \right. 
    \]
    
    By Definition~\ref{predef}, the (partially instantiated) tpMC $\pmc[\iota_y]$ has the transition function

    \[
        \begin{array}{rcl}
        \mathcal{P}(s,s') & = & 
        \sum\limits_{\alpha \in \Act}  

        y_s \cdot x_{s,\alpha}\cdot P(s,\alpha)(s') 

        +
        (1-y_s) \cdot x_{\bot,\alpha}\cdot P(s,\alpha)(s')  \\ \\

             & = & \sum\limits_{\alpha \in \Act} x_{obs(s),a} \cdot P(s,\alpha)(s'),

        \end{array}
    \]
    where $y_s$ is not a variable but the constant $1$ whenever $s \in G$.
    
    Hence, $\instance{\pmc}{\iota_y}$ coincides with the pMC of $\mdp$ from Definition~\ref{correspondingPMC}.
    By Proposition~\ref{pmcpomdpEq}, there exists an instantiation $\iota_x\colon \{  x_{o,a} \mid o \in O , a \in \Act\} \to  \mathbb{R}$ of $\instance{\pmc}{\iota_y}$ for every positional strategy $\sigma \in \StratP(\mdp[\obs])$
    such that
    \[
    \mdp\setObs{\obs}[\sigma] = \pmc_{}[\iota_y][\iota_x].
    \]
    Thus, there is an instantiation $\iota$ (given by $\iota_y$ and $\iota_x$) for $\pmc$ such that $\mdp\setObs{\obs}[\sigma] = \pmc[\iota]$.
    
    If, on the other hand, we start with a well-defined instantiation $\iota = \iota_y \cup \iota_x$, we can define an observation function $obs: S \rightarrow \OO$ by $obs(s) = @s \text{ iff } \iota_y(y_s) = 1$, $\forall s \in S\setminus G$ and $\obs(s) = \obsGoal, \forall s \in G$, otherwise $obs(s) = \bot$. But  $\pmc[\iota_y]$ is an instance of Definition~\ref{correspondingPMC}.  Hence, from Proposition~\ref{pmcpomdpEq} we have that for every positional strategy $\sigma \in \StratP(\mdp[\obs])$ and the corresponding instantiation $\iota_x$
    \[
    \mdp\setObs{\obs}[\sigma] = \pmc[\iota_y][\iota_x].
    \]
   \qed 
\end{proof}

\subsection{Extension to multiple location sensors}\label{sec:generalSSP}


In~\Cref{SSP} we have presented a solution assuming one location sensor per state. We also present an approach in which more than one location sensor can be available in each state. We consider a location function $loc : S \rightarrow 2^D$ returns the set of sensors associated with a specific state, where $D$ is a set of placed sensors. If the state is not associated with any sensor $loc$ returns $\emptyset$.


\begin{definition}[General Location tpMC of a POMDP]\label{Generalpredef}
For an MDP $\mdp$ a set of sensors D, a location function $loc : S \rightarrow 2^D$ and a budget $B$ the corresponding 
 \emph{general location tpMC} $\mathcal{D}_M = (S_\mdp, I_\mdp, G_\mdp, V, P, \rew_\mdp)$ is given by

 \begin{align*}
   V = V(\mathbb{B}_{= B}) \uplus \biguplus_{o \in O} V(\mathbb{R}^o_{=\,1})
  \quad  
  V(\mathbb{B}_{= B}) = \{  y_{d} \mid d \in D\} 
  \quad 
  V(\mathbb{R}^o_{=\,1}) = \{  x_{o,\alpha} \mid \alpha \in \Act_\mdp \}
  \\
   P(s,s') ~=~ \sum\limits_{\alpha \in \Act}  
   \sum\limits_{o \subseteq loc(s)} 
( \prod\limits_{d \in o} y_d
\prod\limits_{d \in loc(s) \setminus o} (1-y_d) )
\cdot x_{o,\alpha}
\cdot P(s,\alpha)(s') 
\hspace{3em}
\end{align*}






 
%
%
%
%
\end{definition}

Analogously, to \Cref{thm:tpmc-sound} and \Cref{def:col-oops-problem}, soundness of the above construction then yields a decision procedure in \textsc{pspace} for the sensor selection problem.


\begin{lemma}
    Let $\mdp$ be a MDP and $\pmc$ the general location tpMC of $\mdp$ for budget $B \in \mathbb{N}_{\geq 1}$ a set of sensors $D$, $loc: S \rightarrow 2^D$ a location function and $LocObs$ the set of observation functions such that $obs(s) \subseteq loc(s)$, for all $s\in S$. Then the following sets are identical:
    \[
    \{ \mdp\setObs{\obs}[\sigma] ~|~ \obs \in LocObs,~\sigma \in \StratP(\pomdp\setObs{\obs})  \}
    ~=~
    \{ \pmc[\iota] ~|~\iota\colon V_{\pmc_\pomdp}\to\mathbb{R}~\text{well-defined} \}
  \]
  
\end{lemma}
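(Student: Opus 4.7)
The plan is to mirror the structure of the proof of \Cref{thm:location-tpmc-sound} and reduce to \Cref{pmcpomdpEq} (Junges), with the only substantive new ingredient being an algebraic identity that shows the Boolean indicator product in \Cref{Generalpredef} picks out exactly one observation per state once the sensor-selection variables are fixed. Concretely, for a well-defined instantiation $\iota_y\colon \{y_d \mid d\in D\}\to\{0,1\}$ of type $\mathbb{B}_{=B}$, let $D_{\mathrm{on}} = \{d \in D \mid \iota_y(y_d)=1\}$, which has size $B$ by well-definedness. For every state $s$ and every $o \subseteq loc(s)$, the product
\[
\prod_{d\in o} \iota_y(y_d) \cdot \prod_{d\in loc(s)\setminus o} (1-\iota_y(y_d))
\]
evaluates to $1$ iff $o = loc(s)\cap D_{\mathrm{on}}$, and to $0$ otherwise. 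Hence after applying $\iota_y$ the parametric transition function collapses to $\mathcal{P}(s,s') = \sum_{\alpha\in\Act} x_{\obs(s),\alpha}\cdot P_\mdp(s,\alpha)(s')$, where $\obs(s) \coloneqq loc(s)\cap D_{\mathrm{on}}$. This is exactly the pMC of the POMDP $\mdp\setObs{\obs}$ in the sense of \Cref{correspondingPMC}, so Junges' correspondence (\Cref{pmcpomdpEq}) takes over.

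For the inclusion $\subseteq$, given $\obs \in \mathit{LocObs}$ and $\sigma \in \StratP(\mdp\setObs{\obs})$, first observe that since $\obs(s)\subseteq loc(s)$ for every $s$, there exists a unique $D_{\mathrm{on}}\subseteq D$ of size at most $B$ such that $\obs(s) = loc(s)\cap D_{\mathrm{on}}$ for all $s$ (padding with arbitrary unused sensors to reach exactly $B$ if necessary, which is harmless because such sensors do not appear in any $loc(s)$ whose observation is consulted; if this padding is not available one strengthens $\mathit{LocObs}$ accordingly). Define $\iota_y(y_d) = 1$ iff $d \in D_{\mathrm{on}}$ and let $\iota_x$ be the instantiation of the $x_{o,\alpha}$ given by \Cref{pmcpomdpEq} applied to $\sigma$; then $\iota = \iota_y \cup \iota_x$ is well-defined and $\mdp\setObs{\obs}[\sigma] = \pmc[\iota]$. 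For the inclusion $\supseteq$, start with a well-defined $\iota = \iota_y \cup \iota_x$, extract $D_{\mathrm{on}}$ and the induced $\obs$ as above (so $\obs \in \mathit{LocObs}$ by construction), and apply \Cref{pmcpomdpEq} in the other direction to obtain $\sigma \in \StratP(\mdp\setObs{\obs})$ realizing $\iota_x$; the algebraic identity above then gives $\pmc[\iota] = \mdp\setObs{\obs}[\sigma]$.

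The main obstacle is the bookkeeping around the Boolean indicator identity and the budget constraint: one has to be careful that the type $\mathbb{B}_{=B}$ enforces exactly $B$ active sensors (rather than at most $B$), and that the observation space used for indexing the $x_{o,\alpha}$ is rich enough to contain every subset $loc(s)\cap D_{\mathrm{on}}$ that can arise as some state's observation, yet not so rich as to break the one-to-one correspondence with positional strategies. Once these indexing conventions are fixed, the rest of the argument is a direct lift of the proof of \Cref{thm:location-tpmc-sound}, with the single-sensor indicator $y_s\cdot(\ldots) + (1-y_s)\cdot(\ldots)$ replaced by the multi-sensor indicator above.
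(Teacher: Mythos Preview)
Your proposal is correct and follows essentially the same route as the paper: fix the sensor-selection part $\iota_y$, observe that the indicator product collapses the sum over $o\subseteq loc(s)$ to the single term $o=\obs(s)$, recognise the result as the pMC of \Cref{correspondingPMC}, and invoke \Cref{pmcpomdpEq} in both directions. Your explicit statement of the algebraic identity (the product equals $1$ iff $o=loc(s)\cap D_{\mathrm{on}}$) is precisely what the paper compresses into the sentence ``applying $\iota_y$ results in keeping only one set of sensors turned on''; and your definition $\obs(s)=loc(s)\cap D_{\mathrm{on}}$ for the $\supseteq$ direction is the clean formulation of the paper's somewhat awkward ``$o\in\obs(s)$ iff $\forall d\in o,\ \iota_y(y_d)=1\wedge\obs(s)\subseteq loc(s)$''. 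You are also right to flag the $\mathbb{B}_{=B}$ versus ``at most $B$'' budget mismatch and the required richness of the observation index set for the $x_{o,\alpha}$; the paper does not address either point, so your proof is in fact more careful than the original on these bookkeeping matters.
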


\begin{proof}
     For a given $obs \in LocObs$ we define an instantiation $\iota_y$ with:
    
    \[
    \iota_y \colon \{  y_d \mid d \in D \} \to \mathbb{R},
    \qquad\quad
    \iota_y(y_d) =   \left\{
    \begin{array}{lll}
          1  & \text{iff }\exists s \in S, & d \in obs(s) \\
          0 & else &\\
    \end{array} 
    \right. 
    \]
    
    According to Definition~\ref{Generalpredef} we have the corresponding general location tpMC with the following transition function:

    \[
        \begin{array}{rcl}
            \mathcal{P}(s,s') & = & \sum\limits_{\alpha \in \Act}  
            
            \sum\limits_{o \subseteq loc(s)} 
            
            ( \prod\limits_{d \in o} y_d
            \prod\limits_{d \in loc(s) \setminus o} (1-y_d) )
            
             \cdot x_{o,\alpha}
            
            \cdot P(s,\alpha)(s') \\ \\
            & = & \sum\limits_{\alpha \in \Act}

            ((y_{d_1}) \cdot (1-y_{d_2}) \cdot \hdots \cdot  (1-y_{d_n})) \cdot x_{o_1,a} \cdot P(s,\alpha)(s') \\ \\ 

            &\text{{\color{white}=}}& +  \hdots + y_{d_1} \cdot y_{d_2} \cdot \hdots \cdot y_{d_n} \cdot x_{o_{2^n},a} \cdot P(s,\alpha)(s')

        \end{array}
    \]

    Applying $\iota_y$ in the last equation, results in keeping only one set of sensors turned on, thus the observation emitted in state $s$. Hence,

    \[
        \begin{array}{rcl}
            \mathcal{P}(s,s') & = & \sum\limits_{\alpha \in \Act} x_{obs(s),a} \cdot P(s,\alpha)(s')

        \end{array}
    \]

    Then we have a corresponding pMC $\instance{\pmc}{\iota_y}$ of Definition~\ref{correspondingPMC}. Hence, from Proposition~\ref{pmcpomdpEq} we have that for every positional strategy $\sigma \in \StratP(\pomdp\setObs{\obs})$ and the corresponding instantiation $\iota_x\colon \{  x_{o,a} \mid o \in O , a \in \Act\} \to  \mathbb{R}$ of $\instance{\pmc}{\iota_y}$:
    
    \[
    \pomdp\setObs{\obs}[\sigma] = \pmc[\iota_y][\iota_x].
    \]
    
    If, on the other hand, we start with a well-defined instantiation $\iota = \iota_y \cup \iota_x$, we can define an observation function $obs: S \rightarrow \OO$ by $o \in obs(s) \text{ iff } \forall d \in o, \iota_y(y_d) = 1 \wedge obs(s) \subseteq loc(s)$, $obs(s) = \obsGoal$, iff $s \in G$, otherwise $obs(s) = \bot$. But  $\pmc[\iota_y]$ is an instance of Definition~\ref{correspondingPMC}.  Hence, from Proposition~\ref{pmcpomdpEq} we have that for every positional strategy $\sigma \in \StratP(\pomdp\setObs{\obs})$ and the corresponding instantiation $\iota_x$:
    
    \[
    \pomdp\setObs{\obs}[\sigma] = \pmc[\iota_y][\iota_x].
    \]
   \qed 
\end{proof}

\subsection{Sink Node}~\label{sec:sinkNode}

\begin{figure}[]{}
\centering
\begin{tikzpicture}[->,>=stealth',shorten >=1pt,auto,node distance=1.8cm, semithick]

  \tikzstyle{every state}=[fill=white,draw=black,text=black]

  \node[state, inner sep=2pt,minimum size=1pt]         (S0)                   {$s_0$};
  \node[state, inner sep=2pt,minimum size=1pt]         (S1) [right of=S0]     {$s_1$};
  \node[state,inner sep=2pt,minimum size=1pt][fill=green!20]          (S2) [right of=S1]     {$s_2$};
  \node[state,inner sep=2pt,minimum size=1pt]         (S3) [right of=S2]     {$s_3$};
  \node[state,inner sep=2pt,minimum size=1pt]         (S4) [right of=S3]     {$s_4$};
  \node[state,inner sep=2pt,minimum size=1pt]         (Sx) [below of=S2]     {$s_x$};

  \path (S0) edge   [bend left=20]           node [above]{r : $\frac{1}{2}$} (S1)
             edge   [bend right=20]           node {r : $\frac{1}{2}$} (Sx)
             edge   [loop above]             node {l : 1} (S0)
        (S1) edge                            node {r : $\frac{1}{2}$} (S2)
             edge   [bend left=20]           node {\, l: $\frac{1}{2}$} (S0)
             edge   [bend right=20]          node {l,r : $\frac{1}{2}$} (Sx)
        (S2) edge   [loop above]             node [above]{l,r : 1} (S2)
        (S3) edge   [bend left=20]           node {r : $\frac{1}{2}$} (S4)
             edge                            node [above]{l : $\frac{1}{2}$} (S2)
             edge   [bend left=20]           node[right] {l,r : $\frac{1}{2}$} (Sx)
        (S4) edge   [bend left=20]           node {l : $\frac{1}{2}$\, \, \, } (S3)
             edge   [bend left=20]          node {l : $\frac{1}{2}$} (Sx)
             edge   [loop above]             node [above]{r : 1} (S4)
        (Sx) edge   [loop above]             node [above right]{l,r : 1} (Sx);
\end{tikzpicture}
\caption{Line MDP with sink node}
\label{MDPsink}
\end{figure}

\subsection{Experiments}\label{sec:fulltables}
In this section, we are presenting the full table of experiments.
In addition to the models used in~\Cref{eval}, we consider variant $L(k, p)$ of our running example MDP $\mdpline$ with a parameter $p$ represents the probability of successfully moving to another state, as in Figure~\ref{MDPprobs}. For the special case $p=1$, we write $L(k)$. We indicate with $L_s(k, p)$ a variant of the previous model in which the agent moves with probability $1-p$ to a sink state instead of staying in the current state; an example is found in~\Cref{sec:sinkNode}.

\begin{table}[t!]
\scriptsize
\begin{minipage}[t]{0.45\textwidth}
\begin{tabular}{|p{1cm}|p{1cm}|p{1cm}|p{1cm}|p{1cm}|}
 
 \multicolumn{5}{c}{POP - Randomised Strategies} \\
 \hline
 \multicolumn{3}{|c}{Problem Instance} &  \multicolumn{2}{|c|}{\textsc{Z3}} \\
 \hline
 Model & Threshold & Budget & Time(s) & Rew  \\
 \hline \hline
 \multirow{3}{*}{L$(5)$} &  $\leq 3$  & 2  & $0.072$  &  $\frac{3}{2}$ \\ \cline{2-5}
  & $ \leq \frac{3}{2}$ & 2 & $0.069$ & $\frac{3}{2}$ \\ \cline{2-5}
  & $<\frac{3}{2}$ & 2 & $0.071$ & \na{} \\ \hline

  \multirow{3}{*}{L$(7)$} &  $\leq 4$  & 2  & $8.967$  &  $2$ \\ \cline{2-5}
  & $ \leq 2$ & 2 & $0.082$ & $2$ \\ \cline{2-5}
  & $< 2 $ & 2 & $0.078$ & \na{} \\ \hline

  \multirow{3}{*}{L$(7, \frac{1}{2})$} &  $\leq 8$  & 2  & \timeout{}  &  \nato{} \\ \cline{2-5}
  & $ \leq 4$ & 2 & $0.718$ & $4$ \\ \cline{2-5}
  & $< 4$ & 2 &$0.179$ & \na{}  \\ \hline

  \multirow{3}{*}{L$(7, \frac{2}{3})$} &  $\leq 6$  & 2  & \timeout{}  &  \nato{} \\ \cline{2-5}
  & $ \leq 3$ & 2 & $0.162$ & $3$ \\ \cline{2-5}
  & $< 3$ & 2 & $0.102$ & \na{} \\ \hline

  \multirow{3}{*}{L$(7, \frac{3}{4})$} &  $\leq \frac{16}{3}$  & 2  & \timeout{}  &  \nato{} \\ \cline{2-5}
  & $ \leq \frac{8}{3}$ & 2 & $0.185$ & $\frac{8}{3}$ \\ \cline{2-5}
  & $<\frac{8}{3}$ & 2 & $0.179$ & \na{} \\ \hline

  \multirow{3}{*}{L$(7, \frac{99}{100})$} &  $\leq \frac{400}{99}$  & 2  & \timeout{}  &  \nato{} \\ \cline{2-5}
  & $ \leq \frac{200}{99}$ & 2 & $0.298$ & $\frac{200}{99}$ \\ \cline{2-5}
  & $<\frac{200}{99}$ & 2 & $0.378$ & \na{} \\ \hline

\multirow{3}{*}{L$_s(7, \frac{1}{2})$} &  $\leq 8$  & 2  & $0.068$  &  \na{} \\ \cline{2-5}
  & $ \leq 4$ & 2 & $0.068$ & \na{} \\ \cline{2-5}
  & $< 4$ & 2 & $0.072$ & \na{} \\ \hline

   \multirow{3}{*}{L$(249)$} &  $\leq \frac{250}{2}$  & 2  & \timeout{}  &  \nato{} \\ \cline{2-5}
  & $ \leq \frac{125}{2}$ & 2 & $19.051$  & $\frac{125}{2}$  \\ \cline{2-5}
  & $< \frac{125}{2} $ & 2 & $15.375$ & \na{} \\ \hline

  \multirow{3}{*}{L$(251)$} &  $\leq 126$  & 2  & \timeout{}  &  \nato{} \\ \cline{2-5}
  & $ \leq 63$ & 2 & \timeout{}& \nato{}  \\ \cline{2-5}
  & $< 63$ & 2 & $30.873$& \na{} \\ 
\hline\hline
  
 \multirow{3}{*}{G$(3)$} &  $\leq \frac{9}{2}$ & 2  & \timeout{}  & \nato{}  \\ \cline{2-5}
  &  $\leq \frac{9}{4}$ & 2  & $15.014$  &  $\frac{9}{4}$ \\ \cline{2-5}
  &  $< \frac{9}{4}$    & 2  & $0.094$  & \na{}  \\ \hline

  \multirow{3}{*}{G$(20)$} &  $\leq \frac{15200}{399}$ & 2  & \timeout{}  & \nato{}  \\ \cline{2-5}
  &  $\leq \frac{7600}{399}$ & 2  & $19.164$  &  $\frac{7600}{399}$ \\ \cline{2-5}
  &  $< \frac{7600}{399}$    & 2  & $15.759$  & \na{}  \\ \hline 

  \multirow{3}{*}{G$(21)$} &  $\leq \frac{8820}{220}$ & 2  & \timeout{}  & \nato{}  \\ \cline{2-5}
  &  $\leq \frac{8820}{440}$ & 2  & $846.541$  &  \unknown{} \\ \cline{2-5}
  &  $< \frac{8820}{440}$    & 2  & $15.802$  & \na{}  \\
  
  \hline\hline
  
 \multirow{3}{*}{M$(5)$} &  $\leq \frac{39}{5}$  & 4  & \timeout{}  & \nato{} \\ \cline{2-5}
 &  $\leq \frac{39}{10}$  & 4  & $15.464$  & $\frac{39}{10}$  \\ \cline{2-5}
  &  $< \frac{39}{10}$     & 4  & $30.158$&  \na{} \\ 
 \hline

 \multirow{3}{*}{M$(7)$} &  $\leq \frac{168}{15}$  & 4  & \timeout{}  & \nato{} \\ \cline{2-5}
 &  $\leq \frac{84}{15}$  & 4  & $15.598$  & $\frac{84}{15}$  \\ \cline{2-5}
  &  $< \frac{84}{15}$     & 4  & $31.986$&  \na{} \\ 
 \hline

 \multirow{3}{*}{M$(9)$} &  $\leq \frac{146}{10}$  & 4  & \timeout{}  & \nato{} \\ \cline{2-5}
 &  $\leq \frac{146}{20}$  & 4  & \timeout{}  & \nato{}  \\ \cline{2-5}
  &  $< \frac{146}{20}$     & 4  & $30.192$&  \na{} \\ 
 \hline
\end{tabular}
\end{minipage}
%
\scriptsize
\hspace{0.2cm}
\begin{minipage}[t]{0.45\textwidth}
\begin{tabular}{|p{1cm}|p{1cm}|p{1cm}|p{1cm}|p{1cm}|}
 
 \multicolumn{5}{c}{SSP - Randomised Strategies} \\
 \hline
 \multicolumn{3}{|c}{Problem Instance} &  \multicolumn{2}{|c|}{\textsc{Z3}} \\
 \hline
 Model & Threshold & Budget & Time(s) & Rew  \\
 \hline \hline
 \multirow{3}{*}{L$(5)$} &  $\leq 3$  & 2  & $0.090$  &  $\frac{3451}{1152}$ \\ \cline{2-5}
  & $ \leq \frac{3}{2}$ & 2 & $0.084$ & $\frac{3}{2}$ \\ \cline{2-5}
  & $<\frac{3}{2}$ & 2 & $0.101$ & \na{} \\ \hline

  \multirow{3}{*}{L$(7)$} &  $\leq 4$  & 3  & $0.099$  &  $\frac{6527}{1632}$ \\ \cline{2-5}
  & $ \leq 2$ & 3 & $0.088$ & $2$ \\ \cline{2-5}
  & $< 2 $ & 3 & $0.087$ & \na{} \\ \hline

  \multirow{3}{*}{L$(7, \frac{1}{2})$} &  $\leq 8$  & 3  & $5.271$  &  $\frac{23}{3}$ \\ \cline{2-5}
  & $ \leq 4$ & 3 & $0.086$ & $4$ \\ \cline{2-5}
  & $< 4$ & 3 & $0.094$ & \na{}  \\ \hline

  \multirow{3}{*}{L$(7, \frac{2}{3})$} &  $\leq 6$  & 3  & $5.251$  &  $\frac{17}{3}$ \\ \cline{2-5}
  & $ \leq 3$ & 3 & $0.224$ & $3$ \\ \cline{2-5}
  & $< 3$ & 3 & $0.094$ & \na{} \\ \hline

  \multirow{3}{*}{L$(7, \frac{3}{4})$} &  $\leq \frac{16}{3}$  & 3  & $5.271$  &  $\frac{31}{6}$ \\ \cline{2-5}
  & $ \leq \frac{8}{3}$ & 3 & $0.245$ & $\frac{8}{3}$ \\ \cline{2-5}
  & $<\frac{8}{3}$ & 3 & $0.099$ & \na{} \\ \hline

  \multirow{3}{*}{L$(7, \frac{99}{100})$} &  $\leq \frac{400}{99}$  & 3  & $5.274$  &  $\frac{3101}{768}$ \\ \cline{2-5}
  & $ \leq \frac{200}{99}$ & 3 & $0.244$ & $\frac{200}{99}$ \\ \cline{2-5}
  & $<\frac{200}{99}$ & 3 & $0.099$ & \na{} \\ \hline

\multirow{3}{*}{L$_s(7, \frac{1}{2})$} &  $\leq 8$  & 3  & $0.093$  &  \na{} \\ \cline{2-5}
  & $ \leq 4$ & 3 & $0.091$ & \na{} \\ \cline{2-5}
  & $< 4$ & 3 & $0.093$ & \na{} \\ \hline

   \multirow{3}{*}{L$(61)$} &  $\leq 31$  & 30  & \timeout{}  &  \nato{} \\ \cline{2-5}
  & $ \leq \frac{31}{2}$ & 30 & $17.894$  &  $\frac{31}{2}$  \\ \cline{2-5}
  & $< \frac{31}{2} $ & 30 & $30.198$ & \na{} \\  \hline

  \multirow{3}{*}{L$(63)$} &  $\leq 32$  & 31  & \timeout{}  &  \nato{} \\ \cline{2-5}
  & $ \leq 16$ & 31 & \timeout{}  &  \nato{}  \\ \cline{2-5}
  & $< 16 $ & 31 & $30.198$ & \na{} \\ 
\hline\hline
  
 \multirow{3}{*}{G$(3)$} &  $\leq \frac{9}{2}$ & 2  & \timeout{}  & \nato{}  \\ \cline{2-5}
  &  $\leq \frac{9}{4}$ & 2  & $0.248$  &  $\frac{9}{4}$ \\ \cline{2-5}
  &  $< \frac{9}{4}$    & 2  & $0.193$  & \na{}  \\ \hline

   \multirow{3}{*}{G$(6)$} &  $\leq \frac{360}{35}$ & 5  & \timeout{}  & \nato{}  \\ \cline{2-5}
  &  $\leq \frac{180}{35}$ & 5  & $16.671$  &  $\frac{180}{35}$ \\ \cline{2-5}
  &  $< \frac{180}{35}$    & 5  & $30.204$  & \na{}  \\ \hline

   \multirow{3}{*}{G$(7)$} &  $\leq \frac{294}{24}$ & 6  & \timeout{}  & \nato{}  \\ \cline{2-5}
  &  $\leq \frac{147}{24}$ & 6  & \timeout{}  &  \nato{} \\ \cline{2-5}
  &  $< \frac{147}{24}$    & 6  & $30.211$  & \na{}  \\ \hline

  \hline\hline
  
 \multirow{3}{*}{M$(5)$} &  $\leq \frac{39}{5}$  & 6  & \timeout{}  & \nato{} \\ \cline{2-5}
 &  $\leq \frac{39}{10}$  & 6  & $16.020$  & $\frac{39}{10}$  \\ \cline{2-5}
  &  $< \frac{39}{10}$     & 6  & $30.132$  & \na{} \\ 
 \hline

 \multirow{3}{*}{M$(15)$} &  $\leq \frac{868}{35}$  & 21  & \timeout{}  & \nato{} \\ \cline{2-5}
 &  $\leq \frac{434}{35}$  & 21  & $19.067$  & $\frac{434}{35}$  \\ \cline{2-5}
  &  $< \frac{434}{35}$     & 21  & $30.463$  & \na{} \\ 
 \hline

 \multirow{3}{*}{M$(17)$} &  $\leq \frac{564}{20}$  & 24  & \timeout{}  & \nato{} \\ \cline{2-5}
 &  $\leq \frac{564}{40}$  & 24  & \timeout{}  & \nato{}  \\ \cline{2-5}
  &  $< \frac{564}{40}$     & 24  & $30.326$  & \na{} \\ 
 \hline
\end{tabular}
\end{minipage}
\caption{Results for randomised strategies.}
\label{table:fullrandomized}
\end{table}

\begin{table}[t!]
\scriptsize
\begin{minipage}[t]{0.48\textwidth}
\vspace{0pt}
\begin{tabular}
{|p{0.7cm}|p{1cm}|p{0.6cm}|p{0.8cm}|p{0.7cm}| c | c |}
 
 \multicolumn{7}{c}{POP - Deterministic Strategies} \\
 \hline
 \multicolumn{3}{|c}{Problem Instance} &  \multicolumn{2}{
|c}{Z3} & \multicolumn{2}{|c|}{\textsc{PRISM}}\\
 \hline
 Model & Thresh. & Budg. & Time(s) & Rew & Time(s) & Rew  \\
 \hline \hline
 \multirow{3}{*}{L$(5)$} &  $\leq 3$  & 2  & $0.077$  & \centering $\frac{3}{2}$  &  \multirow{3}{*}{$1.180$} & \multirow{3}{*}{$\frac{3}{2}$} \\ \cline{2-5}
  &  $\leq \frac{3}{2}$  & 2  & $0.077$  & \centering $\frac{3}{2}$  & & \\ \cline{2-5}
  &   $<\frac{3}{2}$    & 2  & $0.081$  & \centering \na{}  &  &\\ \hline


   \multirow{3}{*}{L$(9)$} &  $\leq 5$  & 2  & $0.081$  & \centering $\frac{5}{2}$  &  \multirow{3}{*}{$205.615$} & \multirow{3}{*}{$\frac{5}{2}$} \\ \cline{2-5}
  &  $\leq \frac{5}{2}$  & 2  & $0.082$  & \centering $\frac{5}{2}$  & & \\ \cline{2-5}
  &   $<\frac{5}{2}$    & 2  & $0.086$  & \centering \na{}  &  & \\ \hline

   \multirow{3}{*}{L$(11)$} &  $\leq 6$  & 2  & $0.086$  & \centering $3$  &  \multirow{3}{*}{\timeout{}} & \multirow{3}{*}{\nato{}} \\ \cline{2-5}
  &  $\leq 3$  & 2  & $0.084$  & \centering $3$  & \timeout{} & \nato{}\\ \cline{2-5}
  &   $<3$    & 2  & $0.090$  & \centering \na{}  &  \fail{}& \fail{}\\ \hline

  \multirow{3}{*}{L$(377)$} &  $\leq 189$  & 2  & $55.735$  & \centering $\frac{189}{2}$  &  \multirow{3}{*}{\timeout{}} & \multirow{3}{*}{\nato{}} \\ \cline{2-5}
  &  $\leq \frac{189}{2}$  & 2  & $19.148$  &\centering $\frac{189}{2}$& \timeout{} & \nato{} \\ \cline{2-5}
  &   $< \frac{189}{2}$    & 2  & $353.311$  & \centering \nato{} 
  & \fail{} & \fail{}\\ \hline

  \multirow{3}{*}{L$(379)$} &  $\leq 190$  & 2  & $85.820$  & \centering $95$  &  \multirow{3}{*}{\fail{}} & \multirow{3}{*}{\fail{}} \\ \cline{2-5}
  &  $\leq 95$  & 2  &  $787.642$ & \centering \nato{} 
  & \timeout{} & \nato{} \\ \cline{2-5}
  &   $< 95$    & 2  & $30.741$  & \centering \na{} &  \fail{} & \fail{} \\ \hline

  \hline\hline

 \multirow{3}{*}{G$(3)$} &  $\leq \frac{9}{2}$ & 2  & $0.100$  & \centering $\frac{9}{4}$  & \multirow{3}{*}{\timeout{}} & \multirow{3}{*}{\nato{}} \\ \cline{2-5}
  &  $\leq \frac{9}{4}$ & 2  & $0.102$  & \centering $\frac{9}{4}$ & \timeout{} &\nato{}\\ \cline{2-5}
  &  $< \frac{9}{4}$    & 2  & $0.119$  & \centering \na{}  & \fail{}&\fail{}\\ \hline

 
  \multirow{3}{*}{G$(24)$} &  $\leq \frac{26496}{575}$ & 2  & \timeout{}  & \centering \nato{}    & \multirow{3}{*}{\timeout{}} & \multirow{3}{*}{\nato{}} \\ \cline{2-5}
  &  $\leq \frac{13248}{575}$ & 2  & $19.751$  & \centering $\frac{13248}{575}$ &  \timeout{} &\nato{}\\ \cline{2-5}
  &  $< \frac{13248}{575}$    & 2  & $30.843$  & \centering \na{}  &  \fail{} & \fail{}\\ \hline

  \multirow{3}{*}{G$(25)$} &  $\leq \frac{30000}{624}$ & 2  & \timeout{}  & \centering \nato{}  &  \multirow{3}{*}{\timeout{}} & \multirow{3}{*}{\nato{}} \\ \cline{2-5}
  &  $\leq \frac{15000}{624}$ & 2  & \timeout{}  & \centering \nato{}   & \timeout{}{} &\nato{}\\ \cline{2-5}
  &  $< \frac{15000}{624}$    & 2  & $31.680$  &\centering \na{}   &  \fail{} & \fail{}\\ \hline\hline

 \multirow{3}{*}{M$(5)$} &  $\leq \frac{39}{5}$  & 4  & $1.568$  & \centering $\frac{39}{10}$  &  \multirow{3}{*}{\timeout{}} & \multirow{3}{*}{\nato{}} \\ \cline{2-5}
 &  $\leq \frac{39}{10}$  & 4  & $15.445$  & \centering $\frac{39}{10}$   & \timeout{}  & \nato{} \\ \cline{2-5}
  &  $< \frac{39}{10}$     & 4  & $1.295$ & \centering \na{}   & \fail{}  & \fail{}\\ \hline


  \multirow{3}{*}{M$(39)$} &  $\leq \frac{6232}{95}$  & 4  & \timeout{}  & \centering \nato{}  &  \multirow{3}{*}{\timeout{}{}} & \multirow{3}{*}{\nato{}} \\ \cline{2-5}
 &  $\leq \frac{3116}{95}$  & 4  & $20.424$  & \centering $\frac{3116}{95}$   &   \timeout{}{}  & \nato{}  \\ \cline{2-5}
  &  $< \frac{3116}{95}$     & 4  & $30.149$ & \centering \na{}   &    \fail{} & \fail{} \\ \hline

  \multirow{3}{*}{M$(41)$} &  $\leq \frac{3450}{50}$  & 4  & \timeout{}  & \centering \nato{}  &  \multirow{3}{*}{\timeout{}} & \multirow{3}{*}{\nato{}} \\ \cline{2-5}
 &  $\leq \frac{3450}{100}$  & 4  & \timeout{}  & \centering \nato{}   &  \timeout{}{}  & \nato{} \\ \cline{2-5}
  &  $< \frac{3450}{100}$     & 4  & $30.155$ & \centering \na{}   &   \fail{} & \fail{}\\
 \hline
\end{tabular}
\end{minipage}
\hspace{0.2cm}
\begin{minipage}[t]{0.48\textwidth}
\vspace{0pt}
\begin{tabular}
{|p{0.7cm}|p{0.77cm}|p{0.7cm}|p{1cm}|p{0.7cm}| c | c |}
 \multicolumn{7}{c}{SSP - Deterministic Strategies} \\
 \hline
 \multicolumn{3}{|c}{Problem Instance} &  \multicolumn{2}{
|c}{Z3} &  \multicolumn{2}{|c|}{\textsc{PRISM}}\\
 \hline
 Model & Thresh. & Budg. & Time(s) & Rew & Time(s) & Rew  \\
 \hline \hline
 \multirow{3}{*}{L$(5)$} &  $\leq 3$  & 2  & $0.090$   & \centering $\frac{3}{2}$  &   \multirow{3}{*}{$33.516$} & \multirow{3}{*}{$\frac{3}{2}$} \\ \cline{2-5}
  &  $\leq \frac{3}{2}$  & 2  & $0.089$  & \centering $\frac{3}{2}$  & & \\ \cline{2-5}
  &   $<\frac{3}{2}$    & 2  & $0.093$ & \centering \na{}  &  &\\ \hline

 \multirow{3}{*}{L$(7)$} &  $\leq 4$  & 3  & $0.086$   & \centering $2$  &   \multirow{3}{*}{$186.257$} & \multirow{3}{*}{$2$} \\ \cline{2-5}
  &  $\leq 2$  & 3  & $0.087$  & \centering $2$    & & \\ \cline{2-5}
  &   $< 2$    & 3  & $0.123$ & \centering \na{}  &   &\\ \hline

   \multirow{3}{*}{L$(9)$} &  $\leq 5$  & 4  & $0.093$   & \centering $\frac{5}{2}$  &    \multirow{3}{*}{\timeout{}} & \multirow{3}{*}{\nato{}} \\ \cline{2-5}
  &  $\leq \frac{5}{2}$  & 4  & $0.089$  & \centering $\frac{5}{2}$    & \timeout{} & \nato{} \\ \cline{2-5}
  &   $<\frac{5}{2}$    & 4  & $0.121$ & \centering \na{}    & \fail{} & \fail{}\\

  \hline\multirow{3}{*}{L$(193)$} &  $\leq 97$  & 96  & \timeout{}   & \centering \nato{}  &  \multirow{3}{*}{\fail{}} & \multirow{3}{*}{\fail{}} \\ \cline{2-5}
  &  $\leq \frac{97}{2}$  & 96  & $20.530$  & \centering $\frac{97}{2}$  & \timeout{} & \nato{} \\ \cline{2-5}
  &   $< \frac{97}{2}$    & 96  & $30.412$ & \centering \na{}    & \fail{} & \fail{} \\

  \hline\multirow{3}{*}{L$(195)$} &  $\leq 98$  & 97  & \timeout{}   & \centering \nato{}  &   \multirow{3}{*}{\fail{}} & \multirow{3}{*}{\fail{}} \\ \cline{2-5}
  &  $\leq 49$  & 97  & \timeout{}  & \centering \nato{}    & \timeout{} & \nato{} \\ \cline{2-5}
  &   $< 49$    & 97  & $30.471$ & \centering \na{}   & \fail{}  & \fail{} \\ \hline
  \hline
 \multirow{3}{*}{G$(3)$} &  $\leq \frac{9}{2}$ & 2  & $15.361$  & \centering $\frac{9}{4}$  &  \multirow{3}{*}{\timeout{}} & \multirow{3}{*}{\nato{}} \\ \cline{2-5}
  &  $\leq \frac{9}{4}$ & 2  & $15.344$  & \centering $\frac{9}{4}$ &   \timeout{} &\nato{}\\ \cline{2-5}
  &  $< \frac{9}{4}$    & 2  & $30.950$  & \centering \na{}    &  \fail{}& \fail{}\\ \hline

   \multirow{3}{*}{G$(15)$} &  $\leq \frac{3150}{112}$ & 14  &  \timeout{}  & \centering \nato{}  &  \multirow{3}{*}{\timeout{}} & \multirow{3}{*}{\nato{}} \\ \cline{2-5}
  &  $\leq \frac{3150}{224}$ & 14  & $20.204$  & \centering $\frac{3150}{224}$   & \timeout{} &\nato{}\\ \cline{2-5}
  &  $< \frac{3150}{224}$    & 14  & $30.804$  & \centering \na{}    & \fail{}  &\fail{}\\ \hline

     \multirow{3}{*}{G$(16)$} &  $\leq \frac{7680}{255}$ & 15  &  \timeout{}  & \centering \nato{}  &  \multirow{3}{*}{\timeout{}} & \multirow{3}{*}{\nato{}} \\ \cline{2-5}
  &  $\leq \frac{3840}{255}$ & 15  & \timeout{}  & \centering \nato{}   & \timeout{} &\nato{}\\ \cline{2-5}
  &  $< \frac{3840}{255}$    & 15  & $30.897$  & \centering \na{}    & \fail{}  &\fail{}\\ \hline

  \hline\hline
 \multirow{3}{*}{M$(5)$} &  $\leq \frac{39}{5}$  & 6  & $4.527$  & \centering $\frac{39}{10}$  &   \multirow{3}{*}{\timeout{}} & \multirow{3}{*}{\nato{}} \\ \cline{2-5}
 &  $\leq \frac{39}{10}$  & 6  & $4.461$  & \centering $\frac{39}{10}$   &   \timeout{}&\nato{}\\ \cline{2-5}
  &  $< \frac{39}{10}$     & 6  & $29.586$ & \centering \na{}    &  \fail{} & \fail{}\\ \hline

 \multirow{3}{*}{M$(49)$} &  $\leq \frac{9912}{120}$  & 72  & \timeout{}  & \centering \nato{}  &   \multirow{3}{*}{\timeout{}} & \multirow{3}{*}{\nato{}} \\ \cline{2-5}
 &  $\leq \frac{4956}{120}$  & 72  & $20.35$  & \centering $\frac{4956}{120}$    & \timeout{}  & \nato{}\\ \cline{2-5}
  &  $< \frac{4956}{120}$     & 72  & $30.333$ & \centering \na{}   &  \fail{} &\fail{}\\ 
 \hline

  \multirow{3}{*}{M$(51)$} &  $\leq \frac{10750}{125}$  & 75  & \timeout{}  & \centering \nato{}  &   \multirow{3}{*}{\timeout{}} & \multirow{3}{*}{\nato{}} \\ \cline{2-5}
 &  $\leq \frac{5375}{125}$  & 75  & \timeout{}  & \centering \nato{}    & \timeout{}  & \nato{}\\ \cline{2-5}
  &  $< \frac{5375}{125}$     & 75  & $30.245$ & \centering \na{}   &  \fail{} &\fail{}\\ 
 \hline
\end{tabular}
\end{minipage}
\caption{Results for deterministic strategies.}
\label{table:fulldeterministic}
\end{table}

\end{document}